\newtheorem{theorem}{Theorem}
\newtheorem{lemma}{Lemma}
\newtheorem{definition}{Definition}
\newtheorem{proposition}{Proposition}
\title{Creating Training Sets via Weak Indirect \\Supervision}
\author{%
Jieyu Zhang$^{1,2}$, Bohan Wang$^{1,3}$, Xiangchen Song$^4$, Yujing Wang$^1$, Yaming Yang$^1$,  Jing Bai$^1$,\\\textbf{Alexander Ratner$^{2,5}$}\\
% Jieyu Zhang$^{1,2}$ \And Yue Yu$^3$ \And  Yinghao Li$^3$ \And Yujing Wang$^1$ \And Yaming Yang$^1$ \And  Mao Yang$^1$ \And  Alexander Ratner$^2$ \\
$^1$Microsoft Research Asia\quad $^2$University of Washington\quad \\$^3$University of Science and Technology of China\quad $^4$Carnegie Mellon University\quad $^5$Snorkel AI, Inc.\\
\texttt{\small \{jieyuz2, ajratner\}@cs.washington.edu} \\
\texttt{\small \{yujwang, yayaming, jbai\}@microsoft.com} \\
\texttt{\small wbhfy@mail.ustc.edu.cn} \\
\texttt{\small xiangchensong@cmu.edu} \\
}
\begin{document}

\maketitle

\begin{abstract} 

Creating labeled training sets has become one of the major roadblocks in machine learning. To address this, recent \emph{Weak Supervision (WS)} frameworks synthesize training labels from multiple potentially noisy supervision sources. However, existing frameworks are restricted to supervision sources that share the same output space as the target task. To extend the scope of usable sources, we formulate \emph{Weak Indirect Supervision (WIS)}, a new research problem for automatically synthesizing training labels based on \emph{indirect supervision sources} that have different output label spaces. To overcome the challenge of mismatched output spaces, we develop a probabilistic modeling approach, PLRM, which uses user-provided label relations to model and leverage indirect supervision sources. Moreover, we provide a theoretically-principled test of the distinguishability of PLRM for unseen labels, along with a generalization bound. On both image and text classification tasks as well as an industrial advertising application, we demonstrate the advantages of PLRM by outperforming baselines by a margin of 2\%-9\%.

\end{abstract}

\section{Introduction}
One of the greatest bottlenecks of using modern machine learning models is the need for substantial amounts of manually-labeled training data. In real-world applications, such manual annotations are typically time-consuming, labor-intensive and static. 
To reduce the efforts of annotation, researchers have proposed Weak Supervision (WS) frameworks~\citep{Ratner16, Ratner18, Ratner19, fu2020fast} for synthesizing labels from multiple \emph{weak supervision sources}, \eg, heuristics, knowledge bases, or pre-trained classifiers.
These frameworks have been widely applied on various machine learning tasks~\citep{DUNNMON2020100019, fries2021trove, safranchik2020weakly, lison-etal-2020-named, zhou2019nero, hooper2021cut, zhan2018generating, DBLP:conf/nips/VarmaSSFFKRXFPR19} and industrial data~\citep{10.1145/3299869.3314036}.
Among them, \emph{data programming}~\citep{Ratner16}, one representative example that generalizes many approaches in the literature, represents weak supervision sources as \emph{labeling functions (LFs)} and synthesizes training labels using Probabilistic Graphical Model (PGM).

Given both the increasing popularity of WS and the general increase in open-source availability of machine learning models and tools, there is a rising tide of available supervision sources that WS frameworks and practitioners could potentially leverage, including pre-trained machine learning models or prediction APIs~\citep{Chen2020FrugalML, IJoC10423, 10.1145/3131365.3131372}.
However, existing WS frameworks only utilize weak supervision sources with the same label space as the target task.
This incompatibility largely limits the scope of usable sources, necessitating manual effort from domain experts to provide supervision for \emph{unseen} labels.
For example, consider target task of classifying \{\mquote{dog}, \mquote{wolf}, \mquote{cat}, \mquote{lion}\} and a set of three weak supervision sources (e.g. trained classifiers or expert heuristics) with disjoint output spaces \{\mquote{caninae}, \mquote{felidae}\}, \{\mquote{domestic animals}, \mquote{wild animals}\} and \{\mquote{husky}, \mquote{bengal cat}\} respectively.
We call these types of sources \emph{indirect supervision sources}.
For concreteness, we follow the general convention of \emph{data programming}~\citep{Ratner16} and refer to these sources as \emph{indirect labeling functions (ILFs)}.
Despite their apparent utility, existing weak supervision methods could not directly leverage such ILFs, as their output spaces have no overlap with the target one.

In this paper, we formulate a novel research problem that aims to leverage such ILFs automatically, minimizing the manual efforts to develop and deploy new models.
% attempt to squeeze juice out of ILFs to minimize annotation efforts.
We refer to this as the \emph{Weak Indirect Supervision (WIS)} setting, a new Weak Supervision paradigm which leverages ILFs, along with the relational structures between individual labels, to automatically create training labels.

The key difficulty of leveraging ILFs is due to the mismatched label spaces.
To overcome this, we introduce pairwise relations between individual labels to the WIS setup, which are often available in structured sources (e.g. off-the-shelf Knowledge Bases \citep{miller1995wordnet, Sinha2015AnOO, dong2020autoknow} or large scale label hierarchies \citep{Murty2017FinerGE, GeneOntology, LSHTC} for various domains), or can be provided by subject matter experts in far less time than generating entirely new sets of  weak supervision sources.
For example, in the aforementioned example, we could rely on a biological species ontology to see that the unseen labels \mquote{dog} and \mquote{cat} are both subsumed by the seen label \mquote{domestic animals}. Based on the label relations, we can automatically leverage the supervision sources as ILFs.
 Notably, previous work~\citep{qu2020few} also leveraged a label relation graph but was focused on relation extraction task in a few-shot learning setting, while \citet{you2020co} proposed to learn  label relations given data for each label in a transfer learning scenario. In contrast, we aim to solve the target task directly and without clean labeled data.

The remaining questions are (1) \emph{how to synthesize labels based on pair-wise label relations and ILFs?} and (2) \emph{How can we know whether, given a set of ILFs and label relations, the unseen labels are distinguishable or not?} To address the first question, we develop a \emph{probabilistic label relation model (\combined)}, the first PGM for WIS which aggregates the output of ILFs and models the label relations as dependencies between random variables.
In turn, we use the learned \combined to produce labels for training an end model.
Furthermore, we derive the generalization error bound of \combined based on assumptions similar to previous work~\citep{Ratner16}.

The second question presents an important stumbling block when dealing with unseen labels, as we may not be able to distinguish the unseen labels given existing label relations and ILFs, resulting in an unsatisfactory synthesized training set.
To address this issue, we formally introduce the notion of \emph{distinguishability} in WIS setting and theoretically establish an equivalence between: (1) the distinguishability of the label relation structure as well as the ILFs, and (2) the capability of \combined to distinguish unseen labels. This result then leads to a simple sanity test for preventing the model from failing to distinguish unseen labels. 
In preliminary experiments, we observe a significant drop in model performance when the condition is violated.

In experiments, we make non-trivial adaptations for baselines from related settings to the new WIS problem.
On both text and image classification tasks, we demonstrate the advantages of \combined over adapted baselines.
Finally, in a commercial advertising system where developers need to collect annotations for new ads tags, we illustrate how to formulate the training label collection as a WIS problem and apply \combined to achieve an effective performance. 

\noindent \textbf{Summary of Contributions.}
Our contributions are summarized as follows: 
\begin{itemize}[leftmargin=0.5cm]
    \item We formulate Weak Indirect Supervision (WIS), a new research problem which synthesizes training labels based on indirect supervision sources and label relations, minimizing human efforts of both data annotation and weak supervision sources construction;
    \item We develop the first model for WIS, the Probabilistic Label Relation Model (\combined) with comparable statistical efficiency to previous WS frameworks and standard supervised learning;
    \item We introduce a new notion of distinguishability in WIS setting, and provide a simple test of the distinguishability of \combined for unseen labels by theoretically establishing the connection between the label relation structures and distinguishability;
    \item We showcase the potential of the WIS formulation and the effectiveness of \combined in a commercial advertising system for synthesizing training labels of new ads tags. 
    On academic image and text classification tasks, we demonstrate the advantages of \combined over baselines by quantitative experiments.
    Overall, \combined outperforms baselines by a margin of 2\%-9\%.
\end{itemize}

\section{Related Work}
\label{sec:related}

\begin{table}[h]
	\centering
	\caption{
	    Comparisons between the proposed \textit{weak indirect supervision (WIS)} and related machine learning tasks.
	    Compared to normal and weakly supervised learning, WIS handles mismatched train and test label spaces.
	    WIS is similar in spirit to indirect supervision (IS) and zero-shot learning (ZSL), but distinct in that WIS only takes as input weak or noisy labels and a simple set of logical label relations, and aims to output a training data set rather than a trained model, affording complete modularity in which final model class is used.
	    % and differentiates IS/ZSL by (1) assuming noisy supervision sources instead of clean training labels and (2) leveraging simple logic label relations instead of label attribute/embedding (ZSL) or label transition matrix (IS). Moreover, the goal of WIS is to synthesize training labels for users as WS and therefore users could train any machine learning model, while IS/ZSL aim to render a trained model.
	}
	\scalebox{0.7}{
        \begin{tabular}{ccccc}
        		\toprule
\textbf{Task} & \textbf{Label Type}  & \textbf{$\mathcal{Y}_{train}=\mathcal{Y}_{test}$}& \textbf{Label Information}  & \textbf{When Label Info. is Required} \\
\midrule
\textbf{Supervised Learning (SL)}  & Clean Labels  & \checkmark & -- & --\\
\textbf{Weak Supervision (WS)}   & Noisy Sources  & \checkmark & -- & --\\
\textbf{Indirect Supervision (IS)}   &  Clean Labels &   & Label Trans. Matrix & Training\\
\textbf{Zero-Shot Learning (ZSL)}   & Clean Labels  & & Label Embed. / Attribute & Training \& Test \\\midrule
\textbf{Weak Indirect Supervision (WIS)}   & Noisy Sources & & Label Relation  & Training\\
		\bottomrule
         \end{tabular}
    }
    \label{tab:setting}
% \vspace{-7mm}
\end{table} 

We briefly review related settings. The comparison between WIS and related tasks is in Table~\ref{tab:setting}. 

\textbf{Weak Supervision:}
We draw motivation from recent work which model and integrate weak supervision sources using PGMs~\citep{Ratner16,Ratner18,Ratner19, fu2020fast} and other methods~\citep{guan2017said,khetan2018learning} to create training sets. While they assume supervision sources share the same label space as the new tasks, we aim to leverage indirect supervision sources with mismatched label spaces in a labor-free way.

\textbf{Indirect Supervision:}
Indirect supervision arises more generally in latent-variable models for various domains \citep{brown1993mathematics, liang2013learning, quattoni2004conditional, chang2010structured, zhang2019learning}.
Very recently, \citet{raghunathan2016linear} proposed to use the linear moment method for indirect supervision, wherein the \emph{transition} between desired label space $\sy$ and indirect supervision space $\mathcal{O}$ is known, as well as the ground truth of indirect supervisions for training. In contrast, both are unavailable in WIS. Theoretically, \citet{wang2020learnability} developed a unified framework for analyzing the learnability of indirect supervision with shared or superset label spaces, while we focus on \emph{disjoint} label spaces and the consequent unique challenge of \emph{ distinguishability} of unseen classes. 

\textbf{Zero-Shot Learning:}
Zero-Shot Learning (ZSL)~\citep{lampert2009learning, wang2019survey} aims to learn a classifier that is able to generalize to unseen classes.
The WIS problem differentiates from ZSL by (1) in ZSL setting, the training and test data belong to seen and unseen classes, respectively, and training data is labeled, while for WIS, both training and test data belong to unseen classes and unlabeled; 
(2) ZSL tends to render a classifier that could predict unseen classes given certain label information, \eg, label attributes~\citep{romera2015embarrassingly}, label descriptions~\citep{srivastava-etal-2018-zero} or label similarities~\citep{frome2013devise}, while WIS aims to provide training labels for unlabeled training data, allowing users to train \emph{any} machine learning models, and the label relations are used only in synthesizing training labels.

\section{Preliminary: Weak Supervision}
We first describe the Weak Supervision (WS) setting.
A glossary of notations used is in App.~\ref{appendix:glossary}.

\noindent \textbf{Definitions and notations.}
We assume a $\Ny$-way classification task, and have an \emph{unlabeled} dataset $D$ consisting of $\ny$ data points.
Denote by $\x_\iy\in\mathcal{X}$ the individual data point and $\y_\iy \in \sy=\{y_1, \dots, y_k\}$ the \emph{unobserved} interested label of $\x_\iy$.
% We adopt $\Ylabel$ to represent a semantic label, \eg, \mquote{dog}, $\sy$ to represent the set $\Ylabel$ belongs to (label space), and $[\cdot]$ as indexing for vector, \eg, $\y_\iy[a]$ is the value of $\y_\iy$ corresponding to $\Ylabel_a$.
We also have $\nlf$ sources, each represented by a labeling function (LF) and denoted by $\lf_\ilf$.
Each $\lf_\ilf$ outputs a label $\hat{\y}^{\ilf}_{\iy}\in\silfy=\{\hat{y}_{1}^{\ilf}, \dots, \hat{y}_{\Nilf}^{j}\}$ on $\x_\iy$, where $\silfy$ is the label space associated with $\lf_\ilf$ and $|\silfy|=\Nilf$.
We denote  the concatenation of LFs' output as $\hat{\y}_\iy = [\hat{\y}^{1}_{\iy},\hat{\y}^{2}_{\iy}, \dots, \hat{\y}^{n}_{\iy}]$, and the union set of LFs' label spaces as $\setseenundesired$ with $|\setseenundesired|=\nundesiredy$.
Note that $\nundesiredy$ is not necessarily equal to the sum over $\Nilf$, since LFs may have overlapping label spaces.
We call $\hat{y}\in\setseenundesired$ \emph{seen} label and $y\in\sy$ \emph{desired} labels.
In WS settings, we have $\sy\subset \setseenundesired$.
Notably, we assume all the involved labels come from the same semantic space.

\noindent \textbf{The goal of WS.}
 The goal is to infer the training labels for the dataset $D$ based on LFs, and to use them to train an \emph{end} discriminative classifier $f_\w : \mathcal{X} \rightarrow \mathcal{Y}$, \emph{all without ground truth training labels}.

\begin{figure}[t]
    \centering
    \centerline{\includegraphics[width=0.9\textwidth]{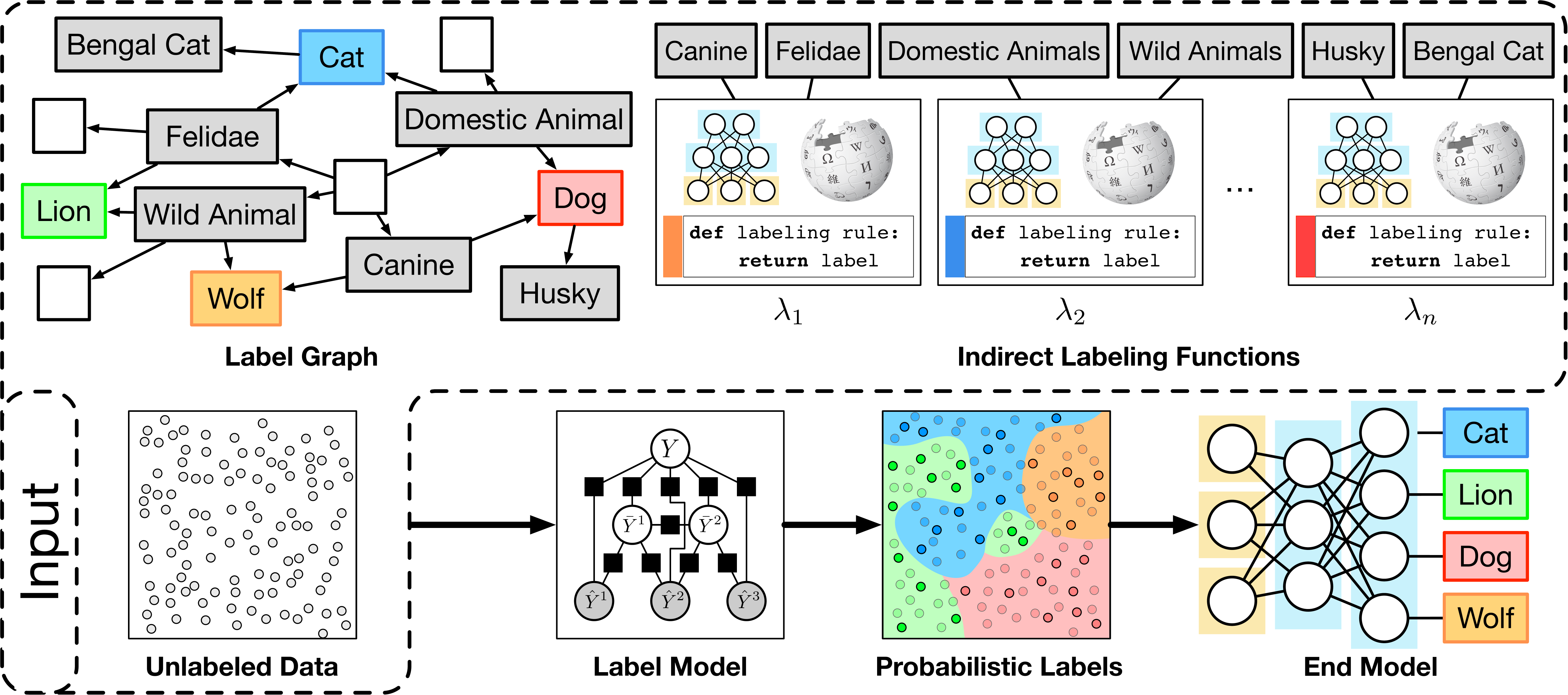}}
    \caption{\textbf{An example of WIS problem:} the input consists of an unlabeled dataset, a label graph, and $n$ indirect labeling functions (ILFs). The ILFs represent weak supervision sources such as pretrained classifiers, knowledge bases, heuristic rules, etc. We can see that the ILFs cannot predict desired labels \ie, \{\mquote{dog}, \mquote{wolf}, \mquote{cat}, \mquote{lion}\}. To address this, a label graph is given; here we only visualize the subsuming relation. Finally, a label model, instantiated as a PGM, takes the ILF's outputs and produces probabilistic labels in the target output space, which are in turn used to train an end machine learning model that can generalize beyond them. }
    \label{fig:framework}
\end{figure}

\section{Weak Indirect Supervision}
\label{sec:setting}

Now, we introduce the new Weak Indirect Supervision (WIS) problem. Unlike the standard WS setting, we only have indirect labeling functions (ILFs) instead of LFs, and an additional label graph is given.
The goal of WIS remains the same as WS.
% \ie, synthesizing labels for training an end model.
An example of WIS problem is in Fig.~\ref{fig:framework}.

\noindent \textbf{Indirect Labeling Function.}
In WIS, we only have indirect labeling functions (ILFs), which cannot directly predict any desired labels, \ie, $\setseenundesired \cap \sy = \emptyset$.
Therefore, we refer to the desired labels as \emph{unseen} labels.
To make it possible to leverage the ILFs, a label graph is given, which encodes pair-wise label relations between different seen and unseen labels.

\noindent \textbf{Label Graph.}
Concretely, a label graph $\labelgraph = (\mathcal{V}, \labeledgeset)$ consists of (1) a set of all the labels as nodes, \ie, $\mathcal{V}=\sally$, and (2) a set of pair-wise label relations as typed edges, \ie,  $\labeledgeset=\{(\Ylabel_i, \Ylabel_j, \labelrelation_{\Ylabel_i\Ylabel_j})|\labelrelation_{\Ylabel_i\Ylabel_j}\in\labelrelationset, i<j, \forall \Ylabel_i, \Ylabel_j \in \mathcal{V}\}$.
Here, $\labelrelationset$ is the set of label relation types and, similar to \citet{deng2014large}, 
there are four types of label relations: \exclusive, \overlapping, \subsuming, \subsumed, notated by $\lro, \lre, \lrsg, \lrsd$, respectively. 
Notably, for any \emph{ordered} pair of labels $(\Ylabel_i, \Ylabel_j)$, their label relation should fall into \emph{one} of the four types.
The rationale behind these label relations is that when treating each label as a set, there are four unique set relations and each corresponds to one defined label relation respectively as shown in Fig.~\ref{fig:set2labelgraph}.
For convenience, we denote the set of \emph{non-exclusive neighbors} of a given label $\Ylabel$ in $\setseenundesired$ as $\mathcal{N}(\Ylabel, \setseenundesired)$, \ie, $\mathcal{N}(\Ylabel, \setseenundesired)=\{\hat{y}\in\setseenundesired|\labelrelation_{y\hat{y}}\neq\lre\}$.

 \begin{figure}[H]
\begin{center}
\centering
\includegraphics[width=.8\columnwidth]{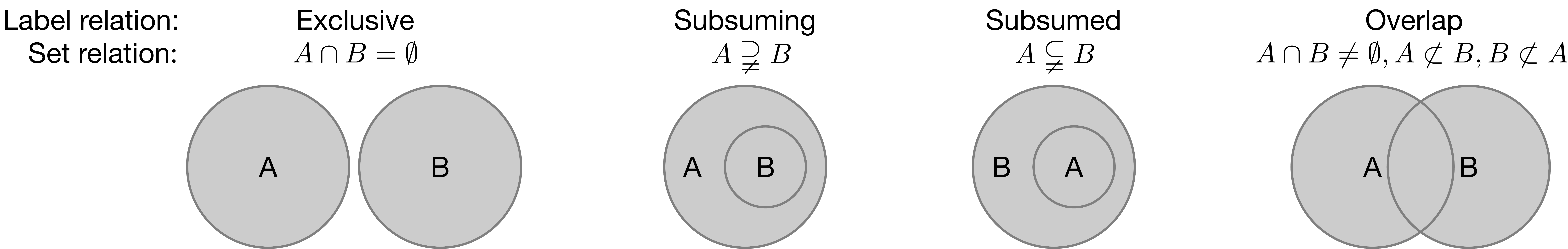}
\caption{The one-to-one mapping between label relations and set relations.}
\label{fig:set2labelgraph}
\end{center}
\end{figure}

\section{Probabilistic Label Relation Model}
\label{sec:method}

One of the key difficulties in both WS and WIS is that we do not observe the true label $\y_i$. Following prior work~\citep{Ratner16, Ratner19, fu2020fast}, we use a latent variable Probabilistic Graphical Model (PGM) for estimating $\y_i$ based on the $\hat{Y}_i$ output by ILFs.
Specifically, the PGM is instantiated as a factor graph model.
This standard technique lets us describe the family of generative distributions in terms of $M$ known dependencies/factor functions $\{\dep\}$, and an unknown parameter $\Param\in\mathbb{R}^{M}$ as $\label{eq:ci}
P_\Param(\cdot) \propto  \exp (\Param^\top \Dep(\cdot))$, where $\Dep$ is the concatenation of $\{\dep\}$.
However, the unique challenge for WIS is that the dependencies $\{\dep\}$  between $\y_i$ and $\hat{Y}_i$ are unknown due to the mismatches of label spaces.
We overcome these by leveraging the label graph $\labelgraph$ to build the dependencies for the PGM.

\subsection{A Baseline PGM for WIS}
In prior work~\citep{Ratner16, Bach17}, the PGM for WS is governed by accuracy dependencies:
\small
\[
 \dep^\acc_{y, j} (\y, \hat{\y}^{\ilf}) \defeq ~ \mathbbm{1}\{\y = \hat{\y}^{\ilf} = y\}
 \]
\normalsize
which is defined for each $\lf_\ilf$ and $y\in \silfy \cap \sy$.
However, in WIS, the ILFs cannot predict desired label $y\in\sy$.
As a simple baseline approach to start, we leverage the coarse-grained exclusive/non-exclusive label relation to build a corresponding "accuracy" factor.
Specifically, for an ILF $\lf_\ilf$ and one label $\hat{y}\in\silfy$, given a desired label $\Ylabel\in\sy$, if $\hat{y}$ and $\Ylabel$ have non-exclusive label relation, \ie, $\hat{y}\in\mathcal{N}(\Ylabel, \silfy)$ we expect a certain portion of data assigned $\hat{y}$ should be labeled as $y$.
Thus, we treat $\hat{\y}^{\ilf}=\hat{y}$ as a pseudo indicator of $\y=y$ and add a pseudo accuracy dependency between them:
\small
\[
 \dep^\acc_{y, \hat{y}, j} (\y, \hat{\y}^{\ilf})  \defeq ~ \mathbbm{1}\{\y=y \wedge \hat{\y}^{\ilf}=\hat{y}\}
 \]
\normalsize

We call the PGM governed by pseudo accuracy dependencies Weak Supervision with Label Graph (\typeone).
Notably, it can be treated as a simple adaptation of PGM for WS~\citep{Ratner16, Ratner19, fu2020fast} to the WIS problem.
However, such a \naive adaptation might have two drawbacks:
\begin{enumerate}[leftmargin=0.5cm]
    \item It does not model specific dependencies ILFs with different undesired labels. For example, two ILFs outputting \mquote{Husky} and \mquote{bulldog} respectively would be naively modeled the same as if they both output \mquote{Dog}. % However, the estimation of the reliability relies on the observed (dis)agreement of outputs, if we blindly merge the output spaces of the two groups of ILFs, they may always conflict with each other, preventing us from correctly estimating their reliabilities.

    \item It can only directly model exclusive/non-exclusive label relations, ignoring the prior knowledge encoded in other relation types, \ie, subsuming, subsumed, or overlapping. For example, given an unseen label \mquote{Dog} and some ILFs outputting \mquote{Husky} or \mquote{Domestic Animals}, \typeone would treat all ILFs as indicators of \mquote{Dog}. However, we know a \mquote{Husky} is of course a \mquote{Dog} (subsumed relation) while a \mquote{Domestic Animals} is not necessarily a \mquote{Dog} (subsuming relation). 

\end{enumerate}

\subsection{Probabilistic Label Relation Model}
To more directly model the full range and nuance of label relations, we propose a new \emph{probabilistic label relation model (\combined)}.
In \combined, we explicitly model both (1) the dependency between ILF outputs and the true labels in their output spaces, i.e. their direct accuracy, and (2) the dependencies between these labels and the target unseen labels, as separate dependency types, thus explicitly incorporating the full label relation graph into our model and learning its corresponding weights.

Concretely, we augment the \typeone model with (1) latent variables representing the assignment of the data to each seen label, and (2) label relation dependencies which capture fine-grained label relations between these output labels and desired labels.
To model seen label in $\setseenundesired$, we introduce a binary latent random vector $\bar{Y}=[\bar{Y}^1, \dots, \bar{Y}^{\hat{k}}]$, where $\bar{Y}^i$ indicating whether the data should be assigned $\hat{y}_i$. 
Then, for ILF $\lf_\ilf$ that could predict $\hat{y}_i$,  we have accuracy dependency: 

\small
\[
 \dep^\acc_{\hat{y}_{i}, j} (\bar{Y}^i, \hat{\y}^{\ilf}) \defeq ~ \mathbbm{1}\{\bar{Y}^i=1 \wedge \hat{\y}^{\ilf} = \hat{y}_i\}
 \]
 \normalsize

To model fine-grained label relations, for a desired label $\Ylabel\in\sy$ and seen label $\hat{y}_i\in\setseenundesired$, we add \emph{label relation} dependencies. We enumerate the label relation dependencies corresponding to the four label relation types, \ie, exclusive, overlapping, subsuming, subsumed, as follows:
\small
\[
\dep^e_{\Ylabel, \hat{y}_i} (\y, \bar{Y}^i) \defeq ~ -\mathbbm{1}\{\y = y \wedge \bar{Y}^i=1 \}
 \]
\[
\dep^o_{\Ylabel, \hat{y}_i} (\y, \bar{Y}^i) \defeq ~ \mathbbm{1}\{\y = y \wedge \bar{Y}^i=1 \}
 \]
 \[
\dep^{sg}_{\Ylabel, \hat{y}_i} (\y, \bar{Y}^i)\defeq ~ -\mathbbm{1}\{\y \neq y  \wedge \bar{Y}^i = 1 \}
 \]
 \[
\dep^{sd}_{\Ylabel, \hat{y}_i} (\y, \bar{Y}^i) \defeq ~ -\mathbbm{1}\{\y = y  \wedge \bar{Y}^i = 0 \}
 \]
 \normalsize
\\
The above dependencies encode the prior knowledge of the label relations, but also allow the model to learn corresponding parameters. 
For example, an exclusive label relation dependency $\dep^{e}$ outputs -1 when two exclusive labels are activated at the same time for the same data, otherwise 0, which reflects our prior knowledge of the exclusive label relation; and the corresponding parameter can be treated as the \emph{strength} of the label relation.
Likewise, for any pair of seen labels, we add label relation dependency following the same convention.
Finally, we specify the model as:
\small
\begin{equation}
\label{eq:ci}
P_\Param(\y, \ylatentcat, \lfoutcat) \propto  \exp \left( \Param^\top \Dep(\y, \ylatentcat, \lfoutcat) \right)~.
\end{equation}
\normalsize
Recall that $\y$ is the unobserved true label, $\bar{Y}$ is the binary random vector, each of whose binary value $\bar{Y}^i$ reflects whether the data should be assigned seen label $\hat{y}_i\in\setseenundesired$, and $\hat{Y}$ is the concatenated outputs of ILFs.

\noindent \textbf{Learning Objective.}
We estimate the parameters $\hat\Param$ by minimizing the negative log marginal likelihood $P_\Param(\lfoutcat)$ for observed ILF outputs $\lfoutcat_{1:m}$:
\small
\begin{equation}
\label{eq:mle}
\hat\Param = \argmin_\Param ~ - \sum_{i=1}^{m} \log \sum_{\y, \ylatentcat} P_\Param(\y, \ylatentcat, \lfoutcat_i)~.
\end{equation}
\normalsize
We follow~\citet{Ratner16} to optimize the objective using stochastic gradient descent.

\noindent \textbf{Training an End Model.}
Let $\p_{\hat{\Param}}(\y~|~\lfoutcat)$ be the probabilistic label (i.e. distribution) predicted by learned \combined.
We then train an end model $f_\w : \mathcal{X} \rightarrow \mathcal{Y}$ parameterized by $\w$, by minimizing  the empirical \textit{noise-aware loss} \citep{Ratner19} with respect to $\hat{\Param}$ over $\ny$ unlabeled data points:
\small
\begin{align}
    \hat{\w}
    &=
    \argmin_\w{
        \frac{1}{\ny} \sum_{\iy=1}^\ny
        \mathbb{E}_{\y \sim p_{\hat{\Param}}(\y|\lfoutcat_\iy)}{ \ell(\y, f_\w(\x_i)) }
    },
    \label{eqn:expected_loss}
\end{align}
\normalsize
where $\ell(\y, f_\w(\x_i))$ is a standard cross entropy loss. 

\noindent \textbf{Generalization Error Bound.}
We extend previous results from~\citep{Ratner16} to bound both the expected error of learned parameter $\hat{\Param}$ and the expected risk for $\hat{\w}$.
All the proof details and description of assumptions can be found in Appendix.

\begin{theorem}
\label{thm:main_bound}
Suppose that we run stochastic gradient descent to produce $\hat{\Param}$ and $\hat{\w}$ based on Eqs.~(\ref{eq:mle}) and (\ref{eqn:expected_loss}), respectively,  
and that our setup satisfies certain assumptions (App~\ref{appen: assumption}). 
Let $|D|$ be the size of the unlabeled dataset.
Then we have
\small
\begin{align*}
  \mathbb{E}{\norm{\hat \Param - \Param^*}^2}\le O\left(M\frac{\log |D|}{|D|}\right),  
\quad\Exv{\ell(\hat{\w})-\ell(\w^{*})}\le\chi+O\left(H\sqrt{\frac{\log |D|}{|D|}}\right).
\end{align*}
\normalsize
\end{theorem}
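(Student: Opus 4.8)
The plan is to follow the two-stage template of \citet{Ratner16}, adapted to the augmented factor graph of \combined, which beyond the unseen label $\y$ and the ILF outputs $\lfoutcat$ now also carries the auxiliary latent vector $\ylatentcat$. Throughout write $L(\Param)\defeq -\tfrac{1}{|D|}\sum_i \log\sum_{\y,\ylatentcat}P_\Param(\y,\ylatentcat,\lfoutcat_i)$ for the empirical objective of Eq.~(\ref{eq:mle}), $\bar L$ for its population version, and $\Param^*$ for the minimizer of $\bar L$.

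\emph{Stage 1 (recovering $\Param^*$).} Following \citet{Ratner16}, the key structural hypothesis among the assumptions of App.~\ref{appen: assumption} is that $\bar L$ is strongly convex in a neighborhood of $\Param^*$. I would observe that, since $P_\Param$ is an exponential family, $\nabla^2\bar L$ equals the Fisher information of the marginal law of $\lfoutcat$, so this strong convexity is equivalent to the marginalized sufficient statistics being affinely independent on the support --- a condition tightly connected to the distinguishability of the unseen labels under the given label graph and ILFs. Given this together with the trivial upper bounds on $\norm{\nabla L}$ and $\norm{\nabla^2 L}$ (each coordinate of $\Dep$ is a bounded indicator and there are $M$ factors), a standard SGD guarantee for smooth strongly convex objectives --- run for $|D|$ steps with a decaying step size --- gives $\mathbb{E}[\norm{\hat\Param-\Param^*}^2]\le O\!\left(M\tfrac{\log|D|}{|D|}\right)$, with the factor $M$ tracking the stochastic-gradient variance and $\log|D|$ coming from the harmonic sum of step sizes.

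\emph{Stage 2 (end-model risk).} I would split $\mathbb{E}[\ell(\hat\w)-\ell(\w^*)]$ into three comparisons. (i) The noise-aware objective of Eq.~(\ref{eqn:expected_loss}) depends on $\hat\Param$ only through the posterior $p_{\hat\Param}(\y\mid\lfoutcat)$, which is Lipschitz in $\Param$ since the log-partition function is smooth and the sufficient statistics are bounded; hence the excess risk from optimizing against $\hat\Param$ rather than $\Param^*$ is controlled by $\norm{\hat\Param-\Param^*}$, and by Stage 1 and Jensen's inequality this contributes $O\!\big(H\sqrt{\log|D|/|D|}\big)$. (ii) The population noise-aware risk at $\Param^*$ differs from the true risk by an irreducible quantity $\chi$ --- the residual gap between the best attainable label-model posterior and the true conditional label distribution --- which is the analogue of the inherent-noise term of \citet{Ratner16}. (iii) The uniform (over $\w$) deviation between the empirical and population noise-aware risks is a standard generalization term of order $O\!\big(H\sqrt{\log|D|/|D|}\big)$, where $H$ absorbs the Lipschitz constant of $\ell\circ f_\w$ and the capacity of the end-model class. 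Summing (i)--(iii) gives the claimed bound.

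\emph{Main obstacle.} I expect the crux to lie in Stage 1 and, more precisely, in showing that the problem-dependent constants hidden in the $O(\cdot)$ --- above all the strong-convexity modulus --- are actually bounded away from zero for \combined. Unlike the original data-programming model, \combined couples the accuracy factors on $\ylatentcat$ with the four families of label-relation factors, so one must check that marginalizing out $\ylatentcat$ does not collapse any direction of $\Param$, i.e. that the reduced sufficient statistics stay affinely independent. This is exactly where the distinguishability condition does its work; once the effective condition number is pinned down as a function of the label graph, the remaining SGD-convergence and uniform-convergence arguments are routine adaptations of \citet{Ratner16}.
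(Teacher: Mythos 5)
Your two-stage skeleton matches the paper's (both adapt \citet{Ratner16}: strong concavity of the marginal likelihood plus an SGD rate for $\hat\Param$, then propagation of $\norm{\hat\Param-\Param^*}$ into the noise-aware end-model risk), but you defer to hand-waving at exactly the two places where the paper does its real work, and at the first of these you propose the wrong mechanism. The paper does \emph{not} obtain strong concavity from distinguishability or from affine independence of marginalized sufficient statistics; Theorem~\ref{thm:condition} plays no role whatsoever in the generalization bound (it is a separate result about label swapping). Instead, strong concavity is \emph{derived} in Lemma~\ref{thm:strong_concavity} from two quantitative assumptions: the Hessian of the marginal log-likelihood decomposes as $\nabla^{2}J(\Param)=\mathbb{E}[\Cov(\Dep\mid\lfoutcat)]-\Cov(\Dep)$; the second term is lower-bounded by $2c I$ via Cram\'er--Rao together with the assumed unbiased estimator whose covariance is $\preceq I/(2c|D|)$ (condition~(\ref{eq:cond3})); and the first term is upper-bounded by $cI$ by splitting the factors into the four families of \combined and controlling each by conditional variances of $\mathbbm{1}_{\y=y_i}$ and $\ylatentcat^i$ using condition~(\ref{eq:cond4}) and the product-variance/spectral-norm lemmas. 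Without some such quantitative hypothesis nothing in your sketch prevents the strong-convexity modulus from degenerating --- the very obstacle you flag but do not resolve. A smaller point: the $\log|D|$ does not arise from a harmonic sum of decaying step sizes; the paper runs SGD with a \emph{fixed} step size $\eta=c\epsilon^{2}/4$ and the logarithm enters through the required iteration count $\tfrac{2}{c^{2}\epsilon^{2}}\log(2\norm{\Param_0-\Param^*}^{2}/\epsilon)\le|D|$.

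In Stage 2 there are two further issues. First, your item (ii) misreads $\chi$: conditions~(\ref{eq:cond1})--(\ref{eq:cond2}) assume the model is well-specified and $\y\perp\x\mid\lfoutcat(\x)$, so there is no irreducible posterior-misspecification gap; $\chi$ is by definition (condition~(\ref{eq:cond6})) the suboptimality of the end-model training procedure on the noise-aware objective, i.e.\ it already subsumes your item (iii). Second, and more substantively, your item (i) cannot rest on a generic ``the posterior is Lipschitz in $\Param$'' argument: the theorem's second bound carries no $M$ dependence, while Stage 1 only gives $\mathbb{E}\norm{\hat\Param-\Param^*}\le\sqrt{M}\,\epsilon$. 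The paper's Lemma~\ref{lemmaExpectedLoss} removes this $\sqrt{M}$ by bounding every partial derivative of the perturbation functional $g_{\x'}$ by conditional variances of the latent indicators and then invoking condition~(\ref{eq:cond4}) to get $\norm{\nabla g_{\x'}}\le 2cH/\sqrt{M}$, which exactly cancels the $\sqrt{M}$; a crude Lipschitz constant would leave a polynomial factor in $M$ in your final bound. So the plan is the right one, but the two steps you label as routine adaptations are precisely the non-routine content of the proof.
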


\noindent\textbf{Interpreting the Bound.}
By Theorem~\ref{thm:main_bound}, the two errors decrease by the rate $\tilde{O}(1/|D|)$ and $\tilde{O}(1/|D|^{1/2})$ respectively as $|D|$ increases. 
This shows that although we trade computational efficiency for the reduction of human efforts by using complex dependencies and more latent variables, we maintain comparable statistical efficiency as previous WS frameworks and supervised learning theoretically.

\section{Distinguishability of Unseen Labels}
\label{sec:analysis}

\begin{wrapfigure}{r}{4cm}
\vspace{-10px}
\includegraphics[width=3cm]{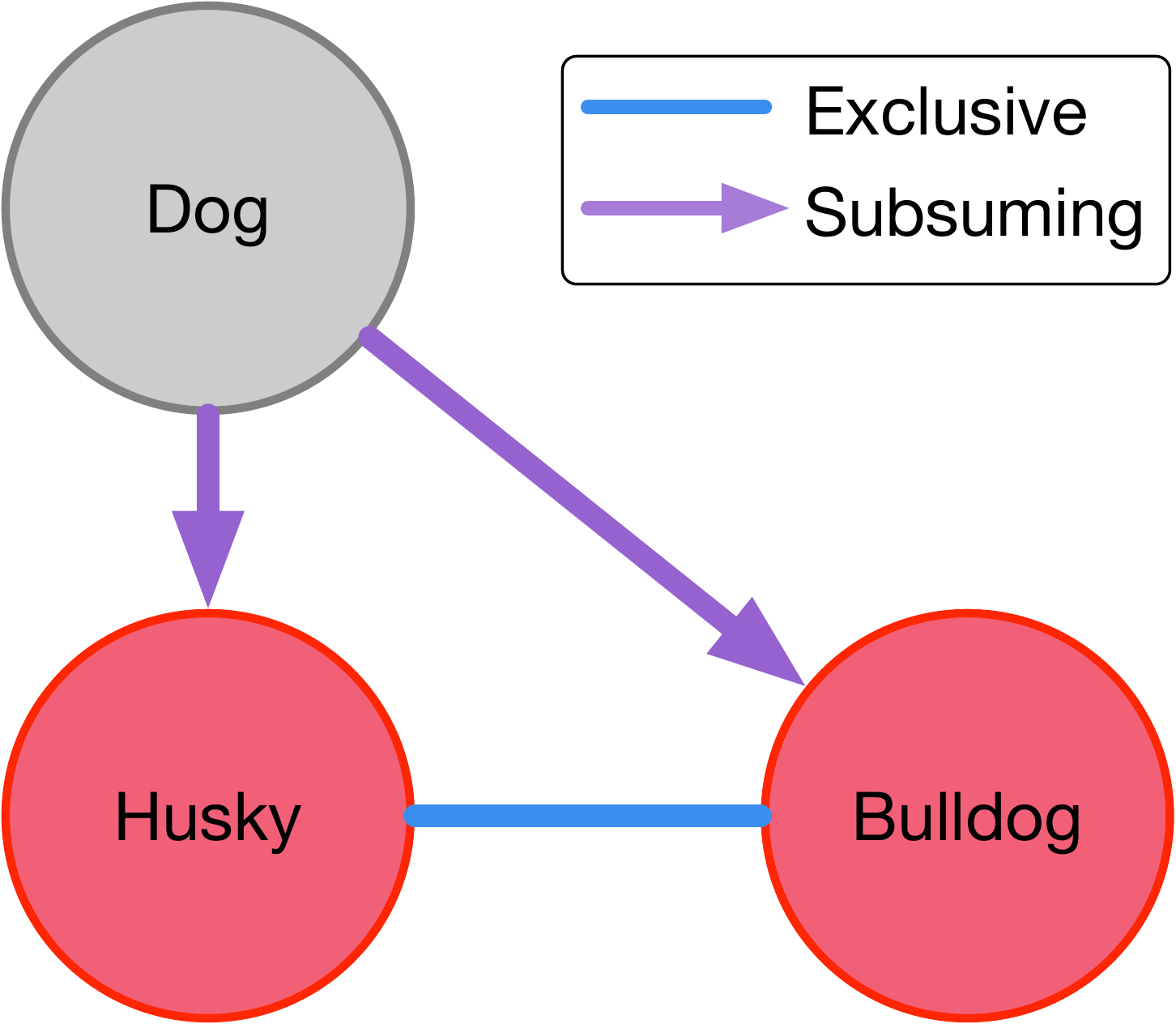}
\caption{Example of indistinguishable unseen labels \mquote{Husky} and \mquote{Bulldog}.}
\label{fig:indis}
\vspace{-2em}
\end{wrapfigure}
One unique challenge of WIS is that there may exist pairs of unseen labels which cannot be distinguished by the learned model.
For example, as shown in Fig.~\ref{fig:indis}, where \mquote{Dog} is a seen label for which LFs could predict for and \mquote{Husky} and \mquote{Bulldog} are unseen labels for which we want to generate training labels;
however, we could not distinguish between \mquote{Husky} and \mquote{Bulldog} even though the LFs make correct predictions of seen label \mquote{Dog}, because both \mquote{Husky} and \mquote{Bulldog} share the same label relation to \mquote{Dog}.

To tackle this issue, we theoretically connect the distinguishability of unseen labels to the label relation structures and provide a testable condition for the distinguishability. 
Intuitively, same label relation structures could lead to indistinguishable unseen labels as shown in Fig.~\ref{fig:indis}; however, \emph{it turns out to be challenging to prove that different label relation structures could guarantee the distinguishability with respect to the model}.
To illustrate, we formally define the distinguishability as below.

\begin{definition}[Distinguishability]
For any model $P_{\Param}(\y, \ylatentcat, \lfoutcat)$ with parameters $\Param$, any pair of unseen labels $\Ylabel_{i}, \Ylabel_{j}\in\sy$ are distinguishable w.r.t. the model, if  for a.e. $\Param>\mathbf{0}$ (element-wisely), there does NOT exist such a $\tilde{\Param}>0$ that, for $\forall  \ylatentcat, \lfoutcat$, the following equations hold
% \small
\begin{gather}
        \label{eq:indistinguishable1}
    P_{\Param}(\y=\Ylabel_{i}|\ylatentcat, \lfoutcat)=P_{\tilde{\Param}}(\y=\Ylabel_{j}|\ylatentcat, \lfoutcat),  P_{\Param}(\y=\Ylabel_{j}|\ylatentcat, \lfoutcat)=P_{\tilde{\Param}}(\y=\Ylabel_{i}|\ylatentcat, \lfoutcat), \\
    \label{eq:indistinguishable2}
   P_{\Param}(\y=y|\ylatentcat, \lfoutcat)=P_{\tilde{\Param}}(\y=y|\ylatentcat, \lfoutcat), \forall y\in \sy/\{y_i,y_j\}, \\
    \label{eq:distinguish_equiv_lambda}
  P_{\Param}(\lfoutcat) = P_{\tilde{\Param}}(\lfoutcat).
\end{gather}
\end{definition}
From the definition, we can see that the opposite of distinguishability, \ie, indistinguishability, describes an undesired model: for any learned parameter $\Param>\mathbf{0}$, we can always find another $\tilde{\Param}$ which optimizes the loss equally well (Eq.~(\ref{eq:distinguish_equiv_lambda})), but Eqs.~(\ref{eq:indistinguishable1}-\ref{eq:indistinguishable2}) implies \emph{whenever $P_\Param$ predict $\Ylabel_{i}$, $P_{\tilde{\Param}}$ will predict $\Ylabel_{j}$ instead}, which reflects that the model cannot distinguish the two unseen labels.
Note that the notion of distinguishability is different from the \emph{identifiability} in PGMs: the \emph{generic identifiability}~\citep{allman2015parameter}, the strongest notion of identifiability, requires the model to be identifiable \emph{up to label swapping}, while the distinguishability aims to avoid the label swapping.

However, distinguishability is hard to verify since Eqs. (\ref{eq:indistinguishable1}-\ref{eq:indistinguishable2}) and (\ref{eq:distinguish_equiv_lambda}) need to hold for \emph{any} possible configuration of $\ylatentcat, \lfoutcat$, and any pair of unseen labels. Fortunately, for the proposed \combined, we prove that distinguishability is \emph{equivalent} to the asymmetry of the label relation structures when two conditions hold. To state the required conditions, we first introduce the notations of consistency and informativeness to characterize the label graph and ILFs.

\noindent \textbf{Consistency.}
We discuss the \emph{consistency} of a label graph to avoid an ambiguous or unrealistic label graph.
We interpret semantic labels $\Ylabel_a, \Ylabel_b$ as \emph{sets} $A$, $B$, and then connect the label relations to the set relations (Fig.~\ref{fig:set2labelgraph}). Given the set interpretations, we define the \emph{consistency} of label graph as:

\begin{definition}[Consistent Label Graph]
\label{def:validgraph}
A label graph $\labelgraph = (\sy, \labeledgeset)$ is consistent if the induced set relations are consistent.
\end{definition}

For example, assume $\sy=\{\Ylabel_a, \Ylabel_b, \Ylabel_c\}$, and $\labelrelation_{ab}=\labelrelation_{bc}=\labelrelation_{ca}=\lrsg$. From $\labelrelation_{ab}, \labelrelation_{bc}$, we can observe that $A\supsetneqq B\supsetneqq C$, which contradicts to $C\supsetneqq A$ implied by $\labelrelation_{ca}=\lrsg$. Thus, $\labelgraph$ is inconsistent.

\noindent \textbf{Informativeness.}
In addition, we try to describe what kind of ILF is desired. {Intuitively, an ILF is uninformative if it always "votes" for one of the desired labels.} For example, if the desired label space $\sy$ is \{\mquote{Dog}, \mquote{Bird}\}, then for an ILF $\lf_1$ outputting \{\mquote{Husky}, \mquote{Bulldog}\}, we know \mquote{Dog} is non-exclusive to \mquote{Husky} and \mquote{Bulldog}, while \mquote{Bird} exclusive to both. In such case, $\lf_1$ can hardly provide information to help distinguish \mquote{Dog} from \mquote{Bird}, because it always votes for \mquote{Dog}.
On the other hand, a binary classifier of \mquote{Husky}, \ie, $\lf_2$, is favorable since it could output \mquote{Not a Husky} to avoid consistently voting for \mquote{Dog}.
We can see an undesired ILF always votes for a single desired label.
To formally describe this, we define an \emph{informative ILF} as:
\begin{definition}[Informative ILF]
\label{def:informmodel}
An ILF $\lf_\ilf$ is informative if, for $\forall\Ylabel\in \sy$, there exists $\x_i\in\mathcal{D}$ s.t. the output of $\lf_\ilf$ on $\x_i$ is not in $\mathcal{N}(\Ylabel, \sy_{\lf_\ilf})$, \ie,  $\hat{\y}^{j}_i\not\in \mathcal{N}(\Ylabel, \sy_{\lf_\ilf})$. 
\end{definition}
% Referring back to our examples, for ILF $\lf_2$, we know $\sy_{\lf_2}$(\mquote{Dog})=\{\mquote{Husky}\} and for some data point $\x_i$, we have $\hat{\y}^{2}_i\not\in$\{\mquote{Husky}\}, while for \mquote{Bird}, $\sy_{\lf_2}$(\mquote{Bird})=$\emptyset$. Thus, $\lf_2$ is informative.
% And for ILF $\lf_1$, we know $\sy_{\lf_1}$(\mquote{Dog})=\{\mquote{Husky}, \mquote{Bulldog}\}, but we cannot find an $\x_i$ such that $\hat{\y}^{1}_i\not\in$\{\mquote{Husky}, \mquote{Bulldog}\}, which makes $\lf_1$ uninformative.

\noindent \textbf{Testable Conditions for Distinguishability.}
Based on the introduced notations, we prove the \emph{necessary} and \emph{sufficient} condition for learned \combined being able to  distinguish unseen labels:

\begin{theorem}
\label{thm:condition}
For \combined induced from a  consistent label graph, as well as informative ILFs, for any pair of $ \Ylabel_{i}, \Ylabel_{j}\in\sy$, they are indistinguishable, if and only if $\labelrelation_{ik}=\labelrelation_{jk}$ for $\forall\Ylabel_k\in\setseenundesired$.
\end{theorem}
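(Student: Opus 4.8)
The plan is to recast the equivalence as a statement about symmetries of the factor graph of \combined. The only factors of \combined that involve the true label $\y$ are the label-relation dependencies tying $\y$ to the latent indicators $\bar Y^k$ (there is no direct factor between two desired labels, since $\y$ is a single categorical variable), so for every fixed $(\ylatentcat,\lfoutcat)$ the conditional $P_\Param(\y=\Ylabel_m\,|\,\ylatentcat,\lfoutcat)$ is a softmax over $m$ of quantities $\sum_k h_{mk}(\bar Y^k)$, where each $h_{mk}$ is affine in the relation parameters with coefficients frozen by the relation type $\labelrelation_{mk}$. A key observation to record up front is that, under matched parameters, the three non-exclusive types \overlapping, \subsuming\ and \subsumed\ all induce the \emph{same} $\y$-conditional (boosting one $\y$-term versus penalizing the complementary ones agree after normalizing over $\y$); hence Eqs.~(\ref{eq:indistinguishable1})--(\ref{eq:indistinguishable2}) only ever detect the coarse exclusive/non-exclusive split, and the finer structure must be read off from the marginal condition~(\ref{eq:distinguish_equiv_lambda}).

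\emph{Easy direction.} Assume $\labelrelation_{ik}=\labelrelation_{jk}$ for all $\Ylabel_k\in\setseenundesired$. Given any $\Param>\mathbf 0$, build $\tilde\Param$ by swapping, for every seen label $\hat y_k$, the parameter of the $(\Ylabel_i,\hat y_k)$ relation dependency with that of the $(\Ylabel_j,\hat y_k)$ dependency, leaving all other parameters unchanged. Since the swapped dependencies have identical functional form up to the transposition $\sigma=(\Ylabel_i\,\Ylabel_j)$, the resulting $P_{\tilde\Param}$ is exactly $P_\Param$ with $\Ylabel_i$ and $\Ylabel_j$ interchanged; thus Eqs.~(\ref{eq:indistinguishable1})--(\ref{eq:indistinguishable2}) hold identically in $(\ylatentcat,\lfoutcat)$, and Eq.~(\ref{eq:distinguish_equiv_lambda}) holds because marginalizing out $\y$ is insensitive to the relabeling. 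A bad twin exists for \emph{every} $\Param>\mathbf 0$, so the pair is indistinguishable.

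\emph{Hard direction (contrapositive).} Assume $\labelrelation_{ik_0}\neq\labelrelation_{jk_0}$ for some seen $\hat y_{k_0}$; the goal is that for a.e. $\Param>\mathbf 0$ there is no bad twin. Suppose instead that bad twins occur on a positive-measure set of $\Param$. Substituting $u=\exp\Param$ and $\tilde u=\exp\tilde\Param$ coordinatewise turns Eqs.~(\ref{eq:indistinguishable1})--(\ref{eq:distinguish_equiv_lambda}) into finitely many polynomial equations in $(u,\tilde u)$, so the set of admissible $\Param$ is the analytic-change-of-variables image of a semialgebraic set; having positive Lebesgue measure it must be full-dimensional, yielding an open set $U$ of parameters with bad twins and a real-analytic branch $\Param\mapsto\tilde\Param(\Param)$ over part of $U$. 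Because Eqs.~(\ref{eq:indistinguishable1})--(\ref{eq:indistinguishable2}) must hold for \emph{all} $(\ylatentcat,\lfoutcat)$, probing them coordinatewise over $\bar Y\in\{0,1\}^{|\setseenundesired|}$ gives, for each $k$, linear relations between the parameters of $\tilde\Param$ and of $\Param$ with coefficients frozen by the (unchanged) relation types; combined with $\Param,\tilde\Param>\mathbf 0$ these already force $\labelrelation_{ik_0}$ and $\labelrelation_{jk_0}$ to agree on exclusive-vs-non-exclusive, leaving only the case that both are non-exclusive but of different finer type. For that case I would use Eq.~(\ref{eq:distinguish_equiv_lambda}): expanding $P_\Param(\lfoutcat)=\sum_{\y,\ylatentcat}P_\Param(\y,\ylatentcat,\lfoutcat)$ and matching the coefficient of the $\lfoutcat$-pattern that is coupled to $\bar Y^{k_0}=1$ through the accuracy dependency of an ILF able to emit $\hat y_{k_0}$, the fact that \overlapping\ reweights a single $\y$-term while \subsuming/\subsumed\ reweight the complementary terms (resp. the $\bar Y^{k_0}=0$ branch) yields an identity in $\Param$ that fails throughout $U$, a contradiction. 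Informativeness of the ILFs is used exactly to guarantee such a separating configuration $(\ylatentcat,\lfoutcat)$ exists (so the relevant coefficient is not identically zero), and consistency of the label graph is used to discard the contradictory relation patterns that would otherwise make the extracted linear system degenerate; hence the bad-twin set has measure zero and the pair is distinguishable.

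\emph{Main obstacle.} The crux is the final step of the hard direction: since Eqs.~(\ref{eq:indistinguishable1})--(\ref{eq:indistinguishable2}) are blind to the distinction among \overlapping, \subsuming\ and \subsumed, the entire asymmetry has to be extracted from the marginal likelihood~(\ref{eq:distinguish_equiv_lambda}). Making this quantitative --- identifying which $\bar Y$--$\lfoutcat$ pattern is sensitive to the type of $\labelrelation_{ik_0}$, proving (via informativeness) that its coefficient is non-degenerate, and upgrading ``holds for a.e.\ $\Param$'' to a genuine algebraic obstruction through the semialgebraic/analytic-branch machinery --- is where essentially all the difficulty lies; the positivity constraints $\Param,\tilde\Param>\mathbf 0$ and the consistency hypothesis are exactly what eliminate the spurious solutions a naive linear count would otherwise permit.
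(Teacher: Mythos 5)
Your easy direction is exactly the paper's argument (swap the $(\Ylabel_i,\hat{y}_k)$ and $(\Ylabel_j,\hat{y}_k)$ relation parameters and observe that the model is carried to its own relabeling), so no issue there. The hard direction, however, rests on a structural claim that is false: you assert that Eqs.~(\ref{eq:indistinguishable1})--(\ref{eq:indistinguishable2}) are blind to the distinction among the three non-exclusive relation types, so that the finer asymmetry must be read off from the marginal condition~(\ref{eq:distinguish_equiv_lambda}). In fact the \subsumed\ factor $\dep^{sd}_{\Ylabel,\hat{y}_k}(\y,\bar{Y}^k)=-\mathbbm{1}\{\y=y\wedge\bar{Y}^k=0\}$ is the unique one that acts on the $\bar{Y}^k=0$ branch, so the conditional $P_{\Param}(\y\mid\ylatentcat,\lfoutcat)$ already separates \subsumed\ from \overlapping\ and \subsuming. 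Concretely, after the parameter identifications $\tilde{\Param}^1_i=\Param^1_j$, $\tilde{\Param}^2_i=\Param^2_j$ (which you correctly anticipate from coordinatewise probing over $\ylatentcat$), evaluating the residual identity at $\ylatentcat=\mathbf{1}$ yields a linear constraint $(\Param^1_i)^\top v=0$ whose $l$-th coefficient is $\bigl(\Dep^1_{i,l}(y_i,1)-\Dep^1_{i,l}(y_j,1)\bigr)-\bigl(\Dep^1_{j,l}(y_j,1)-\Dep^1_{j,l}(y_i,1)\bigr)$; each bracket equals $1$ for \overlapping\ and \subsuming\ but $0$ for \subsumed, so $v_l\neq 0$ whenever $\labelrelation_{il}\neq\labelrelation_{jl}$ are both non-exclusive, \emph{except} when $\{\labelrelation_{il},\labelrelation_{jl}\}=\{\lro,\lrsg\}$ --- and that single residual case is precisely what consistency (Lemma~\ref{le:constraints}) forbids, not something the marginal condition resolves. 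A nonzero linear constraint confines $\Param^1_i$ to a measure-zero set, which finishes the proof without ever touching Eq.~(\ref{eq:distinguish_equiv_lambda}).

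Because your plan routes the decisive step through Eq.~(\ref{eq:distinguish_equiv_lambda}), and that step is left unexecuted (``yields an identity in $\Param$ that fails throughout $U$'' with no identity exhibited and no non-degeneracy verified), the sufficiency direction is not established. The case split is also misassigned: the case you defer to the marginal condition (both non-exclusive but of different type) decomposes into a sub-case the conditionals do handle (\subsumed\ versus the other two) and a sub-case that no equation needs to handle because the consistency hypothesis excludes it (\overlapping\ versus \subsuming). The semialgebraic/analytic-branch machinery is likewise unnecessary once the explicit nonzero linear constraint is in hand. Finally, your instinct that informativeness supplies a ``separating configuration'' is right in spirit but attached to the wrong step: it is used to pick, for each ILF, an output value outside $\mathcal{N}(\Ylabel,\sy_{\lf_\ilf})$ so that the corresponding accuracy factor vanishes and the $\Param^2$ parameters can be isolated and matched, not to make a coefficient in the marginal likelihood nonzero.
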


Theorem~\ref{thm:condition} provides users with a testable condition: \emph{for any pair of unseen labels $\Ylabel_{i},\Ylabel_{j}$, there should exist at least one seen label $\Ylabel_k$ such that $\Ylabel_k$ has different label relations to $\Ylabel_{i}$ and $\Ylabel_{j}$, i.e., 
$\labelrelation_{ik}\neq\labelrelation_{jk}$, so that \combined is able to distinguish $\Ylabel_{i}$ and $\Ylabel_{j}$}.
In preliminary experiments, we observe the violation of this condition causes a dramatic drop in overall performance (about 10 points).
Notably, based on Theorem~\ref{thm:condition}, users could theoretically guarantee the distinguishability of a pair of unseen labels by adding only one seen label and corresponding ILFs to break the symmetry.

\section{Experiments}
\label{sec:experiments}

We demonstrate the applicability and performance of our
method on image classification tasks derived from ILSVRC2012~\citep{LSVRC12} and text classification tasks derived from LSHTC-3~\citep{LSHTC}.
Both datasets have off-the-shelf label relation structure~\citep{deng2014large, LSHTC}, { which are directed acyclic graphs (DAGS) and }from which we could query pairwise label relations.
{ Indeed, there is a one-to-one mapping between a DAG structure of labels and a consistent label graph (See App.~\ref{app:dag_to_labelgraph} for an example).}
The ILSVRC2012 dataset consists of 1.2M training images from 1,000 leave classes; for non-leave classes, we follow~\citet{deng2014large} to aggregate images belonging to its descendent classes as its data points.
The LSHTC-3 dataset consists of 456,886 documents and 36,504 {labels organized in a DAG}.

\subsection{Setup}
For each dataset, we randomly sample 100 different label graphs, each of which consists of 8 classes, and use each label graph to construct a WIS task.
For each label graph, we treat 3 of the sampled classes as unseen classes and the other 5 as seen classes.
The distinguishable condition in Sec.~\ref{sec:analysis} is ensured for all the WIS tasks, and the performance drop when it is violated can be found in App.~\ref{app:violated}.
% The pairwise label relations are directly queried from the label hierarchy of each dataset.
We sample data belonging to unseen classes for our experiments and split them into train and test set.
For image classification tasks, we follow \citet{pmlr-v130-mazzetto21a, pmlr-v139-mazzetto21a} to train a branch of image classifiers as supervision sources of seen classes.
For text classification tasks, we made keyword-based labeling functions as supervision sources of seen classes following~\cite{zhang2021wrench};
each of the labeling functions returns its associated label when a certain keyword exists in the text, otherwise abstains.
Notably, all the involved supervision sources are "weak" because they cannot predict the desired unseen classes.
Experimental details and additional results are in App.~\ref{appendix:exp_details}.

\subsection{Compared Methods and Results}
\label{exp:setup}

In addition to the \typeone baseline, which is an adaptation of Data Programming~\citep{Ratner19} to WIS task,  and \combined, we also include the following baselines.
Note that all compared methods input the same data, ILFs, and label relations throughout our experiments for fair comparisons.

\paragraph{Label Relation Majority Voting (LR-MV).}
We modify the majority voting method based on the label's non-exclusive neighbors: 
we replace $\hat{\Ylabel}$ predicted by any ILF with the set of desired labels $\mathcal{N}(\hat{\Ylabel}, \sy)$, \ie, the desired labels with non-exclusive relation to $\hat{\Ylabel}$, 
then aggregate the modified votes.

\paragraph{Weighted Label Relation Majority Voting (W-LR-MV).}
LR-MV only leverages exclusive/non-exclusive label relations. To leverage fine-grained label relations, W-LR-MV attaches a \emph{weight} to each replaced label. Specifically, if the ILF's output $\hat{\Ylabel}$ is replaced with its \emph{ancestor} label $\Ylabel$ (subsumed relation), then the weight of $\Ylabel$ equals 1, while for the other relations, the weight is $\frac{1}{|\sy^*(\hat{\Ylabel})|}$, where $\sy^*(\hat{\Ylabel})=\{\Ylabel\in\sy(\hat{\Ylabel})|\labelrelation_{\Ylabel\hat{\Ylabel}}\neq\lrsd\}$.

For the above methods, we compare the performance of (1) directly applying included models on the test set and (2) the end models (classifiers) trained with inferred training labels.

\paragraph{Zero-Shot Learning (ZSL).}
It is non-trivial to apply ZSL methods, because ZSL assumes label attributes for all classes and a labeled training set of seen classes, while WIS input an unlabeled dataset of unseen classes, label relations and ILFs.
Fortunately, the Direct Attribute Prediction (DAP)~\citep{lampert2013attribute} method is able to make predictions solely based on attributes without labeled data, by training attribute classifier $p(a_i|x)$ for each attribute $a_i$.
Therefore we include it in our experiments.
The details of applying DAP can be found in App.~\ref{sec:zsl-attr}.

\paragraph{Evaluation Results.}
\label{exp:real}
For a fair comparison, we fix the network architecture of the classifiers for all the methods.
For image classification, we use ResNet-32~\citep{resnet} and for text classification, we use logistic regression with pre-trained text embedding~\citep{reimers-2019-sentence-bert}.
The overall results for both datasets can be found in Table~\ref{tab:overall}. 
From the results, we can see that \combined consistently outperforms baselines.
The advantages of \combined show the effect of not just leveraging the label graph, as the baselines do, but modeling the accuracy of ILFs and the strengths of label relations as \combined does.
The reported results have high variance, which actually indicates the 100 different WIS tasks are diverse and have varying difficulty.
Also, we can see the end models are much better than directly applying the label models on the test set; this shows that the end models are able to generalize beyond the training labels produced by label models.
% Therefore, we could use the trained end models as LF and then iteratively train label/end models, similar to the framework of self-training, which has been explored in ordinary WS setting~\citep{karamanolakis2021self} and we leave as future work.

\begin{table}[t]
	\centering
	\caption{Averaged evaluation results over 100 WIS tasks derived from LSHTC-3 and ILSVRC2012.}
	\scalebox{0.85}{
        \begin{tabular}{cccccc}
        		\toprule
        		\multicolumn{2}{c}{\multirow{2}{*}{\textbf{Method}}} & \multicolumn{2}{c}{\textbf{LSHTC-3}} & \multicolumn{2}{c}{\textbf{ILSVRC2012}} \\
        		\cmidrule{3-6}
             &	& \textbf{Accuracy} & \textbf{F1-score} & \textbf{Accuracy} & \textbf{F1-score} \\
		\midrule
		
		\multicolumn{2}{c}{DAP}  & 42.90 $\pm$ {\tiny 13.53} & 35.98 $\pm$ {\tiny 15.73} & 33.25 $\pm$ {\tiny 3.68} & 29.13 $\pm$ {\tiny 4.63}  \\\midrule

		\multirow{4}{*}{Label Model} & LR-MV       & 58.86 $\pm$ {\tiny 10.50} & 54.33 $\pm$ {\tiny 11.10} &  46.88 $\pm$ {\tiny 10.66} & 40.11 $\pm$  {\tiny 16.44}\\
	&	W-LR-MV     & 59.28 $\pm$ {\tiny 10.47} & 54.55 $\pm$ {\tiny 11.36} & 41.39 $\pm$ {\tiny 10.80} & 30.19 $\pm$ {\tiny 16.94} \\
		
	&	\typeone    & 62.60 $\pm$ {\tiny 10.12} & 57.50 $\pm$ {\tiny 11.19} & 53.68 $\pm$ {\tiny 7.62} & 52.15 $\pm$ {\tiny 7.94} \\
		\cmidrule{2-6}
	&	\combined &\textbf{64.65} $\pm$ {\tiny 11.30} & \textbf{60.01} $\pm$ {\tiny 13.39} & \textbf{56.18} $\pm$ {\tiny 7.35} & \textbf{54.94}  $\pm$ {\tiny 7.44}\\\midrule
	
		\multirow{4}{*}{End Model} & LR-MV       & 67.17 $\pm$ {\tiny 12.25} & 62.49 $\pm$ {\tiny 13.95} & 49.60 $\pm$ {\tiny 12.80} & 42.83 $\pm$ {\tiny 18.17}\\
	&	W-LR-MV     & 66.57 $\pm$ {\tiny 11.73} & 61.80 $\pm$ {\tiny 13.24} & 42.61 $\pm$ {\tiny 12.46} & 31.34 $\pm$ {\tiny 18.20} \\
		
	&	\typeone    & 70.69 $\pm$ {\tiny 13.05} & 67.36 $\pm$ {\tiny 14.24} & 56.56 $\pm$ {\tiny 9.68} &  54.57 $\pm$ {\tiny 11.17}\\
		\cmidrule{2-6}
	&	\combined &\textbf{72.32} $\pm$ {\tiny 13.18} & \textbf{69.37} $\pm$ {\tiny 14.41} & \textbf{58.38} $\pm$ {\tiny 8.27} & \textbf{56.83} $\pm$ {\tiny 8.49}\\
		
		\bottomrule
         \end{tabular}
    }
    \label{tab:overall}
% \vspace{-1.5em}
\end{table}

\subsection{Real-world Application}
\label{sec:product}

In this section, on a commercial advertising system (CAS), we showcase how to reduce human annotation efforts of new labeling tasks by formulating them as WIS problems.
In a CAS, ads tagging (classification) is a critical application for understanding the semantics of ads copy. When new ads and tags are added to the system, manual annotations need to be collected for training a new classifier.
As tags are commonly organized as taxonomies, the label relations between existing and new tags are readily available or can be trivially figured out by humans; 
Existing classifiers and the heuristic rules previously used for annotating existing tags could serve as ILFs.
Therefore, given (1) an unlabeled dataset of new tags, (2) the label relations, and (3) ILFs, we formulate it as a WIS problem.

On such WIS formulation, we apply our method and baselines, to synthesize training labels of new tags.
Specifically, we have two WIS tasks where the tags are under the \mquote{Car Accessories} and \mquote{Furniture} categories respectively.
For both tasks, we have 3 new tags and leverage 5 existing tags related to the new ones with given relations.
On a test set, we evaluate the performance of DAP and the quality of labels produced by label models, as shown in Table~\ref{tab:product}.
Note that since we re-use the existing labeling sources tailored for existing tags as ILFs and obtain label relations from an existing taxonomy, we achieve these results without any manual annotation or creation of new labeling functions.
This demonstrates the potential of the proposed WIS task in real-world scenarios.

\begin{table}[H]
	\centering
	\caption{Evaluation on product tagging with new tags.}
	\scalebox{0.85}{
        \begin{tabular}{ccccccc}
        		\toprule
        \textbf{Category}     & \textbf{Metric}     &	\textbf{DAP} & \textbf{LR-MV} & \textbf{W-LR-MV} & \textbf{WS-LG} & \textbf{PLRM} \\
		\midrule
		
	\multirow{2}{*}{\textbf{Car Accessories}} &	\textbf{F1} &  50.62 & 68.68 & 68.06 & 66.85 & \textbf{76.37} \\
	&	\textbf{Accuracy} & 52.83 & 68.17 &67.67 & 66.33& \textbf{75.83}\\\midrule
	\multirow{2}{*}{\textbf{Furniture}} &	\textbf{F1} & 30.81 & 64.70 & 61.45 & 70.59 &  \textbf{80.57} \\
	&	\textbf{Accuracy} & 33.60 & 72.53 & 72.13 & 74.51 & \textbf{82.02}\\
		
		\bottomrule
         \end{tabular}
    }
    \label{tab:product}
    % \vspace{-5mm}
\end{table}

\section{Conclusion}
\label{sec:conclusion}
We propose Weak Indirect Supervision (WIS), a new research problem which leverages indirect supervision sources and label relations to synthesize training labels for training machine learning models.
We develop the first method for WIS called Probabilistic Label Relation Model (\combined) with the generalization error bound of both \combined and end model.
We provide a theoretically-principled sanity test to ensure the distinguishability of unseen labels.
Finally, we provide experiments to demonstrate the effectiveness of \combined and its advantages over baselines on both academic datasets and industrial scenario.

\paragraph{Reproducibility Statement.} All the assumptions and proofs of our theory can be found in App.~\ref{sec:proof_label_vector} \& ~\ref{app:theo1}.
Examples and illustrations of label graph are in App.~\ref{sec:illustration}.
Experimental details can be found in App.~\ref{appendix:exp_details}.
Additional experiments are in App.~\ref{app:add_exp}.
% We will put the link to our code in the comment directed to the reviewers and area chairs when the forum is open as suggested by Author Guide. 

\bibliography{iclr2022_conference}
\bibliographystyle{iclr2022_conference}

\newpage
\appendix
\begin{appendix}
\clearpage

\begin{center}
    {\Large \sc Supplementary materials for \\``Creating Training Sets via Weak Indirect Supervision''}
\end{center}

The supplementary materials are organized as follows. In Appendix~\ref{appendix:glossary}, we provide a glossary of variables and symbols used in this paper. 
In Appendix \ref{appendix:model}, we provide the details of \combined model. 
In Appendix \ref{sec:proof_label_vector} and \ref{app:theo1}, we provide the detailed proofs of Theorem~\ref{thm:condition} and Theorem~\ref{thm:main_bound} respectively. 
In Appendix \ref{sec:illustration}, we provide the detailed examples and illustrations of label graph in WIS.
In Appendix \ref{appendix:exp_details} and \ref{app:add_exp}, we provide experimental details and additional experiment resulst respectively.

\section{Glossary of Symbols}
\label{appendix:glossary}
\begin{table*}[h]
\centering
\caption{
	Glossary of variables and symbols used in this paper.
}
\begin{tabular}{l l l}
\toprule
Symbol & Simplified & Used for \\
\midrule
$\x_i$ & & The $i$-th data point, $\x_i \in \mathcal{X}$ \\
$\ny$ & & Number of data points \\
$\y_i$ & & The true desired label of the $i$-th data point, $\y_i \in \sy$ \\
$\Ylabel$ & & A semantic label, \eg, "dog"\\
$\sy$ & & The set of desired labels, $\sy=\{y_1, y_2, \dots, y_k\}$ \\
$\Ny$ & &  Cardinality of $\sy$, \ie, $\Ny=|\sy|$ \\
\midrule
$\lf_\ilf$ & & The $\ilf$-th Indirect labeling function (ILF) \\
$\nlf$ & & Number of ILF \\
$\hat{\y}^{\ilf}_\iy$ & & The output label of $\ilf$-th ILF on $\iy$-th data point, $\hat{\y}^{\ilf}_\iy \in \mathcal{Y}_{\lf_\ilf}$ \\
$\hat{\y}_\iy$ & & The concatenation of ILFs' output, $\hat{\y}_\iy = [\hat{\y}^{1}_{\iy},\hat{\y}^{2}_{\iy}, \dots, \hat{\y}^{n}_{\iy}]$ \\
$\hat{y}^{\ilf}$ & & A semantic label in the label space of $\lf_\ilf$\\
$\mathcal{Y}_{\lf_\ilf}$ & $\mathcal{Y}_{\ilf}$ & Label label space of ILF $\lf_\ilf$, $\mathcal{Y}_{\lf_\ilf}=\{\hat{y}^{\ilf}_1, \hat{y}^{\ilf}_2, \dots, \hat{y}^{\ilf}_{k_{\lf_\ilf}}\}$ \\
$k_{\lf_\ilf}$ & $k_{\ilf}$ & Cardinality of the output space of ILF $\lf$, \ie, $k_{\lf_\ilf} = |\mathcal{Y}_{\lf_\ilf}|$ \\
$\setseenundesired$ & & Union set of all the $\mathcal{Y}_{\lf_\ilf}$, $\setseenundesired=\{\hat{y}_1, \hat{y}_2, \dots, \hat{y}_{\nundesiredy}\}$ \\
$\nundesiredy$ & &  Cardinality of the $\setseenundesired$, \ie, $\nundesiredy = |\setseenundesired|$ \\
$K$ & &  Total number of labels, \ie, $K = \nundesiredy + \Ny$ \\
\midrule
$\bar{Y}^i$ & & Latent binary variable indicating whether the data should be assigned $\hat{y}_i\in\setseenundesired$. \\
$\ylatentcat$ & & Concatenation of all latent binary variable, $\bar{Y}=[\bar{Y}^1, \dots, \bar{Y}^{\hat{k}}]$\\
\midrule
$\labelgraph$ & & Label graph, $\labelgraph = (\sally, \labeledgeset)$ \\ 
$\labeledgeset$ & & The set of label relations, $\labeledgeset=\{(\Ylabel_i, \Ylabel_j, \labelrelation_{\Ylabel_i\Ylabel_j})|\labelrelation_{\Ylabel_i\Ylabel_j}\in\labelrelationset, i<j, \forall \Ylabel_i, \Ylabel_j \in \mathcal{V}\}$\\
$\labelrelationset$ & & The set of label relation types, $\labelrelationset=\{\lre, \lro, \lrsd, \lrsg\}$ \\ 
$\lre$ & & Exclusive label relation \\
$\lro$ & & Overlap label relation \\
$\lrsg$ & & Subsuming label relation \\
$\lrsd$ & & Subsumed label relation \\
$\mathcal{N}(\Ylabel, \setseenundesired)$ & & the set of non-exclusive neighbors of a given label $\Ylabel$ in $\setseenundesired$\\
\midrule
$\dep$ & & A single dependency, or, factor function \\
$\Dep$ & & Concatenation of all individual dependency \\
$\ndep$ & & Number of total dependencies\\
$\param$ & & A single parameter of the PGM \\
$\Param$ & & Concatenation of all parameters of the PGM, $\Param\in\mathbb{R}^{\ndep}$ \\
$\hat{\Param}$ & & The learned parameters \\
$\Param^{*}$ & & The golden parameters \\
$\w$ & & The parameter of an end model\\
$\hat{\w}$ & & The learned parameters \\
$\w^{*}$ & & The golden parameters \\
\bottomrule
\end{tabular}
\label{table:glossary}
\end{table*}

\clearpage
\section{Details of the \combined}
\label{appendix:model}

We use $\y, \ylatentcat$, and $\lfoutcat$ to represent random vector. Then, we give the formal form of the \combined as:
\begin{equation}
\label{eq:dci}
P_\Param(\y, \ylatentcat, \lfoutcat) \propto  \exp \left( \Param^\top \Dep(\y, \ylatentcat, \lfoutcat) \right)~.
\end{equation}
Recall that $\y$ is the unobserved true label, $\bar{Y}$ is the binary random vector, each of whose binary value $\bar{Y}^i$ reflects whether the data should be assigned seen label $\hat{y}_i\in\setseenundesired$, and $\hat{Y}$ is the concatenated outputs of ILFs. 
Specifically, we enumerate $\Dep$ as below:
\begin{enumerate}
    \item (Pseudo accuracy dependency): $\forall j\in[n], y\in\sy/\{unknown\}, \hat{y}\in \sy_{\lf_\ilf}$, we have
    \[
 \dep^\acc_{y, \hat{y}, j} (\y, \hat{\y}^{\ilf})  \defeq ~ \mathbbm{1}\{\y=y \wedge \hat{\y}^{\ilf}=\hat{y}\wedge \hat{y}\in \mathcal{N}(y,\sy_{\lf_\ilf})\}\footnote{When $\hat{y}\notin \mathcal{N}(y,\sy_{\lf_\ilf})$, $\dep^\acc_{y, \hat{y}, j}$ is always zero and will not occur in the model. Here we use this form for the sack of rigorous representation.}
 \]
 
 \item (Accuracy dependency): $\forall j\in[n], \hat{y}_i\in \setseenundesired \cap \sy_{j}$ we have
\[
 \dep^\acc_{\hat{y}_{i}, j} (\bar{Y}^i, \hat{\y}^{\ilf}) \defeq ~ \mathbbm{1}\{\bar{Y}^i=1 \wedge \hat{\y}^{\ilf} = \hat{y}_i\}
 \]
 
  \item (Label relation dependency between seen labels): $\forall \hat{y}_i, \hat{y}_j\in \setseenundesired, i<j$
  \begin{enumerate}
  
    \item if $\labelrelation_{\hat{y}_i \hat{y}_j}=\lre$, we have
    \[
    \dep^e_{\hat{y}_i, \hat{y}_j} (\bar{Y}^i, \bar{Y}^j) \defeq ~ -\mathbbm{1}\{\bar{Y}^i=1 \wedge \bar{Y}^j=1 \}
    \]
      
    \item if $\labelrelation_{\hat{y}_i \hat{y}_j}=\lro$, we have
    \[
    \dep^o_{\hat{y}_i, \hat{y}_j} (\bar{Y}^i, \bar{Y}^j) \defeq ~ \mathbbm{1}\{\bar{Y}^i=1 \wedge \bar{Y}^j=1 \}
    \]
 
     \item if $\labelrelation_{\hat{y}_i \hat{y}_j}=\lrsg$, we have
          \[
    \dep^{sg}_{\hat{y}_i, \hat{y}_j} (\bar{Y}^i, \bar{Y}^j) \defeq ~ -\mathbbm{1}\{\bar{Y}^i=0 \wedge \bar{Y}^j=1 \}
    \]
 
     \item if $\labelrelation_{\hat{y}_i \hat{y}_j}=\lrsd$, we have
          \[
    \dep^{sd}_{\hat{y}_i, \hat{y}_j} (\bar{Y}^i, \bar{Y}^j) \defeq ~ -\mathbbm{1}\{\bar{Y}^i=1 \wedge \bar{Y}^j=0 \}
     \]
     
  \end{enumerate}
   
  \item (Label relation dependency between desired and seen labels): $\forall y\in\sy/\{unknown\}, \hat{y}_i\in \setseenundesired$
  \begin{enumerate}
  
    \item if $\labelrelation_{y \hat{y}_i}=\lre$, we have
\[
\dep^e_{\Ylabel, \hat{y}_i} (\y, \bar{Y}^i) \defeq ~ -\mathbbm{1}\{\y = y \wedge \bar{Y}^i=1 \}
 \]
      
    \item if $\labelrelation_{y \hat{y}_i}=\lro$, we have
\[
\dep^o_{\Ylabel, \hat{y}_i} (\y, \bar{Y}^i) \defeq ~ \mathbbm{1}\{\y = y \wedge \bar{Y}^i=1 \}
 \]
 
     \item if $\labelrelation_{y \hat{y}_i}=\lrsg$, we have
 \[
\dep^{sg}_{\Ylabel, \hat{y}_i} (\y, \bar{Y}^i)\defeq ~ -\mathbbm{1}\{\y \neq y  \wedge \bar{Y}^i = 1 \}
 \]
 
     \item if $\labelrelation_{y \hat{y}_i}=\lrsd$, we have
 \[
\dep^{sd}_{\Ylabel, \hat{y}_i} (\y, \bar{Y}^i) \defeq ~ -\mathbbm{1}\{\y = y  \wedge \bar{Y}^i = 0 \}
 \]
     
  \end{enumerate}
\end{enumerate}

% \begin{align*}
% \Param^\mathsf{T}\Dep(\lfoutcat_\iy, \y_\iy, \ylatentcat_\iy)  =  &\sum_{d=1}^{\Ny}\sum_{\ilf=1}^{\nlf}\sum_{\Ylabel_a\in\sy_j(\Ylabel_d)}\param^\acc_{\Ylabel_d, \Ylabel_a, \ilf}\dep^\acc_{\Ylabel_d, \Ylabel_a, \ilf} (\y_\iy[d], \lfout_{\iy\ilf}[a]) \\
% & + \sum_{\ilf=1}^\nlf\sum_{ \Ylabel_a\in\sy_j}\param^\acc_{\Ylabel_a, \ilf} \dep^\acc_{\Ylabel_a, \ilf} (\ylatentcat_\iy[a], \lfout_{\iy\ilf}[a]) \\
% & + \sum_{1\le a,b\le\nundesiredy}\param^{\labelrelation_{ab}}_{\Ylabel_a, \Ylabel_b}\dep^{\labelrelation_{ab}}_{\Ylabel_a, \Ylabel_b} (\ylatentcat_\iy[a], \ylatentcat_\iy[b])\\
% & + \sum_{d=1}^{\Ny}\sum_{a=1}^{\nundesiredy}\param^{\labelrelation}_{\Ylabel_d, \Ylabel_a}\dep^{\labelrelation_{da}}_{\Ylabel_d, \Ylabel_a} (\y_\iy[d], \ylatentcat_\iy[a])
% \end{align*}

And example of our \combined is shown in Fig.~\ref{fig:pgm}, where square with difference colors corresond to different dependency/factor functions in \combined.

\begin{figure}[H]
\begin{center}
\centering
\includegraphics[width=3cm, height=3.5cm]{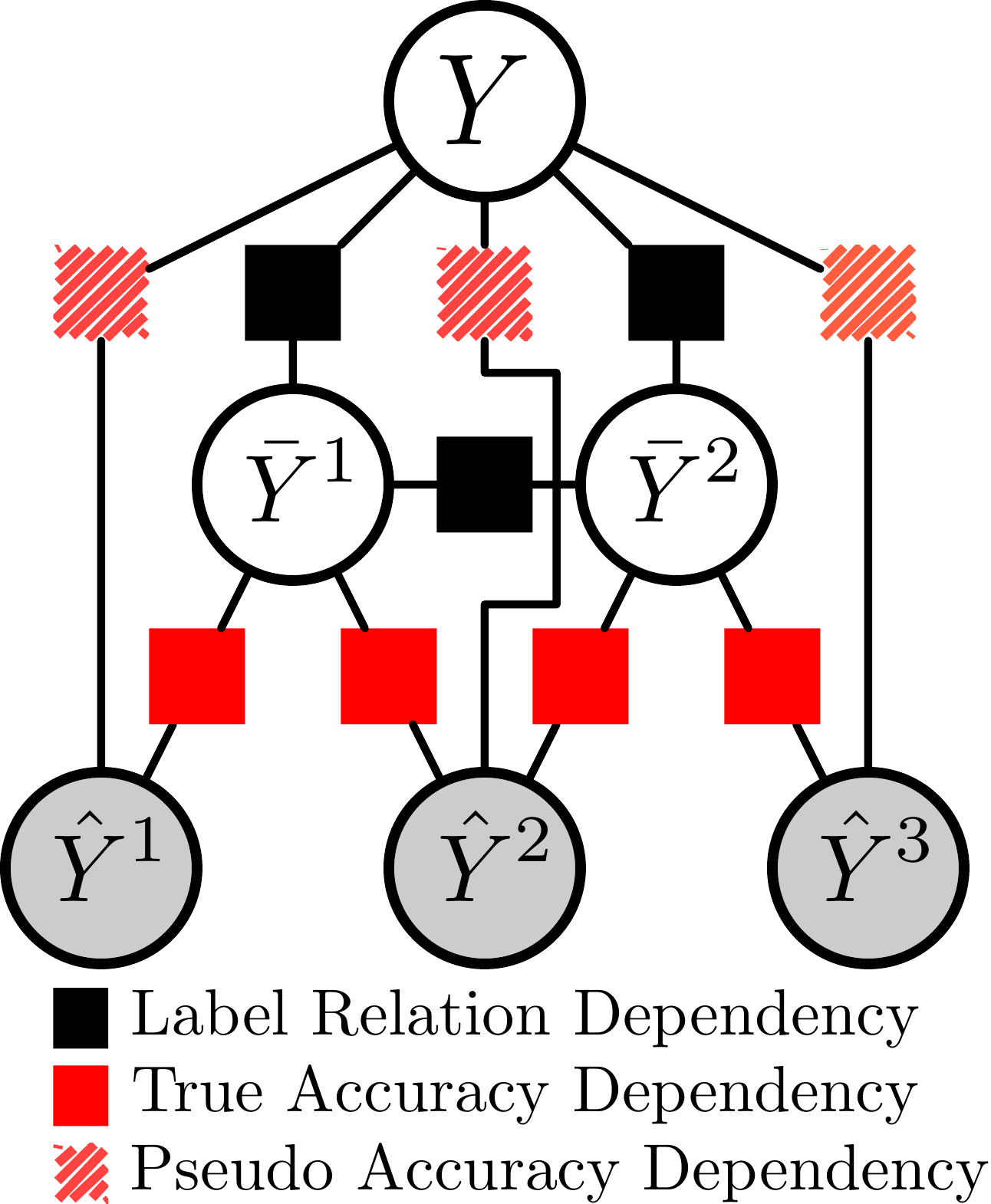}
\caption{\combined.}
\label{fig:pgm}
\end{center}
\end{figure}

% \begin{figure}[H]
% \begin{center}
% \centering
% % \centerline{\includegraphics[width=.9\columnwidth]{figs/set2lg.png}}
% \includegraphics[width=0.5\columnwidth]{figs/new_pgm.pdf}
% \caption{An example of factor graph of \combined.}
% \label{fig:pgm}
% \end{center}
% \end{figure}

\section{Proof of Theorem \ref{thm:condition}}
\label{sec:proof_label_vector}
\subsection{Simplifying the notation}
\label{sec:notation_condition}
% Throughout the proof, we use $\Y, \Ylatentcat, \Lfoutcat$ and $\boldsymbol\lfout$  to represent random vector, while $\y, \ylatentcat, \lfoutcat$ and $\lfout$ to represent the specific value of the corresponding random vector. 

To simplify the indexing of dependencies, we use $\Dep^1$ to represent the concatenation of $\dep$ which involves both $\y$ and $\ylatentcat$, $\Dep^2$ to represent the concatenation of $\dep$ which involve both $\Y$ and $\Lfoutcat$, and $\Dep^3$ to represent the concatenation of remaining $\dep$ which do not involve $\Y$.  

Specifically, $\Dep^1$ consists of $k\times \hk$ components corresponding to the dependency between the $k$ desired labels and the $\hk$ seen labels. We use the subscript $i,j$ to denote the dependency function between the desired label $y_i$ and seen label $\hat{y}_j$, i.e., 
\begin{equation*}
    \Dep^1_{i,j}=\dep_{y_i,\hy_j}^{?},
\end{equation*}
where $?$ is the corresponding relation.

Similarly, $\Dep^2$ consists of $k\times (\sum_{j=1}^n k_j)$ components corresponding to the dependency between the $k$ desired labels and the $k_j$ seen labels output by the ILF $\lambda_j$ ($j\in[n]$), and we use $\Dep^2_{i,j,l}$ to denote the dependency of $y_i$ and $\hy^j_l$, and $\Dep^2_{i,j}=(\Dep^2_{i,j,l})_{l=1}^{k_j}$to denote the dependency of $y_i$ and $\hy^j$.

% Furthermore, we use subscript $i$ to represent the concatenation of $\dep$ wherein $ y_i$ is involved, \eg, $\Dep^1_{i}$ represents the concatenation of elements in $\Dep^1$ which involve $y_i$; use subscript $i, j$ to represent the concatenation of $\dep$ in $\Dep^2$ wherein $y_i$ and $\hat{y}_j$ involved; and use subscript $i, j, a$ to represent the particular $\dep$ involving $y_i$ and $\hy^j_{k_a}$.
% Likewise, the indexing of model parameters $\Param$ follows the same convention.

According to $\Dep^1$, $\Dep^2$, and $\Dep^3$, we also divide the parameter $\Param$ into $\Param^1$ (with elements being $\Param^1_{i,j}$ correspondingly), $\Param^2$ (with elements being $\Param^2_{i,j}=(\Param^2_{i,j,l})_{l=1}^{k_j}$ correspondingly), and $\Param^3$, and the joint probability is then given as:
\small
\begin{align}
    \label{eq:exponential_family_definition}
    \mathbb{P}_{\Param}(Y,\ylatentcat, \lfoutcat)=\frac{\exp{\left( (\Param^1)^\mathsf{T} \Dep^1 (\y, \ylatentcat)+(\Param^2)^\top\Dep^2 (\y, \lfoutcat) +(\Param^3)^\mathsf{T} \Dep^{3}(\ylatentcat,\lfoutcat)\right)}}{\sum_{Y',\ylatentcat', \lfoutcat'}\exp{\left( (\Param^1)^\top \Dep^1 (\y', \ylatentcat')+(\Param^2)^\top\Dep^2 (\y', \lfoutcat') +(\Param^3)^\mathsf{T} \Dep^{3}(\ylatentcat',\lfoutcat')\right)}}
\end{align}
\normalsize

Also, for notation convenience, we adopt following simplifications:
\begin{enumerate}
    \item $\forall y_i\in\sy \rightarrow \forall i\in[k]$ since $|\sy|=k$, similarly,  $\forall \hy_i\in\sy_j \rightarrow \forall i\in[k_j]$ and $\forall \hy_i\in\setseenundesired \rightarrow \forall i\in[\nundesiredy]$;
    \item $\forall \lf_j \rightarrow \forall j\in[n]$ since we have $n$ ILFs in total;
    \item $\dep^{t_{y_iy_j}}_{y_i, y_j} \rightarrow \dep^{t}_{y_i, y_j}$ where $t=t_{y_iy_j}$ and can be seen from the subscript of the dependency.
\end{enumerate}

\subsection{Propositions and Lemmas}

First, we state some propositions and lemmas that will be useful in the proof to come.

\begin{proposition}[Multi-class classification]
\label{re:multiclass}
For a multi-class classification task, $\forall y_{i}, y_{j}\in\sy$, we have $t_{y_i y_j}=\lre$. Similarly, $\forall \hat{y}_a, \hat{y}_b\in\hat{\sy}$, we have $t_{\hat{y}_a\hat{y}_b}=\lre$. 
\end{proposition}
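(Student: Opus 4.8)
\textbf{Proof plan for Proposition~\ref{re:multiclass}.}

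The plan is to unwind the definitions of the set interpretation of labels and of the label relations, then observe that a multi-class task forces the label sets to be pairwise disjoint, which is exactly the set relation corresponding to the exclusive type $\lre$. Recall from Fig.~\ref{fig:set2labelgraph} that each label $\Ylabel$ is interpreted as a set, and the four relation types $\lre, \lro, \lrsg, \lrsd$ are in one-to-one correspondence with the four set relations: disjointness ($A\cap B=\emptyset$), proper overlap ($A\cap B\neq\emptyset$, $A\not\subseteq B$, $B\not\subseteq A$), proper superset ($A\supsetneqq B$), and proper subset ($A\subsetneqq B$).

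First I would make explicit what ``multi-class classification task'' means in terms of these sets: each data point $\x_i$ carries exactly one desired label $\y_i\in\sy$, so the label sets $\{A_i\}_{i=1}^{k}$ partition (a subset of) the instance space. In particular $A_i\cap A_j=\emptyset$ for every pair $i\neq j$ with $y_i,y_j\in\sy$, since no instance can simultaneously belong to two distinct classes. By the one-to-one mapping between set relations and label relations, disjointness of $A_i$ and $A_j$ is precisely the relation $t_{y_iy_j}=\lre$. The same argument applies verbatim to the seen labels: each ILF $\lf_j$ produces exactly one label in $\sy_{\lf_j}$, hence the corresponding sets are pairwise disjoint, giving $t_{\hat{y}_a\hat{y}_b}=\lre$ for all $\hat{y}_a,\hat{y}_b\in\hat{\sy}$ lying in the output space of a single ILF; and since all involved labels come from the same semantic space, two seen labels from distinct ILFs that coincide semantically are identified, while distinct ones remain mutually exclusive, so the conclusion holds for all pairs in $\hat{\sy}$.

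There is essentially no obstacle here — the statement is a direct translation of the single-label property of multi-class classification through the definitional dictionary of Fig.~\ref{fig:set2labelgraph}. The only point requiring a sentence of care is why two semantically different seen labels must be exclusive rather than, say, overlapping: this follows because in our setup each ILF is a classifier/heuristic whose outputs are mutually exclusive category names, so their set interpretations cannot share instances. I would state this as the single nontrivial remark and otherwise keep the proof to a few lines.
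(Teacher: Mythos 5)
The paper gives no proof of Proposition~\ref{re:multiclass}: it is stated as an immediate consequence of the definition of a multi-class task and is subsequently invoked only through its first clause (in the proof of Lemma~\ref{le:constraints}). Your argument for that first clause is exactly the intended one: a multi-class task means each instance carries exactly one desired label, so the set extensions of the $y_i\in\sy$ are pairwise disjoint, and disjointness is the set relation paired with $\lre$ in Fig.~\ref{fig:set2labelgraph}. Nothing to add there.

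The second clause is where your write-up has a real gap, and you half-noticed it yourself. Your justification --- each ILF outputs exactly one of its category names, so its output categories are mutually exclusive --- at best covers pairs of labels drawn from the \emph{same} ILF's output space $\sy_{\lf_\ilf}$. For pairs drawn from different ILFs you simply assert that ``distinct ones remain mutually exclusive,'' but that is false in the paper's own setting: the running example has seen labels such as \mquote{caninae} and \mquote{domestic animals} coming from different ILFs, which overlap rather than exclude, and the model in App.~\ref{appendix:model} explicitly includes overlapping/subsuming/subsumed dependencies between pairs of seen labels in $\setseenundesired$. So the clause ``$\forall \hat{y}_a,\hat{y}_b\in\hat{\sy}$'' can only be read per-ILF (each ILF is itself a multi-class classifier over mutually exclusive categories, as in all of the paper's examples); read over the union $\setseenundesired$ it is inconsistent with the rest of the paper, and your proof does not (and cannot) establish it in that form. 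A further small point: even within one ILF, ``the ILF assigns exactly one label per data point'' does not by itself make the semantic extensions of its categories disjoint (an ILF could in principle have output space $\{\mquote{husky},\mquote{domestic animals}\}$ and still emit one label per point). The disjointness is an assumption on how the category system is designed --- the same assumption that makes the target task multi-class --- and should be stated as such rather than derived from the single-output property.
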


\begin{lemma}
\label{le:constraints}
For a consistent label graph $\labelgraph$ and $\forall \hy_l\in\setseenundesired, \forall\Ylabel_{i},\Ylabel_{j}\in\sy$, if $\labelrelation_{y_i\hy_l}=\lro$, we have  $\labelrelation_{{y_j}\hy_l}\neq\lrsg$ .
\end{lemma}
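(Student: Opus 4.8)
The plan is to argue directly from the set-theoretic interpretation of the label relations, using the consistency of $\labelgraph$. Recall that consistency means exactly that there exist sets $A_i$ (for $y_i$), $A_j$ (for $y_j$), and $C$ (for $\hat{y}_l$) such that the pairwise set relations among them match the prescribed label relations in $\labelgraph$. The relation $\labelrelation_{y_i \hat{y}_l} = \lro$ (overlapping) corresponds to $A_i \cap C \neq \emptyset$, $A_i \setminus C \neq \emptyset$, and $C \setminus A_i \neq \emptyset$ — neither set contains the other, but they intersect. The relation $\labelrelation_{y_j \hat{y}_l} = \lrsg$ (subsuming, i.e.\ $y_j$ subsumes $\hat{y}_l$) corresponds to $C \subsetneqq A_j$.

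First I would fix the realizing sets $A_i, A_j, C$ guaranteed by consistency. Suppose for contradiction that $\labelrelation_{y_j \hat{y}_l} = \lrsg$, so $C \subsetneqq A_j$. Now I would look at the relation $\labelrelation_{y_i y_j}$: since $y_i, y_j \in \sy$ are both desired labels in a multi-class classification setting, Proposition~\ref{re:multiclass} gives $\labelrelation_{y_i y_j} = \lre$, i.e.\ $A_i \cap A_j = \emptyset$. But $\labelrelation_{y_i \hat{y}_l} = \lro$ forces $A_i \cap C \neq \emptyset$; pick $x \in A_i \cap C$. Since $C \subseteq A_j$, we get $x \in A_i \cap A_j$, contradicting $A_i \cap A_j = \emptyset$. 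Hence $\labelrelation_{y_j \hat{y}_l} \neq \lrsg$, as claimed.

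The only subtlety — and the step I would be most careful about — is whether the lemma is meant to hold in full generality or only under the multi-class assumption (Proposition~\ref{re:multiclass}); the surrounding setup and the appearance of Proposition~\ref{re:multiclass} just before this lemma strongly suggest that $y_i, y_j$ being mutually exclusive is available, which is the fact that makes the short argument go through. If instead one wanted this for an arbitrary consistent label graph without the exclusivity of $y_i,y_j$, the statement would be false (e.g.\ take $A_i \subsetneqq A_j$ with $C$ overlapping $A_i$ and contained in $A_j$), so invoking Proposition~\ref{re:multiclass} is essential and I would state that dependence explicitly. The rest is just a one-line chase through the set relations, so there is no real computational obstacle.
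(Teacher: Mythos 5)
Your proof is correct and follows essentially the same route as the paper's: both invoke Proposition~\ref{re:multiclass} to get $\labelrelation_{y_iy_j}=\lre$ (disjointness of the sets for $y_i,y_j$), then derive a contradiction from $A_i\cap C\neq\emptyset$ and $C\subsetneqq A_j$. Your added remark that the exclusivity of desired labels is essential (and that the claim fails for arbitrary consistent graphs) is accurate and a useful clarification, but the core argument is identical.
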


\begin{proof}
For $\forall \Ylabel_{i}, \Ylabel_{j}\in\sy$, based on Proposition \ref{re:multiclass}, we know $\labelrelation_{y_{i}y_{j}}=\lre$, which implies (1) the intersection of the sets labeled by $y_i$ and $y_j$ is empty. For $\forall \hy_l\in\setseenundesired$, if $\labelrelation_{{y_i}\hy_l}=\lro$, we have (2) the intersection of the sets labeled by $y_i$ and $\hy_l$ is not empty. If $\labelrelation_{{y_j}\hy_l}=\lrsg$, which implies (3) $\Ylabel_{j}\supsetneqq\hy_l$. Based on (2)(3), we have $\Ylabel_{i}\cap y_{j}\neq\emptyset$, which is contradictory to (1). Thus, we prove when $\labelrelation_{{y_i}\hy_l}=\lro, \labelrelation_{{y_j}\hy_l}\neq\lrsg$.
\end{proof}

\begin{lemma}
\label{le:informative}
For an informative ILF $\lf_\ilf$ and given any $\Ylabel_d\in\sy$, there exists some $\hy_l \in \sy_j$, such that, $\Dep^2_{d, j, l}(y_d, \hy)=0$, $\forall l\in [k_j]$.
\end{lemma}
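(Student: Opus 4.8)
The plan is to recognize that this lemma is nothing more than the definition of an informative ILF (Definition~\ref{def:informmodel}) re-expressed in terms of the pseudo accuracy factors, so the proof is a short definition chase. First I would recall from Appendix~\ref{appendix:model} that the block $\Dep^2$ collects exactly the pseudo accuracy dependencies: for the desired label $y_d$ and the ILF $\lf_j$, the factor indexed by $l\in[k_j]$ is $\Dep^2_{d,j,l}=\dep^{\acc}_{y_d,\hy^j_l,j}$, i.e.
\[
\Dep^2_{d,j,l}(\y,\hy^j)=\mathbbm{1}\{\y=y_d \,\wedge\, \hy^j=\hy^j_l \,\wedge\, \hy^j_l\in\mathcal{N}(y_d,\sy_j)\},
\]
so that, evaluated at $\y=y_d$, it reduces to $\Dep^2_{d,j,l}(y_d,\hy^j)=\mathbbm{1}\{\hy^j=\hy^j_l \wedge \hy^j_l\in\mathcal{N}(y_d,\sy_j)\}$. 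I would also flag at the outset that in the statement the witness $\hy$ is a possible \emph{output value} of $\lf_j$, i.e.\ an element of $\sy_j$, whereas the universally quantified index $l$ ranges over the $k_j$ pseudo accuracy factors attached to the pair $(y_d,\lf_j)$; keeping these two roles separate is the only point requiring care.

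Next I would invoke Definition~\ref{def:informmodel} with $\Ylabel=y_d$: since $\lf_j$ is informative, there is a data point $\x_i\in\mathcal{D}$ on which $\lf_j$ outputs some $\hy\in\sy_j$ with $\hy\notin\mathcal{N}(y_d,\sy_j)$, equivalently $\labelrelation_{y_d\hy}=\lre$. This $\hy$ is the required witness. Then I would fix an arbitrary $l\in[k_j]$ and substitute $\hy$ into the reduced expression above, giving $\Dep^2_{d,j,l}(y_d,\hy)=\mathbbm{1}\{\hy=\hy^j_l \wedge \hy^j_l\in\mathcal{N}(y_d,\sy_j)\}$. For this to be nonzero we would need both $\hy=\hy^j_l$ and $\hy^j_l\in\mathcal{N}(y_d,\sy_j)$; but $\hy=\hy^j_l$ together with $\hy\notin\mathcal{N}(y_d,\sy_j)$ forces $\hy^j_l\notin\mathcal{N}(y_d,\sy_j)$, a contradiction. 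Hence $\Dep^2_{d,j,l}(y_d,\hy)=0$, and since $l\in[k_j]$ was arbitrary, this holds for all such $l$, which is exactly the assertion.

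I do not expect any genuine obstacle: the argument is essentially one line once the definitions are unwound. If one further adopts the footnote convention of Appendix~\ref{appendix:model}, namely that $\dep^{\acc}_{y,\hy,j}$ is identically $0$ whenever $\hy\notin\mathcal{N}(y,\sy_j)$ and does not actually appear in the model, then only factors $\Dep^2_{d,j,l}$ with $\hy^j_l\in\mathcal{N}(y_d,\sy_j)$ are active, and for those $\Dep^2_{d,j,l}(y_d,\hy)=\mathbbm{1}\{\hy=\hy^j_l\}=0$ since $\hy\notin\mathcal{N}(y_d,\sy_j)$ while $\hy^j_l\in\mathcal{N}(y_d,\sy_j)$; the conclusion is unchanged. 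The only real work is the index bookkeeping and citing the identification $\Dep^2_{d,j,l}=\dep^{\acc}_{y_d,\hy^j_l,j}$ from the appendix's enumeration of dependencies.
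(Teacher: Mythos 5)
Your proof is correct and follows essentially the same route as the paper's: invoke Definition~\ref{def:informmodel} to obtain an output value $\hy\in\sy_j$ with $\hy\notin\mathcal{N}(y_d,\sy_j)$, then observe that each factor $\Dep^2_{d,j,l}=\dep^\acc_{y_d,\hy^j_l,j}$ vanishes at $(y_d,\hy)$ because either $\hy\neq\hy^j_l$ or $\hy^j_l=\hy\notin\mathcal{N}(y_d,\sy_j)$. Your explicit case split and the remark disentangling the witness $\hy$ from the factor index $l$ are, if anything, cleaner than the paper's somewhat terse (and typo-afflicted) version of the same argument.
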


\begin{proof}
Because ILF $\lf_\ilf$ is informative, we know there exists one $\hy_a\in\sy_{\ilf}$ such that $\hy_a$ is exclusive to $\Ylabel_d$, i.e.,  $\hy_a \notin \mathcal{N}(\Ylabel_d, \sy_j)$. Therefore, for any $\hy_l\in \setseenundesired$, either $\hy_a \ne \hy_l$, or $\hy_l=\hy_a \notin\sy_j$, which leads to the conclusion by the definition of $\Dep^2_{d, j, l}=\dep^\acc_{y_d, \hat{y}_l, j}$.
\end{proof}

\subsection{Definitions}

Before the main proof, we connect the indistinguishablity of label relation structure with the dependency structure of \combined by introducing the concept of \emph{symmetry} as follows:

\begin{definition}[Symmetry]
\label{def:symmetric}
For $\Ylabel_{i}, \Ylabel_{j}\in\sy$, we say  $y_i$ and $ y_j$ have symmetric dependency structure if the following equation holds:
\begin{gather}
\nonumber
     \Dep^1_{i,l}=\Dep^1_{j,l}, \forall l\in \nundesiredy;
     \\
    \Dep^2_{i,a,b}=\Dep^2_{j,a,b}, \forall a\in[n], b \in [k_a].
    \label{eq:def_symmetry}
\end{gather}

\end{definition}

Based on the construction of \combined, we know that for $\forall\Ylabel_{i}, \Ylabel_{j}\in\sy, \forall\hy_b\in\setseenundesired$, $\labelrelation_{y_{i}\hy_b}=\labelrelation_{y_{j}\hy_b}$ (the statement in Theorem~\ref{thm:condition}) is \emph{equivalent} to  $y_i$ and $ y_j$ have symmetric dependency structure.

\subsection{Equivalent Statement of Theorem \ref{thm:condition}}

Our main result states that asymmetric is equivalent to distinguishable as in the following theorem, which can readily be seen to be identical to Theorem~\ref{thm:condition} in the main body of the paper:
\begin{theorem}
\label{thm:equiv_asymmetry_distnguishable}
For a probability model defined as Eq. (\ref{eq:exponential_family_definition}) induced from a consistent label graph and informative ILFs, for any pair of $ \Ylabel_{i}, \Ylabel_{j}\in\sy$, $\Ylabel_{i}$ and $\Ylabel_{j}$ are distinguishable if and only if they have asymmetric dependency structure.
\end{theorem}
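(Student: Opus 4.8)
The plan is to prove the two directions of the biconditional separately, and the real work is entirely in the ``symmetric $\Rightarrow$ indistinguishable'' direction; the converse ``asymmetric $\Rightarrow$ distinguishable'' is the hard one and will consume most of the argument.

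\textbf{Easy direction (symmetric $\Rightarrow$ indistinguishable).} Suppose $\Ylabel_i$ and $\Ylabel_j$ have symmetric dependency structure, i.e.\ $\Dep^1_{i,l}=\Dep^1_{j,l}$ for all $l$ and $\Dep^2_{i,a,b}=\Dep^2_{j,a,b}$ for all $a,b$. The idea is to exhibit the witnessing $\tilde\Param$ by simply \emph{swapping the parameter blocks indexed by $y_i$ and $y_j$}: set $\tilde\Param^1_{i,l}=\Param^1_{j,l}$, $\tilde\Param^1_{j,l}=\Param^1_{i,l}$, likewise for $\Param^2$, and leave all other coordinates (including $\Param^3$) unchanged. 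Because the dependency vectors attached to $y_i$ and $y_j$ are literally equal as functions of $(\ylatentcat,\lfoutcat)$, swapping the parameters has the effect of swapping the roles of $y_i$ and $y_j$ in the exponent: $\Param^\top\Dep(\y=y_i,\ylatentcat,\lfoutcat)$ with the new parameters equals the old $\Param^\top\Dep(\y=y_j,\ylatentcat,\lfoutcat)$, and vice versa, while the summand for every other $y$ is untouched. Hence $P_{\tilde\Param}(\y=y_j\mid\ylatentcat,\lfoutcat)=P_\Param(\y=y_i\mid\ylatentcat,\lfoutcat)$, the analogous equality with $i,j$ exchanged, $P_{\tilde\Param}(\y=y\mid\cdot)=P_\Param(\y=y\mid\cdot)$ for $y\neq y_i,y_j$, and — since the normalizing constant $\sum_{\y'}$ is invariant under the swap — $P_{\tilde\Param}(\lfoutcat)=P_\Param(\lfoutcat)$. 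That is exactly the indistinguishability conditions (\ref{eq:indistinguishable1})--(\ref{eq:distinguish_equiv_lambda}), and $\tilde\Param>\mathbf 0$ whenever $\Param>\mathbf 0$.

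\textbf{Hard direction (asymmetric $\Rightarrow$ distinguishable).} Here I must show that if there exists some $\hy_l\in\setseenundesired$ with $\labelrelation_{y_i\hy_l}\neq\labelrelation_{y_j\hy_l}$ (equivalently $\Dep^1_{i,\cdot}\neq\Dep^1_{j,\cdot}$ or $\Dep^2_{i,\cdot}\neq\Dep^2_{j,\cdot}$ as functions), then for a.e.\ $\Param>\mathbf 0$ \emph{no} $\tilde\Param>0$ satisfies (\ref{eq:indistinguishable1})--(\ref{eq:distinguish_equiv_lambda}). The strategy is contrapositive-by-contradiction: assume such a $\tilde\Param$ exists for a positive-measure set of $\Param$, and derive that the label relation structures of $y_i$ and $y_j$ must in fact coincide. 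The main lever is that (\ref{eq:indistinguishable1})--(\ref{eq:indistinguishable2}) plus (\ref{eq:distinguish_equiv_lambda}) force the full joint distributions to be related by the $y_i\leftrightarrow y_j$ transposition: from $P_\Param(\y=y\mid\ylatentcat,\lfoutcat)$ and $P_\Param(\lfoutcat)$ one recovers $P_\Param(\y=y,\ylatentcat,\lfoutcat)$, so the conditions give $P_{\tilde\Param}(\sigma(\y),\ylatentcat,\lfoutcat)=P_\Param(\y,\ylatentcat,\lfoutcat)$ where $\sigma$ swaps $y_i,y_j$. Taking logarithms turns this into a linear identity in the exponents: $\tilde\Param^\top\Dep(\sigma\y,\ylatentcat,\lfoutcat)-\Param^\top\Dep(\y,\ylatentcat,\lfoutcat)$ is a constant $c$ (the log-ratio of partition functions) independent of $(\y,\ylatentcat,\lfoutcat)$. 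Now I would plug in carefully chosen configurations to extract constraints. First, configurations with $\y\notin\{y_i,y_j\}$ immediately give $\tilde\Param^3=\Param^3$ and that the $y$-blocks for $y\neq y_i,y_j$ agree, pinning $c$. Then, comparing $\y=y_i$ against $\y=y_j$ configurations while varying $\ylatentcat$ over all $2^{\hk}$ settings (and using $\lfoutcat$ values that, by Lemma~\ref{le:informative}, can zero out the $\Dep^2$ accuracy terms for the ILF of interest, so as to isolate the $\Dep^1$ contributions) forces $\tilde\Param^1_{j,l}\,\Dep^1_{i,l}(\cdot)=\Param^1_{i,l}\,\Dep^1_{i,l}(\cdot)$-type identities for each $l$; because the four relation-type indicator functions $\dep^e,\dep^o,\dep^{sg},\dep^{sd}$ are linearly independent as functions of $(\y,\bar Y^l)$ — this is where the explicit forms of the dependencies matter — two different relation types cannot be matched by any positive reparametrization, which yields $\labelrelation_{y_i\hy_l}=\labelrelation_{y_j\hy_l}$. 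The consistency hypothesis (via Lemma~\ref{le:constraints}) is needed to rule out degenerate cross-cancellations where an $\lro$/$\lrsg$ mismatch on one seen label could be ``compensated'' through another, and the ``a.e.\ $\Param$'' / positive-measure language handles the measure-zero set where partition-function coincidences are accidental.

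\textbf{Where the difficulty lies.} The bookkeeping obstacle is the $\Dep^2$ accuracy terms: they couple $\y$ directly to $\lfoutcat$ and a priori could let $\tilde\Param$ trade a $\Dep^1$ mismatch against a $\Dep^2$ mismatch. The resolution is exactly the informativeness assumption — Lemma~\ref{le:informative} guarantees, for each desired label $\Ylabel_d$ and each ILF $\lf_j$, an $\lfoutcat$ value on which all of $\lf_j$'s accuracy dependencies vanish, so I can test $\Dep^1$-constraints in isolation and, symmetrically, test each $\Dep^2$-block in isolation; and the $y$-swap structure means a $\Dep^2$ mismatch for $\lf_j$ is itself a genuine obstruction to (\ref{eq:distinguish_equiv_lambda}) because $P_\Param(\lfoutcat)$ depends on those blocks. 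Putting these isolated constraints together and invoking the linear independence of the relation-type factors gives $\labelrelation_{y_ik}=\labelrelation_{y_jk}$ for every $k\in\setseenundesired$, contradicting asymmetry and completing the direction.
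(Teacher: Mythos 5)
Your proposal is correct and follows essentially the same route as the paper: the symmetric direction via the block-swap construction $\tilde\Param^1_i=\Param^1_j$, $\tilde\Param^1_j=\Param^1_i$ (and likewise for $\Param^2$), and the asymmetric direction by assuming a witnessing $\tilde\Param$, taking log-ratios, plugging in $\ylatentcat\in\{\boldsymbol{e}_l,\boldsymbol{0}\}$ to isolate the $\Dep^1$ blocks, invoking informativeness to zero out the $\Dep^2$ accuracy terms, using consistency (Lemma~\ref{le:constraints}) to exclude the $\lro$/$\lrsg$ compensation, and concluding that the resulting linear constraints confine $\Param$ to a zero-measure set. The only cosmetic divergence is that the paper lets $\Param^3$ cancel in the conditional ratios rather than pinning $\tilde\Param^3=\Param^3$, but this does not change the argument.
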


% \subsection{Explanation of Remark \ref{re:super}}
% Before we moving on to the proof, we give a detailed explanation of Remark \ref{re:super}. Based on Definition \ref{def:symmetric} and Theorem \ref{thm:condition}, for any pair $\Ylabel_{d_i}$, $\Ylabel_{d_j}$ in $\sy$ of a WG-LG to be distinguishable, there has to be $\Lfoutcat[\Ylatentcat(\Ylabel_{d_i})]\ne \Lfoutcat[\Ylatentcat(\Ylabel_{d_j})]$. On the other hand, for any pair $\Ylabel_{d_i}$, $\Ylabel_{d_j}$ in $\sy$ of a PLRM to be distinguishable, there can also be one $\Ylabel_a\in \hat{\sy}$, such that, $\dep_{\Ylabel_{d_i},\Ylabel_a}\ne\dep_{\Ylabel_{d_j},\Ylabel_a}$ despite the requirement in WS-LG.

\subsection{Proof of the necessity in Theorem \ref{thm:equiv_asymmetry_distnguishable}: Necessary Condition}
\label{sec:thm_3_nece}
We first prove that for any  $y_i, y_j\in \sy$, $y_i$ and $y_j$ have asymmetric dependency structure is the \emph{necessary} condition of that they are distinguishable.

\begin{proof}[Proof of Theorem \ref{thm:equiv_asymmetry_distnguishable}]
We prove this theorem by reduction to absurdity. Suppose $y_i$ and $y_j$ are symmetric. Then, by Eq. (\ref{eq:exponential_family_definition}), the distribution of $Y$ condition on any $\ylatentcat$ and $\lfoutcat$ can be calculated as follows:
\begin{align*}
     &
     \mathbb{P}_{\Param}(Y=y_i|\ylatentcat,\lfoutcat)= \frac{ \mathbb{P}_{\Param}(y_i,\ylatentcat,\lfoutcat)}{\mathbb{P}_{\Param}(\ylatentcat,\lfoutcat)}
     .
\end{align*}

On the other hand, applying $Y=y_i$ in the definition of $\Dep^2$ leads to
\begin{gather*}
    % \Dep^1_{r,l}(y_i,\cdot)=0, \forall r\in [k], r\ne i, \forall l\in [\hk],
    % \\
    \Dep^2_{r,a,l}(y_i,\cdot)=0, \forall r\in [k], r\ne i, \forall a \in [n], \forall l\in [k_a].
\end{gather*}
We further separate $\Dep^1$ into $(\Dep^1_i)_{i=1}^k$, where $\Dep^1_i$ collects all the dependency in $\Dep^1$  with $y_i$ involved, i.e., 
\begin{equation*}
    \Dep^1_i=(\Dep^1_{i,j})_{j=1}^{\hk},
\end{equation*}
with the corresponding parameters respectively denoted as $\Param^1_i$ with $\Param^1=(\Param^1_i)_{i=1}^{k}$. Similarly, $\Dep^2$ is also divided into  $(\Dep^2_i)_{i=1}^k$ following the same routine and $\Theta^2$ is respectively divided into $(\Param^2_{i})_{i=1}^k$.
Specifically, if $y_i$ and $y_j$ are symmetric, we further have 
\begin{align*}
    \Dep^1_i=\Dep^1_j, \Dep^2_i=\Dep^2_j.
\end{align*}

Based on the notation, 
$\mathbb{P}_{\Param}(Y=y_i|\ylatentcat,\lfoutcat)$ can then be represented as 
\begin{align}
    \nonumber
     &\mathbb{P}_{\Param}(Y=y_i|\ylatentcat,\lfoutcat)\left(\sum_{Y'}\exp{\left( (\Param^1)^\top \Dep^1 (\y', \ylatentcat)+(\Param^2)^\top\Dep^2 (\y', \lfoutcat) +(\Param^3)^\mathsf{T} \Dep^{3}(\ylatentcat,\lfoutcat)\right)}\right)
     \\
     \nonumber
     =& \exp{\left(\sum_{l=1}^k (\Param^1_l)^\mathsf{T} \Dep_l^1 (y_i, \ylatentcat)+\sum_{l=1}^k(\Param^2_l)^\mathsf{T}\Dep_l^2 (y_i, \lfoutcat) +(\Param^3)^\mathsf{T} \Dep^3(\ylatentcat,\lfoutcat)\right)}
     \end{align}
     which further leads to
     \begin{align}
 \nonumber
     &\mathbb{P}_{\Param}(Y=y_i|\ylatentcat,\lfoutcat)\left(\sum_{Y'}\exp{\left( (\Param^1)^\top \Dep^1 (\y', \ylatentcat)+(\Param^2)^\top\Dep^2 (\y', \lfoutcat) \right)}\right)
     \\
 \label{eq:y_i_1_y_j_0}
     =&\exp{\left(\sum_{l=1}^k(\Param^1_l)^\mathsf{T} \Dep_l^1 (y_i, \ylatentcat)+(\Param^2_i)^\mathsf{T} \Dep_i^2 (y_i, \lfoutcat) \right)}
\end{align}
which is independent of  $\Param^3$. Similarly,
\begin{align}
 \nonumber
     &\mathbb{P}_{\Param}(Y=y_j|\ylatentcat,\lfoutcat)\left(\sum_{Y'}\exp{\left( (\Param^1)^\top \Dep^1 (\y', \ylatentcat)+(\Param^2)^\top\Dep^2 (\y', \lfoutcat) \right)}\right)
     \\
 \label{eq:y_i_0_y_j_1}
     =&\exp{\left(\sum_{l=1}^k(\Param^1_l)^\mathsf{T} \Dep_l^1 (y_i, \ylatentcat)+(\Param^2_j)^\mathsf{T} \Dep_j^2 (y_j, \lfoutcat) \right)}
\end{align}
and $\forall l\in [k]/\{i,j\}$,
\begin{align}
 \nonumber
     &\mathbb{P}_{\Param}(Y=y_l|\ylatentcat,\lfoutcat)\left(\sum_{Y'}\exp{\left( (\Param^1)^\top \Dep^1 (\y', \ylatentcat)+(\Param^2)^\top\Dep^2 (\y', \lfoutcat) \right)}\right)
     \\
 \label{eq:y_i_0_y_j_0}
     =&\exp{\left(\sum_{l=1}^k(\Param^1_l)^\mathsf{T} \Dep_l^1 (y_i, \ylatentcat)+(\Param^2_l)^\mathsf{T} \Dep_l^2 (y_l, \lfoutcat) \right)}
\end{align}

Let $\tPara$ be defined as follows:   
\begin{equation*}
\tilde{\Param}_i^1=\Param_j^1, \tilde{\Param}_j^1=\Param_i^1, \tilde{\Param}_l^1=\Param_l^1, \forall l\notin \{i,j\},
\end{equation*}
\begin{equation*}
\tilde{\Param}_i^2=\Param_j^2, \tilde{\Param}_j^2=\Param_i^2, \tilde{\Param}_l^2=\Param_l^2, \forall l\notin \{i,j\},
\end{equation*}
and 
\begin{equation*}
\tilde{\Param}^3=\Param^3.
\end{equation*}
 We then have 
\begin{align*}
    &\frac{\mathbb{P}_{\Param}(Y=y_i|\ylatentcat,\lfoutcat)}{\mathbb{P}_{\tilde{\Param}}(Y=y_j|\ylatentcat,\lfoutcat)}
    \\
    =&  \frac{\left(\sum_{Y'}\exp{\left( (\tPara^1)^\top \Dep^1 (\y', \ylatentcat)+(\tPara^2)^\top\Dep^2 (\y', \lfoutcat) \right)}\right)}{\left(\sum_{Y'}\exp{\left( (\Param^1)^\top \Dep^1 (\y', \ylatentcat)+(\Param^2)^\top\Dep^2 (\y', \lfoutcat) \right)}\right) }
    \\
    \cdot&\exp{((\Param^1_i)^\mathsf{T} (\Dep_i^1 (y_i, \ylatentcat)-\Dep_j^1 (y_j, \ylatentcat))+(\Param^1_j)^\mathsf{T} (\Dep_j^1 (y_i, \ylatentcat)-\Dep_i^1 (y_j, \ylatentcat))+(\Param^2_i)^\mathsf{T} (\Dep_i^2 (y_i, \lfoutcat)-\Dep_j^2 (y_j, \lfoutcat)))}
    \\
    =&  \frac{\left(\sum_{Y'}\exp{\left( (\tPara^1)^\top \Dep^1 (\y', \ylatentcat)+(\tPara^2)^\top\Dep^2 (\y', \lfoutcat) \right)}\right)}{\left(\sum_{Y'}\exp{\left( (\Param^1)^\top \Dep^1 (\y', \ylatentcat)+(\Param^2)^\top\Dep^2 (\y', \lfoutcat) \right)}\right) }.
\end{align*}
Similarly,
\begin{align*}
    &\frac{\mathbb{P}_{\Param}(Y=y_j|\ylatentcat,\lfoutcat)}{\mathbb{P}_{\tilde{\Param}}(Y=y_i|\ylatentcat,\lfoutcat)}
    \\
    =&  \frac{\left(\sum_{Y'}\exp{\left( (\tPara^1)^\top \Dep^1 (\y', \ylatentcat)+(\tPara^2)^\top\Dep^2 (\y', \lfoutcat) \right)}\right)}{\left(\sum_{Y'}\exp{\left( (\Param^1)^\top \Dep^1 (\y', \ylatentcat)+(\Param^2)^\top\Dep^2 (\y', \lfoutcat) \right)}\right) }
    \\
    \cdot&\exp{((\Param^1_j)^\mathsf{T} (\Dep_j^1 (y_j, \ylatentcat)-\Dep_i^1 (y_i, \ylatentcat))+(\Param^1_i)^\mathsf{T} (\Dep_i^1 (y_j, \ylatentcat)-\Dep_j^1 (y_i, \ylatentcat))+(\Param^2_j)^\mathsf{T} (\Dep_j^2 (y_j, \lfoutcat)-\Dep_i^2 (y_i, \lfoutcat)))}
    \\
    =&  \frac{\left(\sum_{Y'}\exp{\left( (\tPara^1)^\top \Dep^1 (\y', \ylatentcat)+(\tPara^2)^\top\Dep^2 (\y', \lfoutcat) \right)}\right)}{\left(\sum_{Y'}\exp{\left( (\Param^1)^\top \Dep^1 (\y', \ylatentcat)+(\Param^2)^\top\Dep^2 (\y', \lfoutcat) \right)}\right) }.
\end{align*}
and $\forall l\in [k]/\{i,j\}$,
\begin{align*}
    \frac{\mathbb{P}_{\Param}(Y=y_l|\ylatentcat,\lfoutcat)}{\mathbb{P}_{\tilde{\Param}}(Y=y_l|\ylatentcat,\lfoutcat)}
    =&  \frac{\left(\sum_{Y'}\exp{\left( (\tPara^1)^\top \Dep^1 (\y', \ylatentcat)+(\tPara^2)^\top\Dep^2 (\y', \lfoutcat) \right)}\right)}{\left(\sum_{Y'}\exp{\left( (\Param^1)^\top \Dep^1 (\y', \ylatentcat)+(\Param^2)^\top\Dep^2 (\y', \lfoutcat) \right)}\right) }.
\end{align*}
% which leads to $\Y[i]$ and $\Y[j]$ indistinguishable.

Similarly, we have 
\begin{equation*}
    \frac{\mathbb{P}_{\Param}(Y=unknown|\ylatentcat,\lfoutcat)}{\mathbb{P}_{\tilde{\Param}}(Y=unknown|\ylatentcat,\lfoutcat)}
    =  \frac{\left(\sum_{Y'}\exp{\left( (\tPara^1)^\top \Dep^1 (\y', \ylatentcat)+(\tPara^2)^\top\Dep^2 (\y', \lfoutcat) \right)}\right)}{\left(\sum_{Y'}\exp{\left( (\Param^1)^\top \Dep^1 (\y', \ylatentcat)+(\Param^2)^\top\Dep^2 (\y', \lfoutcat) \right)}\right) }.
\end{equation*}
Therefore, we have
\begin{align*}
% \label{eq:1}
   \frac{\mathbb{P}_{\Param}(Y=y_i|\ylatentcat,\lfoutcat)}{\mathbb{P}_{\tilde{\Param}}(Y=y_j|\ylatentcat,\lfoutcat)} =  \frac{\mathbb{P}_{\Param}(Y=y_j|\ylatentcat,\lfoutcat)}{\mathbb{P}_{\tilde{\Param}}(Y=y_i|\ylatentcat,\lfoutcat)}= \frac{\mathbb{P}_{\Param}(Y=y|\ylatentcat,\lfoutcat)}{\mathbb{P}_{\tilde{\Param}}(Y=y|\ylatentcat,\lfoutcat)}, \forall y\in \sy/\{y_i,y_j\}.
\end{align*}
Since
\begin{align*}
    {\mathbb{P}_{\Param}(Y=y_i|\ylatentcat,\lfoutcat)} +
      {\mathbb{P}_{\Param}(Y=y_j|\ylatentcat,\lfoutcat)} +\sum_{l\ne i,j  }
    {\mathbb{P}_{\Param}(Y=y_l|\ylatentcat,\lfoutcat)} = 1,
\end{align*}
and
\begin{align*}
     {\mathbb{P}_{\tPara}(Y=y_j|\ylatentcat,\lfoutcat)} +
      {\mathbb{P}_{\tPara}(Y=y_i|\ylatentcat,\lfoutcat)} +\sum_{l\ne i,j  }
    {\mathbb{P}_{\tPara}(Y=y_l|\ylatentcat,\lfoutcat)} = 1,
\end{align*}
 we obtain that
\begin{align*}
  {\mathbb{P}_{\Param}(Y=y_i|\ylatentcat,\lfoutcat)}&={\mathbb{P}_{\tPara}(Y=y_j|\ylatentcat,\lfoutcat)}\\
     {\mathbb{P}_{\Param}(Y=y_j|\ylatentcat,\lfoutcat)}&={\mathbb{P}_{\tPara}(Y=y_i|\ylatentcat,\lfoutcat)}\\
      {\mathbb{P}_{\Param}(Y=y_l|\ylatentcat,\lfoutcat)}&={\mathbb{P}_{\tPara}(Y=y_l|\ylatentcat,\lfoutcat)},
\end{align*}
which indicates $\Ylabel_{i}$ and $\Ylabel_{j}$ indistinguishable, and leads to a contradictory.

The proof is completed.
\end{proof}

\subsection{Proof of Theorem \ref{thm:equiv_asymmetry_distnguishable}: Sufficient Condition}
\label{sec:proof_thm_3_suff}
We then prove  that for any  $y_i, y_j\in \sy$, $y_i$ and $y_j$ have asymmetric dependency structure is the \emph{sufficient} condition of that they are distinguishable.

\begin{proof}
We use the same notations $(\Param^1_i)_{i=1}^k$, $(\Param^2_i)_{i=1}^k$, and $\Param^3$ in Appendix \ref{sec:thm_3_nece} to denote the separation of the parameter $\Param$. Let $\Param$ be any parameter satisfying that there exists a parameter $\tPara$, such that Eq. (\ref{eq:indistinguishable1}-\ref{eq:indistinguishable2}) holds. By Eqs. (\ref{eq:y_i_1_y_j_0}), (\ref{eq:y_i_0_y_j_1}), and Eq. (\ref{eq:y_i_0_y_j_0}) together with Eqs. (\ref{eq:indistinguishable1}-\ref{eq:indistinguishable2}), we have $\forall r\in [k], r\ne i,j$,
\begin{align*}
&\frac{\exp{((\Param^1_i)^\mathsf{T} \Dep_i^1 (y_i, \ylatentcat)+(\Param^1_j)^\mathsf{T} \Dep_j^1 (y_i, \ylatentcat)+(\Param^2_i)^\mathsf{T} \Dep_i^2 (y_i, \lfoutcat))}}{\exp{((\tPara^1_i)^\mathsf{T} \Dep_i^1 (y_j, \ylatentcat)+(\tPara^1_j)^\mathsf{T} \Dep_j^1 (y_j, \ylatentcat)+(\tPara^2_j)^\mathsf{T} \Dep_j^2 (y_j, \lfoutcat))}}
\\
=&\frac{\exp{((\Param^1_i)^\mathsf{T} \Dep_i^1 (y_j, \ylatentcat)+(\Param^1_j)^\mathsf{T} \Dep_j^1 (y_j, \ylatentcat)+(\Param^2_j)^\mathsf{T} \Dep_j^2 (y_j, \lfoutcat))}}{\exp{((\tPara^1_i)^\mathsf{T} \Dep_i^1 (y_i, \ylatentcat)+(\tPara^1_j)^\mathsf{T} \Dep_j^1 (y_i, \ylatentcat)+(\tPara^2_i)^\mathsf{T} \Dep_i^2 (y_i, \lfoutcat))}}
\\
=&\frac{\exp{((\Param^1_i)^\mathsf{T} \Dep_i^1 (y_r, \ylatentcat)+(\Param^1_j)^\mathsf{T} \Dep_j^1 (y_r, \ylatentcat))}}{\exp{((\tPara^1_i)^\mathsf{T} \Dep_i^1 (y_r, \ylatentcat)+(\tPara^1_j)^\mathsf{T} \Dep_j^1 (y_r, \ylatentcat))}}=\frac{\exp{((\Param^1_i)^\mathsf{T} \Dep_i^1 (y_j, \ylatentcat)+(\Param^1_j)^\mathsf{T} \Dep_j^1 (y_i, \ylatentcat))}}{\exp{((\tPara^1_i)^\mathsf{T} \Dep_i^1 (y_j, \ylatentcat)+(\tPara^1_j)^\mathsf{T} \Dep_j^1 (y_i, \ylatentcat))}}
.
\end{align*}

By simple rearranging, we have
\begin{align}
\nonumber
&((\Param^1_i)^\mathsf{T} \Dep_i^1 (y_i, \ylatentcat)+(\Param^1_j)^\mathsf{T} \Dep_j^1 (y_i, \ylatentcat)+(\Param^2_i)^\mathsf{T} \Dep_i^2 (y_i, \lfoutcat)+(\Param^2_j)^\mathsf{T} \Dep_j^2 (y_i, \lfoutcat))
  \\\nonumber
  &-((\tilde{\Param}^1_i)^\mathsf{T} \Dep_i^1 (y_j, \ylatentcat)+(\tilde{\Param}^1_j)^\mathsf{T} \Dep_j^1 (y_j, \ylatentcat)+(\tilde{\Param}^2_i)^\mathsf{T} \Dep_i^2 (y_j, \lfoutcat)+(\tilde{\Param}^2_j)^\mathsf{T} \Dep_j^2 (y_j, \lfoutcat))
  \\
  \nonumber
=&((\Param^1_i)^\mathsf{T} \Dep_i^1 (y_j, \ylatentcat)+(\Param^1_j)^\mathsf{T} \Dep_j^1 (y_j, \ylatentcat)+(\Param^2_i)^\mathsf{T} \Dep_i^2 (y_j, \lfoutcat)+(\Param^2_j)^\mathsf{T} \Dep_j^2 (y_j, \lfoutcat))
  \\\nonumber
  &-((\tilde{\Param}^1_i)^\mathsf{T} \Dep_i^1 (y_i, \ylatentcat)+(\tilde{\Param}^1_j)^\mathsf{T} \Dep_j^1 (y_i, \ylatentcat)+(\tilde{\Param}^2_i)^\mathsf{T} \Dep_i^2 (y_i, \lfoutcat)+(\tilde{\Param}^2_j)^\mathsf{T} \Dep_j^2 (y_i, \lfoutcat))
  \\  \label{eq: define_g}
  =&((\Param^1_i)^\mathsf{T} \Dep_i^1 (y_j, \ylatentcat)+(\Param^1_j)^\mathsf{T} \Dep_j^1 (y_i, \ylatentcat))
-((\tilde{\Param}^1_i)^\mathsf{T} \Dep_i^1 (y_j, \ylatentcat)+(\tilde{\Param}^1_j)^\mathsf{T} \Dep_j^1 (y_i, \ylatentcat)).
\end{align}

By the equality between the second term and the third term in Eq. (\ref{eq: define_g}), we obtain that
\begin{align}
\nonumber
    &(\Param^1_j)^\mathsf{T} \Dep_j^1 (y_i, \ylatentcat)
  -(\tilde{\Param}^1_i)^\mathsf{T} \Dep_i^1 (y_j, \ylatentcat)
  \\
  \label{eq:y_2_change}
  =&((\Param^1_j)^\mathsf{T} \Dep_j^1 (y_j, \ylatentcat)+(\Param^2_j)^\mathsf{T} \Dep_j^2 (y_j, \lfoutcat))
  -((\tilde{\Param}^1_i)^\mathsf{T} \Dep_i^1 (y_i, \ylatentcat)+(\tilde{\Param}^2_i)^\mathsf{T} \Dep_i^2 (y_i, \lfoutcat)).
\end{align}

We further set $\ylatentcat$ in Eq. (\ref{eq:y_2_change}) respectively to $\boldsymbol{e}_l$ (the one hot vector with its $l$-th position being $1$) and $\boldsymbol{0}$ for any fixed $l\in [\hk]$, i.e.,
\begin{align*}
    &((\Param^1_j)^\mathsf{T} \Dep_j^1 (y_i, \boldsymbol{e}_l)-(\Param^1_j)^\mathsf{T} \Dep_j^1 (y_i, \boldsymbol{0}))
  -((\tilde{\Param}^1_i)^\mathsf{T} \Dep_i^1 (y_j, \boldsymbol{e}_l)-(\tilde{\Param}^1_i)^\mathsf{T} \Dep_i^1 (y_j, \boldsymbol{0}))
  \\
  =&((\Param^1_j)^\mathsf{T} \Dep_j^1 (y_j, \boldsymbol{e}_l)-(\Param^1_j)^\mathsf{T} \Dep_j^1 (y_j, \boldsymbol{0}))
  -((\tilde{\Param}^1_i)^\mathsf{T} \Dep_i^1 (y_i, \boldsymbol{e}_l)-(\tilde{\Param}^1_i)^\mathsf{T} \Dep_i^1 (y_i, \boldsymbol{0})),
\end{align*}
which by simple rearranging further leads to
\begin{equation*}
    \Param^1_{j,l} (\Dep_{j,l}^1 (y_j, 1)-\Dep_{j,l}^1 (y_j, 0)-\Dep_{j,l}^1 (y_i, 1)) = \tilde{\Param}^1_{i,l}  (\Dep_{i,l}^1 (y_i, 1)-\Dep_{i,l}^1 (y_i, 0)-\Dep_{i,l}^1 (y_j, 1)).
\end{equation*}
% \begin{equation*}
%     \Param^1_{j,a} (\Dep_{j,a}^1 (1, 1)-\Dep_{j,a}^1 (1, 0)-\Dep_{j,a}^1 (0, 1)) = \tilde{\Param}^1_{i,a}  (\Dep_{i,a}^1 (1, 1)-\Dep_{i,a}^1 (1, 0)-\Dep_{i,a}^1 (0, 1)).
% \end{equation*}

Since $ \Param^1_{j,l}, \tilde{\Param}^1_{i,l}>0 $, and by definition we have 
\[\vert \Dep_{j,l}^1 (y_j, 1)-\Dep_{j,l}^1 (y_j, 0)-\Dep_{j,l}^1 (y_i, 1)\vert =1, \] 
and  
\[\vert \Dep_{i,l}^1 (y_i, 1)-\Dep_{i,l}^1 (y_i, 0)-\Dep_{i,l}^1 (y_j, 1)\vert =1, \]
we obtain $ \Param^1_{j,l}=\tilde{\Param}^1_{i,l}$, and 
\begin{equation}
\label{eq:h_2_and_h_3_relationship}
    \Dep_{j,l}^1 (y_j, 1)-\Dep_{j,l}^1 (y_j, 0)-\Dep_{j,l}^1 (y_i, 1)=\Dep_{i,l}^1 (y_i, 1)-\Dep_{i,l}^1 (y_i, 0)-\Dep_{i,l}^1 (y_j, 1).
\end{equation}

Therefore, either $ t_{y_j \hy_l}\in \{\lro,\lrsd,\lrsg\}$ and $t_{y_i \hy_l} \in \{\lro,\lrsd,\lrsg\}$, or $ t_{y_j \hy_l}=\lre$ and $t_{y_i \hy_l} =\lre$,
which by definition further indicates that $\Dep^2_i=\Dep^2_j$ (recall the way we build dependency between $\y$ and $\lfoutcat$).

As  $l$ is arbitrarily picked, we then have $\Param^1_{j} $ is equal to $\tilde{\Param}^1_{i}$ component-wisely. 

By the equality between the first term and the third term in Eq. (\ref{eq: define_g}) and following exact the same routine, we also have $\tilde{\Param}^1_{j}=\Param^1_{i}$. 

On the other hand, for any $r\in [\hk]$,
% since $\lfoutcat$ is concatenation of label vectors output by all ILFs, \ie, $\lfoutcat=(\lfout_l)_{l=1}^{n}$, where $\lfout_l$ is the output of the $l$-th mode $\lf_l$. For any  $\Ylabel_a\in\setseenundesired$ and $\lf_l$,  such that there is a dependency between $\Y[i]$ and $\boldsymbol{\lfout}_{l}[a]$, we have $\dep^2_{i,a,l}(1,1)=1$. Therefore,
fixing $\ylatentcat$ and $\lfoutcat^s$ ($\forall s\ne r$), and setting $\lfoutcat_r=\hy^r_l$  ($ l\in k_r$, $\hy^r_l\in \mathcal{N}(y_j,\sy_l)$) in Eq. (\ref{eq:y_2_change}), we have   
\begin{align*}
    &(\Param^1_j)^\mathsf{T} \Dep_j^1 (y_j, \ylatentcat)+(\tilde{\Param}^1_i)^\mathsf{T} \Dep_i^1 (y_j, \ylatentcat)+\Param^2_{j,r,l} \Dep_{j,r,l}^2 (y_j, \hy^r_l)+\sum_{s\ne r}\Param^2_{j,s} \Dep_{j,s}^2 (y_j, Y^s)
 \\
  =&(\Param^1_j)^\mathsf{T} \Dep_j^1 (y_i, \ylatentcat)
  +(\tilde{\Param}^1_i)^\mathsf{T} \Dep_i^1 (y_i, \ylatentcat)+\tilde{\Param}^2_{i,r,l} \Dep_{i,r,l}^2 (y_i, \hy^r_l)+\sum_{s\ne r }\tilde{\Param}^2_{i,s} \Dep_{i,s}^2 (y_i, \lfoutcat^s).
\end{align*}

On the other hand, by Lemma \ref{le:informative}, there exists some $p$, s.t., $\hy^r_p \notin \mathcal{N}(y_j,\sy_r)$ (which by $\Phi^2_i=\Phi^2_j$ further leads to  $\hy^r_p \notin \mathcal{N}(y_i,\sy_r)$). Setting $\lfoutcat_r=\hy^r_l$ leads to
\begin{align*}
    &(\Param^1_j)^\mathsf{T} \Dep_j^1 (y_j, \ylatentcat)+(\tilde{\Param}^1_i)^\mathsf{T} \Dep_i^1 (y_j, \ylatentcat)+\sum_{s\ne r}\Param^2_{j,s} \Dep_{j,s}^2 (y_j, Y^s)
 \\
  =&(\Param^1_j)^\mathsf{T} \Dep_j^1 (y_i, \ylatentcat)
  +(\tilde{\Param}^1_i)^\mathsf{T} \Dep_i^1 (y_i, \ylatentcat)+\sum_{s\ne r }\tilde{\Param}^2_{i,s} \Dep_{i,s}^2 (y_i, \lfoutcat^s).
\end{align*}

Subtracting the above two equations leads to $\Param^2_{j,a,l}=\tilde{\Param}^2_{i,a,l}$. Since $a$ and $l$ are arbitrarily picked, we conclude that $\Param^2_{j}=\tilde{\Param}^2_{i}$.  Following the same routine, we also have $\Param^2_{i}=\tilde{\Param}^2_{j}$.

Therefore, by applying $\Param^1_{j}=\tilde{\Param}^1_{i}$, $\Param^1_{i}=\tilde{\Param}^1_{j}$, $\Param^2_{j}=\tilde{\Param}^2_{i}$, and $\Param^2_{i}=\tilde{\Param}^2_{j}$ in Eq. (\ref{eq: define_g}),  we have 
% for any $(\y[i],\y[j])\in\{0,1\}\times\{0,1\}/\{(1,1)\}$, 
% \begin{align*}
%   g(\y, \lfoutcat)=((\Param^1_i)^\mathsf{T} \Dep_i^1 (\y[i], \ylatentcat)+(\Param^1_j)^\mathsf{T} \Dep_j^1 (\y[j], \ylatentcat))
%   -((\Param^1_j)^\mathsf{T} \Dep_i^1 (\y[j], \ylatentcat)+(\Param^1_i)^\mathsf{T} \Dep_j^1 (\y[i], \ylatentcat)),
% \end{align*}
% which is equivalent to
\begin{gather*}
    (\Param^1_i)^\mathsf{T} \Dep_i^1 (y_i, \ylatentcat)
  -(\Param^1_i)^\mathsf{T} \Dep_j^1 (y_j, \ylatentcat)= (\Param^1_i)^\mathsf{T} \Dep_i^1 (y_j, \ylatentcat)
  -(\Param^1_i)^\mathsf{T} \Dep_j^1 (y_i, \ylatentcat),
  \\
  (\Param^1_j)^\mathsf{T} \Dep_j^1 (y_j, \ylatentcat)
  -(\Param^1_j)^\mathsf{T} \Dep_i^1 (y_i, \ylatentcat)= (\Param^1_j)^\mathsf{T} \Dep_j^1 (y_i, \ylatentcat)
  -(\Param^1_j)^\mathsf{T} \Dep_i^1 (y_j, \ylatentcat).
\end{gather*}

Let $\ylatentcat=\boldsymbol{1}_{\nundesiredy}$ (i.e., the $\hk$-dimension all $1$ vector), we have 
\begin{gather}
\label{eq:constraint_i}
    (\Param^1_i)^\mathsf{T} ((\Dep_i^1 (y_i, \mathbf{1}_{\nundesiredy})-\Dep_i^1 (y_j, \mathbf{1}_{\nundesiredy}))-((\Dep_j^1 (y_j, \mathbf{1}_{\nundesiredy})-\Dep_j^1 (y_i, \mathbf{1}_{\nundesiredy}))))=0,
  \\
 \label{eq:constraint_j}
    (\Param^1_j)^\mathsf{T} ((\Dep_i^1 (y_i, \mathbf{1}_{\nundesiredy})-\Dep_i^1 (y_j, \mathbf{1}_{\nundesiredy}))-((\Dep_j^1 (y_j, \mathbf{1}_{\nundesiredy})-\Dep_j^1 (y_i, \mathbf{1}_{\nundesiredy}))))=0.
\end{gather}

Since $y_i$ and $y_j$ are asymmetric, we have that there exists $l$, such that $t_{y_i\hy_l}\ne t_{y_j\hy_l}$. Concretely, by Eq. (\ref{eq:h_2_and_h_3_relationship}), we have  $t_{y_i\hy_l}\in\{\lro,\lrsd,\lrsg\}$,   $t_{y_j\hy_l}=\{\lro,\lrsd,\lrsg\}$, and $t_{y_i\hy_l}\ne t_{y_j\hy_l} $. On the other hand, 
\begin{equation*}
    \Dep_{i,l}^1 (y_i, 1)-\Dep_{i,l}^1 (y_j, 1))=\Dep_{j,l}^1 (y_j, 1)-\Dep_{j,l}^1 (y_i, 1),
\end{equation*}
if and only if $t_{y_i\hy_l}=\lro$, $t_{y_j\hy_l}=\lrsg$, or  $t_{y_j\hy_l}=\lro$, $t_{y_i\hy_l}=\lrsg$, which contradicts Lemma \ref{le:constraints}.

Therefore, 
\[\Dep_{i,l}^1 (y_i, 1)-\Dep_{i,l}^1 (y_j, 1))\ne\Dep_{j,l}^1 (y_j, 1)-\Dep_{j,l}^1 (y_i, 1).\]
In this case, solutions of $\Param_i^1,\Param_j^1$ subject to respectively Eqs. (\ref{eq:constraint_i}) and (\ref{eq:constraint_j})   lie along a zero-measure set.

The proof is completed.
\end{proof}

\section{Proof of Theorem~\ref{thm:main_bound}}
\label{app:theo1}

\subsection{Learning Algorithm}
We first present the algorithm for producing $\hat{\Param}$ and $\hat{\w}$ in Algorithm~\ref{algo:WIS}.
\begin{algorithm}[h]
\caption{WIS}
\label{algo:WIS}
\begin{algorithmic}
    \REQUIRE Step size $\eta$, dataset
      $D \subset \mathcal{X}$, and initial parameter $\Param_0$.
    \STATE $\hat{\Param} \rightarrow \Param_0$.
    \FOR{\textbf{all} $\x \in D$}
      \STATE Independently sample $(\y, \ylatentcat, \lfoutcat)$ from $\pi_{\hat{\Param} }$, and
        $(\y^\prime, \ylatentcat^\prime,\lfoutcat^\prime )$ from $\pi_{\hat{\Param} }$ conditionally
        given $\lfoutcat^\prime = \lfoutcat(\x)$.
      \STATE $\hat{\Param}  \leftarrow \hat{\Param}  + \eta (
        \Dep(\y, \ylatentcat, \lfoutcat)
        -
        \Dep(\y^\prime, \ylatentcat^\prime,\lfoutcat^\prime )
      )$.
    %   \STATE $\theta = P_{\Theta}(\Param)$ \qquad
    %   \(\triangleright\) \textit{Here, $P_{\Theta}$ denotes orthogonal projection onto
    %   $\Theta$.}
    \ENDFOR
    \STATE Compute $\hat \w$ as described in (\ref{eqn:expected_loss}) using $\hat \Param$.
    % \RETURN $(\Param, \hat \w)$
    \OUTPUT $(\hat{\Param} , \hat \w)$
\end{algorithmic}
\end{algorithm}

\subsection{Assumptions}
% \subsection{Formal Statement of Assumption \ref{assum: generalization}}
\label{appen: assumption}

% \begin{assumption}
% \label{assum: generalization}
% We need the following assumptions:
% \begin{itemize}
%     \item The ground truth should lie in the function family we are trying to learn;
%     \item ILFS provide sufficient information to accurately identify the true label vector;
%     \item Accurately learn the parameters from data distribution is possible;
%     \item We are reasonably certain in our guess of latent variables;
%     \item The end model has small generalization error.
% \end{itemize}
% \end{assumption}

First, the problem distribution
$\pi^*$ needs to be accurately modeled by some distribution $\Param^*$ in the family that we are trying to learn:
\small
\begin{equation}
  \label{eq:cond1}
  \exists \Param^* \text{ s.t. }
  \forall (\y, \lfoutcat), \:
  \p_{(\x,\y) \sim \pi^*}(\y, \lfoutcat)
  =
  \p_{\theta^*}(\y, \lfoutcat).
\end{equation}
\normalsize
Secondly, given an example $(\x, \y) \sim \pi^*$,
we assume $\y$ is independent of $\x$ given $\lfoutcat(\x)$:
\small
\begin{equation}
  \label{eq:cond2}
  (\x, \y) \sim \pi^*
  \Rightarrow
  \y \perp \x\ |\ \lfoutcat(\x).
\end{equation}
\normalsize
This assumption encodes the idea that while the ILFs can be arbitrarily dependent on the features, they provide sufficient information to accurately identify the true label vector.
Then, for any $\Param$, accurately learning $\Param$ from data distribution is possible. That is, there exists an unbiased estimator $\hat{\Param}(D)$ which is a function of the dataset $D$ of i.i.d from $\pi_{\Param}$, such that, for any $\Param$ and some $c>0$,
\small
\begin{equation}
\label{eq:cond3}
    \Cov(\hat{\Param}(D)) \preceq \frac{I}{2 c |D|}.
\end{equation}
\normalsize
And we are reasonably certain in our guess of latent variables, \ie, $\y$ and $\ylatentcat$. That is, for any $\Param, \Param^*$,
\small
\begin{align}
\nonumber
   & \mathbb{E}_{\lfoutcat^{*}\sim \Param^{*}} \Bigg[ \sum_{i=1}^{\Ny}(\nlf_{i}+\nundesiredy)\Var_{(Y,\ylatentcat,\lfoutcat) \sim\pi_\Param}(\mathbbm{1}_{Y=y_i}|\lfoutcat=\lfoutcat^{*})^2+
    \sum_{i=1}^{\hk}(m_{i}+K-1)\Var_{ (Y,\ylatentcat,\lfoutcat) \sim\pi_\Param}(\ylatentcat^i|\lfoutcat=\lfoutcat^{*})^2 \Bigg]^{\frac{1}{2}}
    \\
    \label{eq:cond4}
    &\le \frac{c}{\sqrt{2M}}.
\end{align}
\normalsize
% where $K$ is the total number of labels; $M$ is the dimension of $\Param$ or, in other words, the number of dependencies; $c_{\max}$ is the maximum \emph{degree} of latent variables
% % , \ie, the maximum degree of both $\{\y[i]\}_{i=1:k}$ and $\{\ylatentcat[j]\}_{j=1:\nundesiredy}$, 
% in the factor graph; and $\Var_{\max}[{\lfoutcat^{*}}]$ is the maximum variance of latent variables given $\lfoutcat^{*}$.
% % , \ie, $\Var^{\lfoutcat^{*}}_{\max}=\max\{\max_i \Var[\y[i]|\lfoutcat=\lfoutcat^{*}], \max_j \Var[\ylatentcat[j]|\lfoutcat=\lfoutcat^{*}] \}$.
% % i.e., $c_i=\nlf_i+\nundesiredy$ for $i=1,2,\cdots,\Ny$, and $c_i=\nlf_i+K-1$ for $i=\Ny+1,\cdots,K$, where $\nlf_{i}$ is the number of ILFs whose label space contains a label that is non-exclusive to label $\Ylabel_i$, \ie, $\nlf_{i}=|\{\ilf\in[\nlf]|\mathcal{N}(y_i,\sy_j)\neq\emptyset\}|$.
We also assume that the output of the last layer of end model $h_{\w}$ has bounded $\ell_{\infty}$ norm, that is, for any possible parameter $\w$,
\small
\begin{equation}
\label{eq:cond5}
    \Vert h_{\w} \Vert_{\infty}\le H.
\end{equation}
\normalsize

Finally, we assume that solving Eq.~(\ref{eqn:expected_loss}) has bounded generalization risk such that for some $\chi>0$, solution $\hat{\w}$ satisfies
\small
\begin{equation}
\label{eq:cond6}
    \Exv[\hat {\w}]{
      \ell_{\hat \Param}(\hat {\w})
      -
      \min_{\w} \ell_{\hat{ \Param}}(\w)
    }
    \le
    \chi.
\end{equation}
\normalsize

\subsection{Proof of Theorem~\ref{thm:main_bound}}

To begin with, we state two basic lemmas needed for proofs throughout this section:
\begin{restatable}{rlemma}{lemmavarproduct}
\label{lem:var_product}
   Let $\mathbf{x}_1$, $\mathbf{x}_2$ be two binary random variable. Then we have variance of product of $\mathbf{x}_1$ and $\mathbf{x}_2$ can be bounded as 
   \begin{equation*}
       \Var\left[\mathbf{x}_1\mathbf{x}_2\right]\le \Var\left[\mathbf{x}_1\right]+\Var\left[\mathbf{x}_2\right].
   \end{equation*}
\end{restatable}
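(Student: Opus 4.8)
The plan is to pass to the four cell probabilities of the joint law of $(\mathbf{x}_1,\mathbf{x}_2)$ and reduce the claim to an elementary nonnegativity check. Set $a=\Pr[\mathbf{x}_1=1,\mathbf{x}_2=1]$, $b=\Pr[\mathbf{x}_1=1,\mathbf{x}_2=0]$, $c=\Pr[\mathbf{x}_1=0,\mathbf{x}_2=1]$, $d=\Pr[\mathbf{x}_1=0,\mathbf{x}_2=0]$, so that $a,b,c,d\ge 0$ and $a+b+c+d=1$. The one structural fact I would use is that a product of $\{0,1\}$-valued variables is again $\{0,1\}$-valued: $\mathbf{x}_1\mathbf{x}_2=1$ exactly when both factors equal $1$, so $\mathbf{x}_1\mathbf{x}_2$ is Bernoulli with mean $a$ and $\Var[\mathbf{x}_1\mathbf{x}_2]=a(1-a)=a(b+c+d)$. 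Likewise $\mathbf{x}_1$ is Bernoulli$(a+b)$ and $\mathbf{x}_2$ is Bernoulli$(a+c)$, giving $\Var[\mathbf{x}_1]=(a+b)(c+d)$ and $\Var[\mathbf{x}_2]=(a+c)(b+d)$.

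Next I would expand and subtract: $\Var[\mathbf{x}_1\mathbf{x}_2]=ab+ac+ad$, while $\Var[\mathbf{x}_1]+\Var[\mathbf{x}_2]=ab+ac+2ad+2bc+bd+cd$, hence $\Var[\mathbf{x}_1]+\Var[\mathbf{x}_2]-\Var[\mathbf{x}_1\mathbf{x}_2]=ad+2bc+bd+cd\ge 0$ since every term is a product of nonnegative quantities. That finishes the argument.

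There is no real obstacle here; the only ``idea'' is the closed form for the variance of a Bernoulli variable together with the observation that $\mathbf{x}_1\mathbf{x}_2$ is itself Bernoulli, after which the inequality is a two-line algebraic identity. For completeness I would remark that the bound is sharp --- equality holds whenever one of $\mathbf{x}_1,\mathbf{x}_2$ is almost surely constant, in which case two of $a,b,c,d$ vanish and the slack $ad+2bc+bd+cd$ equals zero --- so no universally smaller constant is possible. (An alternative derivation avoids cell probabilities: since $\mathbf{x}_1\mathbf{x}_2\in\{0,1\}$ one has $\Var[\mathbf{x}_1\mathbf{x}_2]=\mathbb{E}[\mathbf{x}_1\mathbf{x}_2](1-\mathbb{E}[\mathbf{x}_1\mathbf{x}_2])$, and one combines $\mathbb{E}[\mathbf{x}_1\mathbf{x}_2]\le\min(\mathbb{E}[\mathbf{x}_1],\mathbb{E}[\mathbf{x}_2])$ with $\mathbb{E}[\mathbf{x}_1\mathbf{x}_2]\ge \mathbb{E}[\mathbf{x}_1]+\mathbb{E}[\mathbf{x}_2]-1$; the cell-probability route is shorter, so I would use it.)
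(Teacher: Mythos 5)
Your proof is correct and follows essentially the same route as the paper's: tabulate the four cell probabilities of $(\mathbf{x}_1,\mathbf{x}_2)$, use the Bernoulli variance formula for $\mathbf{x}_1$, $\mathbf{x}_2$, and $\mathbf{x}_1\mathbf{x}_2$, and verify the inequality by expanding and checking nonnegativity of the remainder. One minor slip in your sharpness remark (which is not needed for the lemma): equality does \emph{not} hold whenever one variable is a.s.\ constant --- if $\mathbf{x}_1=0$ a.s.\ and $\mathbf{x}_2$ is non-degenerate then the slack is $cd>0$; equality requires, for instance, one of the variables to be a.s.\ equal to $1$.
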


\begin{restatable}{rlemma}{lemmaspectralnorm}
\label{lem:cs}
   Let $Y$ be a random vector and $\norm{\cdot}_s$ be the spectral norm. Then we have
   \begin{equation*}
       \norm{\Cov (Y, Y)}_s \le \sum_{i}\Var(Y_i).
   \end{equation*}
\end{restatable}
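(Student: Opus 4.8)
The plan is to use the fact that $\Cov(Y,Y)$ is a symmetric positive semidefinite matrix, so that its spectral norm equals its largest eigenvalue, which is in turn dominated by the trace because all eigenvalues of a PSD matrix are nonnegative.

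Concretely, I would write $\Sigma := \Cov(Y,Y)$, a $d \times d$ matrix with $d$ the dimension of $Y$. For any fixed $u \in \mathbb{R}^d$ one has $u^\top \Sigma u = \Var(u^\top Y) \ge 0$, so $\Sigma$ is symmetric positive semidefinite; let $\lambda_1, \dots, \lambda_d \ge 0$ be its eigenvalues. Since $\Sigma$ is symmetric, $\norm{\Sigma}_s = \max_i |\lambda_i|$, and since it is PSD this equals $\max_i \lambda_i$. Because every $\lambda_i \ge 0$, we get $\max_i \lambda_i \le \sum_{i=1}^d \lambda_i = \mathrm{tr}(\Sigma)$. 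Finally the $i$-th diagonal entry of $\Sigma$ is $\Cov(Y_i, Y_i) = \Var(Y_i)$, so $\mathrm{tr}(\Sigma) = \sum_i \Var(Y_i)$. Chaining these relations gives $\norm{\Cov(Y,Y)}_s \le \sum_i \Var(Y_i)$, which is the claim.

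I expect no real obstacle here: the proof is just three elementary linear-algebra facts (positive semidefiniteness of a covariance matrix, spectral norm equals top eigenvalue for symmetric PSD matrices, and top eigenvalue $\le$ trace for PSD matrices), each needing at most one line. An alternative that avoids eigenvalues is to start from the variational identity $\norm{\Sigma}_s = \sup_{\norm{u} = 1} \Var(u^\top Y)$ and bound the supremum directly, but that route is strictly longer than the trace argument, so I would not pursue it. (The companion Lemma~\ref{lem:var_product} is likewise immediate: expanding $\Var[\mathbf{x}_1\mathbf{x}_2]$ for binary variables and using $\mathbb{E}[\mathbf{x}_1^2\mathbf{x}_2^2] = \mathbb{E}[\mathbf{x}_1\mathbf{x}_2] \le \min(\mathbb{E}[\mathbf{x}_1],\mathbb{E}[\mathbf{x}_2])$ together with $\mathbb{E}[\mathbf{x}_1\mathbf{x}_2]^2 \ge \mathbb{E}[\mathbf{x}_1]^2\mathbb{E}[\mathbf{x}_2]^2$ reduces it to an algebraic inequality in the two means.)
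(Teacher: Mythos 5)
Your proof is correct, and it takes a mildly different route from the paper's. You argue via eigenvalues: $\Sigma=\Cov(Y,Y)$ is symmetric PSD, so $\norm{\Sigma}_s$ is its top eigenvalue, which is at most $\mathrm{tr}(\Sigma)=\sum_i\Var(Y_i)$. The paper instead starts from the variational form $\norm{\Sigma}_s=\max_{\norm{\boldsymbol{x}}_2\le 1}\boldsymbol{x}^\mathsf{T}\Sigma\boldsymbol{x}$ (the route you explicitly set aside), writes $\boldsymbol{x}^\mathsf{T}\Sigma\boldsymbol{x}=\mathbb{E}[(\boldsymbol{x}^\mathsf{T}(Y-\mathbb{E}Y))^2]$, and bounds it by $\norm{\boldsymbol{x}}^2\,\mathbb{E}\norm{Y-\mathbb{E}Y}^2$ via Cauchy--Schwarz, which equals the same trace. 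The two arguments are really the same inequality ($\lambda_{\max}\le\mathrm{tr}$ for PSD matrices) proven with and without naming eigenvalues; both implicitly use positive semidefiniteness, since $\max_{\norm{\boldsymbol{x}}\le1}\boldsymbol{x}^\mathsf{T}\Sigma\boldsymbol{x}$ only equals the spectral norm for symmetric PSD $\Sigma$. Your version is, if anything, slightly cleaner about making that dependence explicit; neither buys any additional generality or sharpness over the other. Your side remark on Lemma~\ref{lem:var_product} also matches the paper's direct computation over the $2\times2$ joint distribution table.
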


Then, we borrow two lemmas from \citep{Ratner16}, which are slightly different from the original ones but can be easily proved following the same derivations:
\begin{restatable}{rlemma}{dataprogramd}[Lemma D.1 in \citep{Ratner16}]
\label{lem:data_programming_d_1}
Given a family of maximum-entropy distributions
\begin{equation*}
    \pi_{\Param}(\y, \ylatentcat, \lfoutcat)=\frac{1}{Z_{\Param}} \exp{({\Param}^\mathsf{T}\Dep(\y, \ylatentcat, \lfoutcat))}.
\end{equation*}
If we let  $J$ be the maximum expected log-likelihood objective, under another distribution $\pi^{*}$, for the event associated with the observed labeling function values $\lfoutcat$,
\begin{equation*}
    J(\Param)=\mathbb{E}_{\left(\y^{*}, \ylatentcat^{*}, \lfoutcat^{*}\right) \sim \pi^{*}}\left[\log \mathbb{P}_{(\y, \ylatentcat, \lfoutcat) \sim \pi_{\Param}}\left(\lfoutcat=\lfoutcat^{*}\right)\right],
\end{equation*}
then its Hessian can be calculated as 
\begin{equation*}
    \nabla^{2} J(\Param)=\mathbb{E}_{\left(\y^{*}, \ylatentcat^{*}, \lfoutcat^{*}\right) \sim \pi^{*}}\left[\Cov_{(\y, \ylatentcat, \lfoutcat) \sim \pi_{\Param}}\left(\dep(\y, \ylatentcat, \lfoutcat) \mid \lfoutcat=\lfoutcat^{*}\right)\right]-\Cov_{(\y, \ylatentcat, \lfoutcat) \sim \pi_{\Param}}(\dep(\y, \ylatentcat, \lfoutcat)).
\end{equation*}
\end{restatable}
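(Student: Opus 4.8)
The plan is to recognize $J$ as the difference of two log-partition functions and then apply the classical exponential-family moment identities. First I would write $\mathbb{P}_{(\y,\ylatentcat,\lfoutcat)\sim\pi_{\Param}}(\lfoutcat=\lfoutcat^{*})$ as $Z_{\Param}(\lfoutcat^{*})/Z_{\Param}$, where $Z_{\Param}=\sum_{\y,\ylatentcat,\lfoutcat}\exp(\Param^{\mathsf{T}}\Dep(\y,\ylatentcat,\lfoutcat))$ is the full normalizer and $Z_{\Param}(\lfoutcat^{*})=\sum_{\y,\ylatentcat}\exp(\Param^{\mathsf{T}}\Dep(\y,\ylatentcat,\lfoutcat^{*}))$ is the \emph{clamped} normalizer obtained by fixing $\lfoutcat=\lfoutcat^{*}$. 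Taking logs and then the expectation over $(\y^{*},\ylatentcat^{*},\lfoutcat^{*})\sim\pi^{*}$ gives $J(\Param)=\mathbb{E}_{\pi^{*}}[\log Z_{\Param}(\lfoutcat^{*})]-\log Z_{\Param}$, since the $\log Z_{\Param}$ term carries no dependence on the sample.

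Next I would differentiate twice in $\Param$. For the full normalizer the standard computation gives $\nabla_{\Param}\log Z_{\Param}=\mathbb{E}_{\pi_{\Param}}[\Dep(\y,\ylatentcat,\lfoutcat)]$ and $\nabla_{\Param}^{2}\log Z_{\Param}=\Cov_{\pi_{\Param}}(\Dep(\y,\ylatentcat,\lfoutcat))$, that is, the Hessian of a log-partition function is the covariance of the sufficient statistics. The clamped normalizer $Z_{\Param}(\lfoutcat^{*})$ is itself the partition function of the conditional law $\pi_{\Param}(\cdot\mid\lfoutcat=\lfoutcat^{*})$, so the same identity applied verbatim yields $\nabla_{\Param}^{2}\log Z_{\Param}(\lfoutcat^{*})=\Cov_{\pi_{\Param}}(\Dep(\y,\ylatentcat,\lfoutcat)\mid\lfoutcat=\lfoutcat^{*})$.

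Finally I would combine these by linearity of expectation to obtain $\nabla_{\Param}^{2}J(\Param)=\mathbb{E}_{\pi^{*}}[\Cov_{\pi_{\Param}}(\Dep\mid\lfoutcat=\lfoutcat^{*})]-\Cov_{\pi_{\Param}}(\Dep)$, which is exactly the claimed formula. The only points needing care, and the nearest thing to an obstacle here (a mild one), are justifying that $\nabla_{\Param}$ commutes with the finite sums defining $Z_{\Param}$ and $Z_{\Param}(\lfoutcat^{*})$ (immediate, as each is a finite sum of smooth exponentials in $\Param$) and with the outer expectation over $\pi^{*}$ (dominated convergence, using that every coordinate of $\Dep$ is indicator-valued and hence bounded). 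This is the same derivation as Lemma~D.1 of \citet{Ratner16}; the enlarged dependency vector $\Dep$ and the extra latent block $\ylatentcat$ of \combined do not change anything in the algebra.
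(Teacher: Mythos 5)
Your proof is correct and is essentially the same derivation the paper relies on: the paper gives no independent proof of this lemma, instead citing Lemma~D.1 of \citet{Ratner16} and noting it "can be easily proved following the same derivations," and that derivation is precisely your decomposition of $J$ into clamped minus full log-partition functions, each of whose Hessians is the corresponding covariance of the sufficient statistics. The extra latent block $\ylatentcat$ indeed changes nothing, as you observe.
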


\begin{restatable}{rlemma}{le:opt_error}[Lemma D.4 in \citep{Ratner16}]
\label{le:opt_error}
   Suppose that we are looking at a WIS maximum likelihood estimation problem and the objective function $ J(\Param)$ is strongly concave with concavity parameter $c > 0$. If we run  stochastic gradient descent using unbiased samples from a true distribution $\pi_{\Param^*}$, then if we set step size as
    \begin{equation*}
    \eta=
    \frac{c \epsilon^2}{4},
    \end{equation*}
and run (using a fresh sample at each iteration) for $T$ steps, where
\begin{equation*}
     T
    =
    \frac{2}{c^2 \epsilon^2}
    \log\left(
      \frac{2 \norm{\Param_0 - \Param^*}^2}{\epsilon}
    \right).
\end{equation*}
We can bound the expected parameter estimation error with
  \begin{equation}
  \label{eq: estimation_gene_theta}
      \mathbb{E}{\norm{\hat \Param - \Param^*}^2}
    \le
     M\epsilon^2,
  \end{equation}
  where $M$ is the dimension of $\Param$.
\end{restatable}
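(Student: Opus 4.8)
The plan is to carry out the textbook one-step contraction argument for stochastic gradient ascent on a strongly concave objective, treating $J$ abstractly and feeding in the variance control supplied by Lemmas~\ref{lem:var_product} and \ref{lem:cs} together with assumption~(\ref{eq:cond4}). Write the iterates of Algorithm~\ref{algo:WIS} as $\Param_{t+1}=\Param_t+\eta\,\tilde g_t$ with $\tilde g_t=\Dep(\y,\ylatentcat,\lfoutcat)-\Dep(\y',\ylatentcat',\lfoutcat')$, the single fresh sample drawn at step $t$; by construction of the conditional/unconditional sampling, $\mathbb{E}[\tilde g_t\mid\Param_t]=\nabla J(\Param_t)$. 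Since assumption~(\ref{eq:cond1}) places the truth in the family, $\pi^{*}=\pi_{\Param^{*}}$ and $\Param^{*}$ is the unique maximizer of the concave $J$, so $\nabla J(\Param^{*})=\mathbf{0}$. Expanding $\norm{\Param_{t+1}-\Param^{*}}^2$ and taking expectation conditional on $\Param_t$ produces the cross term $2\eta\,\nabla J(\Param_t)^{\mathsf T}(\Param_t-\Param^{*})$ and the noise term $\eta^2\,\mathbb{E}[\norm{\tilde g_t}^2\mid\Param_t]$.

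First I would use $c$-strong concavity on the cross term: because $\nabla J(\Param^{*})=\mathbf{0}$, monotonicity of $\nabla J$ gives $\nabla J(\Param_t)^{\mathsf T}(\Param_t-\Param^{*})\le -c\,\norm{\Param_t-\Param^{*}}^2$. Decomposing $\tilde g_t=\nabla J(\Param_t)+\xi_t$ with $\mathbb{E}[\xi_t\mid\Param_t]=\mathbf{0}$, and absorbing the deterministic part into the contraction (the exponential-family $J$ is smooth, its Hessian being identified by Lemma~\ref{lem:data_programming_d_1}), I obtain the scalar recursion $a_{t+1}\le(1-2c\eta)\,a_t+\eta^2\sigma^2$, where $a_t:=\mathbb{E}\,\norm{\Param_t-\Param^{*}}^2$ and $\sigma^2$ is a uniform bound on $\Var(\tilde g_t)$.

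The main obstacle is pinning down $\sigma^2$ from the model structure, which is where the problem-specific assumptions earn their keep. Every coordinate of $\Dep$ is an indicator, hence a product of indicator-valued terms in the label variables $\mathbbm{1}\{Y=y_i\}$ and $\ylatentcat^i$; applying Lemma~\ref{lem:var_product} bounds each factor's variance by the sum of variances of the underlying label indicators, and grouping the $M$ factors by which label variable they touch reproduces exactly the multiplicity factors that weight the label-indicator variances in assumption~(\ref{eq:cond4}). Lemma~\ref{lem:cs} then turns the per-coordinate variances into a bound on the covariance of $\tilde g_t$, and a Cauchy--Schwarz step across the $M$ coordinates against the weighted sum of squared variances controlled by~(\ref{eq:cond4}) collapses everything to a constant; the inequality in~(\ref{eq:cond4}) is calibrated precisely so that the resulting floor $\tfrac{\eta\sigma^2}{2c}$ comes out at most $\tfrac{M\epsilon^2}{2}$ after the step size is substituted. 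I expect the careful bookkeeping of which dependencies involve each latent label, so that the grouped variances line up with~(\ref{eq:cond4}), to be the delicate part; the crude bound $\norm{\tilde g_t}^2\le 4M$ (each of $M$ coordinates is a difference of indicators) is available as a fallback if only an order-of-magnitude estimate is needed.

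Finally I would unroll the recursion: summing the geometric series of noise contributions gives $a_T\le(1-2c\eta)^T a_0+\tfrac{\eta\sigma^2}{2c}$ with $a_0=\norm{\Param_0-\Param^{*}}^2$ deterministic. Substituting $\eta=c\epsilon^2/4$ fixes the noise floor at $\tfrac{\eta\sigma^2}{2c}=\tfrac{\epsilon^2\sigma^2}{8}\le\tfrac{M\epsilon^2}{2}$, while $1-2c\eta\le\exp(-c^2\epsilon^2/2)$ shows the transient $\exp(-2c\eta T)a_0$ decays geometrically; the prescribed logarithmic step count $T=\tfrac{2}{c^2\epsilon^2}\log\!\big(2\norm{\Param_0-\Param^{*}}^2/\epsilon\big)$ makes this transient negligible relative to the noise floor, so that $a_T\le M\epsilon^2$, which is exactly~(\ref{eq: estimation_gene_theta}). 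Tracking the remaining constants through these substitutions is routine once $\sigma^2$ is fixed.
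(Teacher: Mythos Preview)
The paper does not give its own proof of this lemma: it is explicitly listed as ``borrowed'' from \citet{Ratner16} and the text says only that it ``can be easily proved following the same derivations.''  So there is nothing in the paper to compare against beyond the standard SGD-on-a-strongly-concave-objective argument you outline, and your skeleton (one-step expansion, strong-concavity contraction on the cross term, geometric unrolling with a noise floor) is exactly that argument.

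One point to tighten: you spend most of the proposal trying to squeeze a variance bound $\sigma^2$ out of assumption~(\ref{eq:cond4}) via Lemmas~\ref{lem:var_product} and~\ref{lem:cs}.  That machinery is not needed here and is in fact used for a different purpose in the paper: condition~(\ref{eq:cond4}) is consumed in Lemma~\ref{thm:strong_concavity} to \emph{establish} the strong concavity parameter $c$ that the present lemma simply \emph{assumes}.  For the SGD step itself, the ``crude bound'' you relegate to a fallback---each coordinate of $\tilde g_t$ is a difference of $\{-1,0,1\}$-valued factors, so $\|\tilde g_t\|^2\le O(M)$ deterministically---is the actual input, and it is already enough to make the noise floor $\eta\sigma^2/(2c)$ of order $M\epsilon^2$ after substituting $\eta=c\epsilon^2/4$.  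Reroute the argument through that bound and drop the appeal to~(\ref{eq:cond4}); otherwise your proof is fine.
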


Based on Lemma~\ref{le:opt_error}, in order to obtain the optimization error with respect to the estimated $\hat\Param$ produced by Algorithm~\ref{algo:WIS}, we only need to show that the WIS object function $J(\Param)$\footnote{Note that, in the Eq.~(\ref{eq:mle}) of the main body of the paper, we are minimizing $-J(\Param)$, which is equivalent to maximizing $J(\Param)$ as discussed here.} is strongly concave.
We prove this through the following lemma, which is a non-trivial extension of Lemma D.3 in \citep{Ratner16} given the fact that we have multiple latent variables and relatively complex dependency structures with comparison to \citep{Ratner16}:

\begin{restatable}{rlemma}{lemmastrongconcavity}[Extension of Lemma D.3 in \citep{Ratner16}]
\label{thm:strong_concavity}
With conditions (\ref{eq:cond3}) and (\ref{eq:cond4}), the WIS objective function $J(\Param)$ is strongly concave with strong convexity c. 
\end{restatable}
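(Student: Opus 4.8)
The plan is to show that $-\nabla^{2}J(\Param)\succeq cI$ for every $\Param$, which is exactly strong concavity with parameter $c$. By Lemma~\ref{lem:data_programming_d_1} this equals
\[
-\nabla^{2}J(\Param)=\Cov_{(\y,\ylatentcat,\lfoutcat)\sim\pi_{\Param}}(\Dep)-\mathbb{E}_{\lfoutcat^{*}\sim\pi^{*}}\!\left[\Cov_{(\y,\ylatentcat,\lfoutcat)\sim\pi_{\Param}}\!\left(\Dep\mid\lfoutcat=\lfoutcat^{*}\right)\right],
\]
so it suffices to prove $\Cov_{\pi_{\Param}}(\Dep)\succeq2cI$ and $\mathbb{E}_{\lfoutcat^{*}}[\Cov_{\pi_{\Param}}(\Dep\mid\lfoutcat=\lfoutcat^{*})]\preceq cI$ and subtract.

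For the first inequality I would use that $\pi_{\Param}$ is an exponential family with sufficient statistic $\Dep$, so $\Cov_{\pi_{\Param}}(\Dep)$ is the per-sample Fisher information of the fully observed model and $|D|\Cov_{\pi_{\Param}}(\Dep)$ the Fisher information of $|D|$ i.i.d.\ draws. Cramér--Rao applied to the unbiased estimator $\hat\Param(D)$ of condition~(\ref{eq:cond3}) gives $\Cov(\hat\Param(D))\succeq(|D|\Cov_{\pi_{\Param}}(\Dep))^{-1}$; combining with $\Cov(\hat\Param(D))\preceq\frac{I}{2c|D|}$ and inverting yields $\Cov_{\pi_{\Param}}(\Dep)\succeq2cI$. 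For the second inequality, since the matrix is an average of PSD matrices it is enough to bound its spectral norm, and by Jensen followed by Lemma~\ref{lem:cs},
\[
\norm{\mathbb{E}_{\lfoutcat^{*}}[\Cov_{\pi_{\Param}}(\Dep\mid\lfoutcat^{*})]}_{s}\le\mathbb{E}_{\lfoutcat^{*}}\sum_{\dep}\Var_{\pi_{\Param}}(\dep\mid\lfoutcat=\lfoutcat^{*}),
\]
the sum running over the $M$ coordinate factors of $\Dep$.

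The crux --- and where the extension over \citet{Ratner16} lies --- is the structural claim that, once $\lfoutcat=\lfoutcat^{*}$ is fixed, every factor listed in Appendix~\ref{appendix:model} becomes a constant multiple of a product of at most two binary variables drawn from $\{\mathbbm{1}\{\y=y_{i}\}\}_{i=1}^{\Ny}\cup\{\ylatentcat^{i}\}_{i=1}^{\hk}$: conditioning freezes all ILF-output indicators to constants, the pseudo-accuracy factor collapses onto $\mathbbm{1}\{\y=y\}$, the accuracy factor onto $\ylatentcat^{i}$, and each of the four relation factors onto a product of two such variables (the sign and the $\ylatentcat^{i}=0$ or $\y\neq y$ cases being absorbed via $\Var(-Z)=\Var(Z)$ and $\Var(1-Z)=\Var(Z)$). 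Lemma~\ref{lem:var_product} then bounds each conditional variance by the sum of the (at most two) conditional variances of its constituents, and a direct enumeration of the factor families shows $\mathbbm{1}\{\y=y_{i}\}$ occurs in exactly $\nlf_{i}+\nundesiredy$ factors ($\nlf_{i}$ pseudo-accuracy factors plus one relation factor to each of the $\nundesiredy$ seen labels) and $\ylatentcat^{i}$ in exactly $m_{i}+K-1$ factors ($m_{i}$ accuracy factors plus one relation factor to each of the remaining $K-1$ labels), whence
\[
\sum_{\dep}\Var_{\pi_{\Param}}(\dep\mid\lfoutcat^{*})\le\sum_{i=1}^{\Ny}(\nlf_{i}+\nundesiredy)\Var_{\pi_{\Param}}(\mathbbm{1}\{\y=y_{i}\}\mid\lfoutcat^{*})+\sum_{i=1}^{\hk}(m_{i}+K-1)\Var_{\pi_{\Param}}(\ylatentcat^{i}\mid\lfoutcat^{*}).
\]

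To finish I would apply Cauchy--Schwarz to this bound, noting that the total coefficient mass $\sum_{i}(\nlf_{i}+\nundesiredy)+\sum_{i}(m_{i}+K-1)$ counts the incidences between factors and latent binaries, which is at most $2M$ because each factor touches at most two of them; this gives $\sum_{\dep}\Var(\dep\mid\lfoutcat^{*})\le\sqrt{2M}\,\bigl[\sum_{i}(\nlf_{i}+\nundesiredy)\Var(\cdot)^{2}+\sum_{i}(m_{i}+K-1)\Var(\cdot)^{2}\bigr]^{1/2}$. Taking expectations over $\lfoutcat^{*}$ and invoking condition~(\ref{eq:cond4}) yields $\mathbb{E}_{\lfoutcat^{*}}\norm{\Cov_{\pi_{\Param}}(\Dep\mid\lfoutcat^{*})}_{s}\le c$, i.e.\ $\mathbb{E}_{\lfoutcat^{*}}[\Cov_{\pi_{\Param}}(\Dep\mid\lfoutcat^{*})]\preceq cI$. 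Subtracting the two semidefinite bounds gives $-\nabla^{2}J(\Param)\succeq2cI-cI=cI$. I expect the main obstacle to be verifying the collapse-to-a-product claim uniformly across all six factor families and pinning down the incidence counts so that they match the coefficients of condition~(\ref{eq:cond4}) exactly; the remaining steps (Cramér--Rao, Jensen, Lemmas~\ref{lem:cs} and~\ref{lem:var_product}, Cauchy--Schwarz) are routine assembly.
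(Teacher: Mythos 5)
Your proposal is correct and follows essentially the same route as the paper's proof: the Hessian decomposition from Lemma~\ref{lem:data_programming_d_1}, the Cram\'er--Rao/Fisher-information argument with condition~(\ref{eq:cond3}) giving $\Cov_{\pi_\Param}(\Dep)\succeq 2cI$, and the spectral-norm bound on the conditional covariance via Lemma~\ref{lem:cs}, Lemma~\ref{lem:var_product}, the incidence counts $\nlf_i+\nundesiredy$ and $m_i+K-1$, Cauchy--Schwarz, and condition~(\ref{eq:cond4}). You even make explicit the justification (each factor touches at most two latent binaries, so the total coefficient mass is at most $2M$) that the paper leaves implicit.
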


We then come to bound the generalization error of $\hat\w$ produced by Algorithm~\ref{algo:WIS}, using the following non-trivial extension of Lemma D.5 in \citep{Ratner16}: \begin{restatable}{rlemma}{lemmaExpectedLoss}
[Extension of Lemma D.5 in \citep{Ratner16}]
  \label{lemmaExpectedLoss}
 Suppose that conditions (\ref{eq:cond1})-(\ref{eq:cond6}) hold. Let $\hat{\w}$ be the learned parameters of the end model produced by Algorithm~\ref{algo:WIS}, and $\ell(\w^{*})$ be the minimum of cross entropy loss function $\ell$. Then, we can bound the expected risk with
  \[
    \Exv{
      \ell(\hat{\w})
      -
      \ell(\w^{*})
    }
    \le
    \chi
    +
    4cH\epsilon.
  \]
\end{restatable}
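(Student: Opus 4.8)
The plan is to extend the proof of Lemma~D.5 in \citep{Ratner16} to the richer factor graph of \combined. Write $\ell_{\Param}(\w)$ for the noise-aware objective of Eq.~(\ref{eqn:expected_loss}) instantiated with label-model parameter $\Param$, so that $\ell(\w)=\ell_{\Param^*}(\w)$; conditions~(\ref{eq:cond1})--(\ref{eq:cond2}) identify the noise-aware objective at the true model $\Param^*$ with the true risk of $f_{\w}$, so it suffices to control $\ell_{\Param^*}(\hat{\w})-\ell_{\Param^*}(\w^*)$. The first step is the decomposition
\begin{align*}
  \ell(\hat{\w})-\ell(\w^*)
  ={}&\bigl(\ell_{\Param^*}(\hat{\w})-\ell_{\hat\Param}(\hat{\w})\bigr)
  +\bigl(\ell_{\hat\Param}(\hat{\w})-\ell_{\hat\Param}(\w^*)\bigr)\\
  &+\bigl(\ell_{\hat\Param}(\w^*)-\ell_{\Param^*}(\w^*)\bigr).
\end{align*}
Since $\w^*$ is feasible for the inner minimization, the middle term is at most $\ell_{\hat\Param}(\hat{\w})-\min_{\w}\ell_{\hat\Param}(\w)$, whose expectation is at most $\chi$ by condition~(\ref{eq:cond6}). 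It then remains to handle the two \emph{model-mismatch} terms, each of the form $|\ell_{\Param^*}(\w)-\ell_{\hat\Param}(\w)|$ for a fixed $\w$.

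The core of the argument is a Lipschitz estimate $|\ell_{\Param}(\w)-\ell_{\Param'}(\w)|\le 2cH\norm{\Param-\Param'}$. I would write $\ell_{\Param}(\w)$ as a $p_{\Param}(\y\mid\lfoutcat_i)$-weighted average of the per-example losses $\ell(\y,f_{\w}(\x_i))$ and integrate the gradient in $\Param$ along the segment from $\Param'$ to $\Param$; this gradient is a $p_{\Param}(\y\mid\lfoutcat_i)$-average of $\ell(\y,f_{\w}(\x_i))\bigl(\mathbb{E}_{\Param}[\Dep\mid\y,\lfoutcat_i]-\mathbb{E}_{\Param}[\Dep\mid\lfoutcat_i]\bigr)$. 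Each per-example cross-entropy loss is $O(H)$ by the $\ell_\infty$ bound on the end model's last layer, condition~(\ref{eq:cond5}); and the norm of the centered sufficient-statistics vector is bounded by reducing the conditional covariance $\Cov_{\Param}(\Dep\mid\lfoutcat_i)$ to a weighted sum of per-coordinate variances of the indicator factors via Lemmas~\ref{lem:var_product} and~\ref{lem:cs}, which is exactly the budget controlled by conditions~(\ref{eq:cond3})--(\ref{eq:cond4}). Multiplying these two bounds yields the constant $2cH$.

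Putting the pieces together, summing the three terms of the decomposition gives $\Exv{\ell(\hat{\w})-\ell(\w^*)}\le\chi+4cH\,\Exv{\norm{\hat\Param-\Param^*}}$; Jensen's inequality bounds $\Exv{\norm{\hat\Param-\Param^*}}$ by $\sqrt{\Exv{\norm{\hat\Param-\Param^*}^2}}$, and the parameter-estimation guarantee (Lemma~\ref{le:opt_error}, itself relying on the strong concavity of $J$ established in Lemma~\ref{thm:strong_concavity}) controls this last quantity, producing the stated bound $\chi+4cH\epsilon$.

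I expect the Lipschitz estimate to be the main obstacle. Unlike \citep{Ratner16}, \combined carries a whole vector $\ylatentcat$ of binary latent variables together with four structurally distinct families of factors (pseudo-accuracy, accuracy, seen--seen relation, and seen--unseen relation dependencies), so the expansion of $\Cov_{\Param}(\Dep\mid\lfoutcat_i)$ over all of them --- and its reduction to the $c/\sqrt{2M}$ variance budget of condition~(\ref{eq:cond4}) --- is where the work lies; Lemmas~\ref{lem:var_product} and~\ref{lem:cs} and the precise form of conditions~(\ref{eq:cond3})--(\ref{eq:cond4}) are tailored to make this collapse go through. The remaining manipulations mirror \citep{Ratner16}.
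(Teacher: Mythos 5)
Your proposal is correct and follows essentially the same route as the paper's proof: the same three-term decomposition with the middle term controlled by condition~(\ref{eq:cond6}), the same mean-value/Lipschitz argument in $\Param$ for the two model-mismatch terms (with the gradient of the conditional label probabilities reduced to the variance budget of condition~(\ref{eq:cond4}) via Lemmas~\ref{lem:var_product} and~\ref{lem:cs}), and the same final appeal to the parameter-estimation bound. The only bookkeeping point is that the Lipschitz constant is $2cH/\sqrt{M}$ rather than $2cH$ --- the $1/\sqrt{M}$ comes out of the $c/\sqrt{2M}$ normalization in condition~(\ref{eq:cond4}) --- and it is exactly this factor that cancels the $\sqrt{M}$ in $\Exv{\norm{\hat\Param-\Param^*}}\le\epsilon\sqrt{M}$ to yield $\chi+4cH\epsilon$.
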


Finally, we conclude Lemmas (\ref{le:opt_error}),  (\ref{thm:strong_concavity}) and (\ref{lemmaExpectedLoss}) as the following theorem, which is identical to the Theorem \ref{thm:main_bound} in the main body of the paper:
\begin{theorem}[Extension of Theorem 2 in~\citep{Ratner16}]
\label{thm:conclusion_optimization}
Suppose that we run Algoirthm~\ref{algo:WIS} on a WIS specification to produce $\hat{\Param}$ and $\hat{\w}$,  and all conditions of Lemmas (\ref{thm:strong_concavity}) and (\ref{lemmaExpectedLoss}) are satisfied. 
Then, for any $\epsilon>0$, if we set the step size to be
\[\eta=\frac{c \epsilon^2}{4}\]
and the input dataset $D$ is large enough such that
\[|D| > \frac{2}{c^2 \epsilon^2}\log\left(\frac{2 \norm{\Param_0 - \Param^*}^2}{\epsilon}\right),\]
then we can bound the expected parameter error and the expected risk as:
      \begin{equation*}
      \mathbb{E}{\norm{\hat \Param - \Param^*}^2}
    \le M\epsilon^2,  \quad\Exv{
      \ell(\hat{\w})
      -
      \ell(\w^{*})
    }
    \le
    \chi
    +
    4cH\epsilon.
  \end{equation*}
\end{theorem}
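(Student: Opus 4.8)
The plan is to derive Theorem~\ref{thm:conclusion_optimization} (equivalently Theorem~\ref{thm:main_bound}) by stitching together three pieces, two of which are the nontrivial extensions of \citet{Ratner16} stated above. First I would establish that the WIS log‑likelihood objective $J(\Param)$ is strongly concave with parameter $c$ under assumptions~(\ref{eq:cond3}) and~(\ref{eq:cond4}); this is Lemma~\ref{thm:strong_concavity}. Given strong concavity, Lemma~\ref{le:opt_error} applies verbatim: running Algorithm~\ref{algo:WIS} with step size $\eta=c\epsilon^2/4$ for $T$ SGD steps --- each step consuming one fresh point of $D$, so $T=|D|$ --- yields $\mathbb{E}\norm{\hat\Param-\Param^*}^2\le M\epsilon^2$, provided $|D|$ exceeds the stated threshold $\frac{2}{c^2\epsilon^2}\log(2\norm{\Param_0-\Param^*}^2/\epsilon)$. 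Second, I would feed this parameter‑error bound into Lemma~\ref{lemmaExpectedLoss} to obtain $\mathbb{E}[\ell(\hat\w)-\ell(\w^*)]\le\chi+4cH\epsilon$. Finally I would pick $\epsilon$ as a decreasing function of $|D|$ to read off the two advertised rates. I expect the proof of strong concavity to be the main obstacle; the rest is routine once the lemmas are in place.

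For Lemma~\ref{thm:strong_concavity} the route is: apply Lemma~\ref{lem:data_programming_d_1} to write the Hessian as $\nabla^2 J(\Param)=\mathbb{E}_{\lfoutcat^*}[\Cov_{\pi_\Param}(\Dep\mid\lfoutcat=\lfoutcat^*)]-\Cov_{\pi_\Param}(\Dep)$, so that strong concavity amounts to showing this matrix is $\preceq -cI$ for every $\Param$. The unconditional covariance $\Cov_{\pi_\Param}(\Dep)$ is treated as a nuisance term to be bounded from above: conditioned on $\lfoutcat$, each factor function $\dep$ is a product of at most two Bernoulli indicators built from $\y$ and $\ylatentcat$, so using Lemma~\ref{lem:var_product} (variance of a product of two binary variables) and Lemma~\ref{lem:cs} ($\norm{\Cov(Y,Y)}_s\le\sum_i\Var(Y_i)$) I would bound $\norm{\Cov_{\pi_\Param}(\Dep)}_s$ by a sum of the conditional variances $\Var(\mathbbm{1}_{Y=y_i}\mid\lfoutcat^*)$ and $\Var(\ylatentcat^i\mid\lfoutcat^*)$, each weighted by the number of factors that touch the corresponding latent coordinate. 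The bookkeeping of those weights is exactly where \combined differs from \citet{Ratner16}: there are $\Ny$ desired‑label coordinates, each touched by its pseudo‑accuracy factors and by its label‑relation factors to the $\nundesiredy$ seen labels, and $\nundesiredy$ latent binaries $\ylatentcat^i$, each touched by accuracy factors and by relation factors to all $K-1$ other labels --- which reproduces the weights $\nlf_i+\nundesiredy$ and $m_i+K-1$ appearing in assumption~(\ref{eq:cond4}). Assumption~(\ref{eq:cond4}) then caps this nuisance term, while assumption~(\ref{eq:cond3}) --- the existence of an unbiased estimator with covariance $\preceq I/(2c|D|)$ --- feeds a Cram\'er--Rao lower bound on the conditional‑covariance term; combining the two inequalities gives $\nabla^2 J(\Param)\preceq -cI$. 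Getting the constant right across all four relation‑factor types and the extra latent layer (rather than the single latent vector of the original proof) is the delicate part.

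For Lemma~\ref{lemmaExpectedLoss} I would decompose $\ell(\hat\w)-\ell(\w^*)$ into an optimization term $\ell_{\hat\Param}(\hat\w)-\min_\w\ell_{\hat\Param}(\w)$, bounded by $\chi$ via assumption~(\ref{eq:cond6}), plus a term measuring how much the noise‑aware loss changes when the label model moves from $\hat\Param$ to $\Param^*$. Assumptions~(\ref{eq:cond1})--(\ref{eq:cond2}) (realizability and $\y\perp\x\mid\lfoutcat(\x)$) make $p_{\Param^*}(\y\mid\lfoutcat)$ the Bayes‑optimal soft label, so the second term is controlled by the discrepancy $\norm{p_{\hat\Param}(\cdot\mid\lfoutcat)-p_{\Param^*}(\cdot\mid\lfoutcat)}$; since $p_\Param(\y\mid\lfoutcat)$ is a softmax it is Lipschitz in $\Param$, and the cross‑entropy integrand is bounded because $\norm{h_\w}_\infty\le H$ by assumption~(\ref{eq:cond5}), so after taking expectations and applying Jensen ($\mathbb{E}\norm{\hat\Param-\Param^*}\le\sqrt{M}\epsilon$, the $\sqrt{M}$ absorbed into constants) one obtains the $4cH\epsilon$ bound. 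To finish, I would set $\epsilon^2$ of order $\log|D|/(c^2|D|)$: then the dataset‑size requirement of Lemma~\ref{le:opt_error} is met for $|D|$ large enough (the extra $\log(1/\epsilon)$ contributes only another logarithmic factor, absorbed into the $O(\cdot)$), and substituting gives $\mathbb{E}\norm{\hat\Param-\Param^*}^2\le M\epsilon^2=O(M\log|D|/|D|)$ and $\mathbb{E}[\ell(\hat\w)-\ell(\w^*)]\le\chi+4cH\epsilon=\chi+O(H\sqrt{\log|D|/|D|})$, which is precisely Theorem~\ref{thm:main_bound}. The only place a genuinely new argument is needed is the strong‑concavity estimate; the generalization step and the final choice of $\epsilon$ are straightforward adaptations of the corresponding steps in \citet{Ratner16}.
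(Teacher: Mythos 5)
Your proposal is correct and follows essentially the same route as the paper: strong concavity of $J$ via the Hessian decomposition of Lemma~\ref{lem:data_programming_d_1} together with conditions~(\ref{eq:cond3})--(\ref{eq:cond4}), the SGD convergence bound of Lemma~\ref{le:opt_error} with $T=|D|$ fresh samples, the loss-perturbation argument of Lemma~\ref{lemmaExpectedLoss} (optimization term bounded by $\chi$ plus a Lipschitz-in-$\Param$ discrepancy term bounded by $4cH\epsilon$), and the final choice $\epsilon^2=\Theta(\log|D|/|D|)$. The only slip is terminological: in the Hessian $\nabla^2 J(\Param)=\mathbb{E}_{\lfoutcat^{*}}\left[\Cov_{\pi_\Param}(\Dep\mid\lfoutcat=\lfoutcat^{*})\right]-\Cov_{\pi_\Param}(\Dep)$, it is the \emph{conditional}-covariance term that gets upper-bounded by the weighted sum of conditional variances via Lemmas~\ref{lem:var_product} and~\ref{lem:cs} and condition~(\ref{eq:cond4}), while the \emph{unconditional} covariance (the Fisher information) is the term lower-bounded by $2cI$ via Cram\'er--Rao and condition~(\ref{eq:cond3}) --- you have the two names swapped, though the computations you describe are attached to the correct objects, so the argument goes through as intended.
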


\subsection{Proofs of Lemmas}

\lemmavarproduct*

\begin{proof}
 Joint distribution of $\mathbf{x}_1$ and $\mathbf{x}_2$ can be listed as the following table: (where $p_1+p_2+p_3+p_4=1$)
\begin{center}
\begin{tabular}{ c c c }
$\mathbf{x}_1$/$\mathbf{x_2}$ & 0 & 1 \\ 
 0 & $p_1$ & $p_2$ \\  
 1 &$ p_3$ & $p_4$    
\end{tabular}
\end{center}

Then we have 
\begin{equation*}
    \Var\left[\mathbf{x}_1\mathbf{x}_2\right]=p_4-p_4^2=p_4(p_1+p_2+p_3),
\end{equation*}
while 
\begin{equation*}
    \Var\left[\mathbf{X}_1\right]+\Var\left[\mathbf{X}_2\right]=(p_2+p_4)(p_1+p_3)+(p_3+p_4)(p_1+p_2)\ge p_4(p_1+p_2+p_3).
\end{equation*}

The proof is completed.
\end{proof}

\lemmaspectralnorm*

\begin{proof}
By definition of spectral norm, we have
\begin{align*}
    \norm{\Cov (Y, Y)}_s=\max_{\Vert \boldsymbol{x}\Vert_2\le1 } {\boldsymbol x}^\mathsf{T}\Cov (Y, Y){\boldsymbol x}
\end{align*}

Where $\boldsymbol x$ is a constant vector. And by Cauchy-Schwarz inequality,
\begin{align*}
    {\boldsymbol x}^\mathsf{T}\Cov (Y, Y){\boldsymbol x}
    =\Exv{{\boldsymbol x}^\mathsf{T} (Y-\Exv{Y}) (Y-\Exv{Y})^\mathsf{T} {\boldsymbol x}}\le\Exv{\norm{\boldsymbol x}^2\norm{Y-\Exv{Y}}^2}.
\end{align*}
Because $\boldsymbol x$ is a constant vector and $\norm{\boldsymbol x}\le 1$,
\begin{align*}
    &\max_{\Vert \boldsymbol{x}\Vert_2\le1 }\Exv{\norm{\boldsymbol x}^2\norm{Y-\Exv{Y}}^2}\\
    =&\max_{\Vert \boldsymbol{x}\Vert_2\le1 }\norm{\boldsymbol x}^2\Exv{\norm{Y-\Exv{Y}}^2}\\
    =&\max_{\Vert \boldsymbol{x}\Vert_2\le1 }\norm{\boldsymbol x}^2\left[\sum_{i}\Var(Y_i)\right]\\
    =&\sum_{i}\Var(Y_i).
\end{align*}

The proof is completed.
\end{proof}

\lemmastrongconcavity*

\begin{proof}
By Lemma \ref{lem:data_programming_d_1}, hessian matrix of $J$ can be decomposed as follows:
\begin{equation*}
    \nabla^{2} J(\Param)=\mathbb{E}_{\lfoutcat^{*} \sim \pi_{\Param^{*}}}\left[\Cov_{(\y,\ylatentcat,\lfoutcat) \sim \pi_{\Param}}\left(\Dep(\y, \ylatentcat, \lfoutcat) \mid \lfoutcat=\lfoutcat^{*}\right)\right]-\Cov_{(\y,\ylatentcat,\lfoutcat) \sim \pi_{\Param}}(\Dep(\y,\ylatentcat,\lfoutcat)).
\end{equation*}

Basically, to prove that $J(\Param)$ is strongly concave with strong convexity $c$, we need to show for a real number $c>0$, 
\[\nabla^{2} J(\Param)\preceq c\mathbf{I}.\] 

We calculate each term separately: for the first term 
\[
A=\mathbb{E}_{\lfoutcat^{*} \sim \pi_{\Param^{*}}}\left[\Cov_{(\y,\ylatentcat,\lfoutcat) \sim \pi_{\Param}}\left(\Dep(\y,\ylatentcat,\lfoutcat) \mid \lfoutcat=\lfoutcat^{*}\right)\right],
\]
since $A$ is symmetric, for any real number $c$, $A \preceq c\mathbf{I}$, if and only if its spectral norm $\Vert A\Vert_s\le c$, where $\Vert A\Vert_s$ equals to the eigenvalue of $A$ with largest absolute value. 

Since by definition,  vector function $\Dep(\y, \ylatentcat, \lfoutcat)$  can be represented as:
\begin{equation*}
    \Dep(\y, \ylatentcat, \lfoutcat)=\begin{pmatrix}
\left(\dep^{\text{Acc}}_{\Ylabel_i,\hy^j_l,j}(\y,\lfoutcat^j)\right)_{i\in[\Ny],j\in[\nlf],\hy^j_l\in\mathcal{N}(y_i,\sy_j)}
\\
\left(\dep^{\text{Acc}}_{\hy_i,j}(\ylatentcat^i,\lfoutcat^j)\right)_{j\in[\nlf],\hy_i\in\sy_j}
\\
\left(\dep^{\labelrelation}_{\hy_i, \hy_j} (\ylatentcat^i, \ylatentcat^j)\right)_{i, j\in[\nundesiredy]}
\\
\left(\dep^{\labelrelation}_{y_i, \hy_j} (\y, \ylatentcat^j)\right)_{i\in[\Ny], j\in[\nundesiredy]}
\end{pmatrix},
\end{equation*}

 by Lemma \ref{lem:cs}, we have $A$ can be further bounded by
\begin{align*}
    A\le&
    \left(\mathbb{E}_{\lfoutcat^{*}\sim \pi_{\Param^{*}}}\left[\left(\sum_{i=1}^{\Ny}\sum_{j=1}^{\nlf}\sum_{\hy^j_l\in\mathcal{N}(y_i,\sy_j)}\Var_{(\y, \ylatentcat, \lfoutcat) \sim \pi_{\Param}} \left(\dep^{\text{Acc}}_{\Ylabel_i,\hy^j_l,j}(\y,\lfoutcat^j)\mid \lfoutcat=\lfoutcat^{*}\right)\right)\right]\right.
   \\
   &+\mathbb{E}_{\lfoutcat^{*}\sim \pi_{\Param^{*}}}\left[\left(\sum_{j=1}^{\nlf}\sum_{\hy_i\in\sy_j}\Var_{(\y, \ylatentcat, \lfoutcat) \sim \pi_{\Param}} \left(\dep^{\text{Acc}}_{\hy_i,j}(\ylatentcat^i,\lfoutcat^j)\mid \lfoutcat=\lfoutcat^{*}\right)\right)\right]
   \\
    &+\mathbb{E}_{\lfoutcat^{*}\sim \pi_{\Param^{*}}}\left[\left(\sum_{1\le i,j\le\nundesiredy}\Var_{(\y, \ylatentcat, \lfoutcat) \sim \pi_{\Param}} \left(\dep^{\labelrelation}_{\hy_i, \hy_j} (\ylatentcat^{i}, \ylatentcat^{j})\mid \lfoutcat=\lfoutcat^{*}\right)\right)\right]
    \\
     &+\left.\mathbb{E}_{\lfoutcat^{*}\sim \pi_{\Param^{*}}}\left[\left(\sum_{i=1}^{\Ny}\sum_{j=1}^{\nundesiredy}\Var_{(\y, \ylatentcat, \lfoutcat) \sim \pi_{\Param}} \left(\dep^{\labelrelation}_{\Ylabel_i, \hy_j} (\y, \ylatentcat^{j})\mid \lfoutcat=\lfoutcat^{*}\right)\right)\right]\right)
     \\
      =& A_1+A_2+A_3+A_4,
\end{align*}

where 
\begin{align*}
    A_1=&\mathbb{E}_{\lfoutcat^{*}\sim \pi_{\Param^{*}}}\left[\left(\sum_{i=1}^{\Ny}\sum_{j=1}^{\nlf}\sum_{\hy^j_l\in\mathcal{N}(y_i,\sy_j)}\Var_{(\y, \ylatentcat, \lfoutcat) \sim \pi_{\Param}} \left(\dep^{\text{Acc}}_{\Ylabel_i,\hy^j_l,j}(\y,\lfoutcat^j)\mid \lfoutcat=\lfoutcat^{*}\right)\right)\right];
    \\
    A_2=&\mathbb{E}_{\lfoutcat^{*}\sim \pi_{\Param^{*}}}\left[\left(\sum_{j=1}^{\nlf}\sum_{\hy_l\in\sy_j}\Var_{(\y, \ylatentcat, \lfoutcat) \sim \pi_{\Param}} \left(\dep^{\text{Acc}}_{\hy_l,j}(\y,\lfoutcat^j)\mid \lfoutcat=\lfoutcat^{*}\right)\right)\right];
    \\
    A_3=&\mathbb{E}_{\lfoutcat^{*}\sim \pi_{\Param^{*}}}\left[\left(\sum_{1\le i,j\le\nundesiredy}\Var_{(\y, \ylatentcat, \lfoutcat) \sim \pi_{\Param}} \left(\dep^{\labelrelation}_{\hy_i, \hy_j} (\ylatentcat^{i}, \ylatentcat^{j})\mid \lfoutcat=\lfoutcat^{*}\right)\right)\right];
    \\
    A_4=&\mathbb{E}_{\lfoutcat^{*}\sim \pi_{\Param^{*}}}\left[\left(\sum_{i=1}^{\Ny}\sum_{j=1}^{\nundesiredy}\Var_{(\y, \ylatentcat, \lfoutcat) \sim \pi_{\Param}} \left(\dep^{\labelrelation}_{\Ylabel_i, \hy_j} (\y, \ylatentcat^{j})\mid \lfoutcat=\lfoutcat^{*}\right)\right)\right].
\end{align*}
We then bound the four terms respectively. As for $A_1$, for fixed $\lfoutcat^{*}$, we have 
\begin{align*}
    &\sum_{i=1}^{\Ny}\sum_{j=1}^{\nlf}\sum_{\hy^j_l\in\mathcal{N}(y_i,\sy_j)}\Var_{(\y, \ylatentcat, \lfoutcat) \sim \pi_{\Param}} \left(\dep^{\text{Acc}}_{\Ylabel_i,\hy^j_l,j}(\y,\lfoutcat^j)\mid \lfoutcat=\lfoutcat^{*}\right)
    \\
    =&\sum_{i=1}^{\Ny}\sum_{j=1}^{\nlf}\sum_{\hy^j_l\in\mathcal{N}(y_i,\sy_j)}\Var_{(\y, \ylatentcat, \lfoutcat) \sim \pi_{\Param}} \left(\mathbbm{1}_{\y=y_i\wedge\lfoutcat^j=\hy_l^j}\mid \lfoutcat=\lfoutcat^{*}\right)
    \\
    =&\sum_{i=1}^{\Ny}\left[\sum_{j\in[\nlf], \hy^j_l\in\mathcal{N}(y_i,\sy_j),(\lfoutcat^*)^j=\hy^j_l}\Var_{(\y, \ylatentcat, \lfoutcat) \sim \pi_{\Param}} \left(\mathbbm{1}_{\y=y_i}\mid \lfoutcat=\lfoutcat^{*}\right)\right]
    \\
    =&\sum_{i=1}^{\Ny}\left[\sum_{j\in[\nlf], \hy^j_l\in\mathcal{N}(y_i,\sy_j),(\lfoutcat^*)^j=\hy^j_l}\Var_{(\y, \ylatentcat, \lfoutcat) \sim \pi_{\Param}} \left(\mathbbm{1}_{\y=y_i}\mid \lfoutcat=\lfoutcat^{*}\right)\right]
    \\
    \le & \sum_{i=1}^{\Ny}\nlf_{i} \Var_{(\y, \ylatentcat, \lfoutcat) \sim \pi_{\Param}} \left(\mathbbm{1}_{\y=y_i}\mid \lfoutcat=\lfoutcat^{*}\right),
\end{align*}
where $\nlf_{i}$ is the number of ILFs whose label space contains label that is non-exclusive to label $\Ylabel_i$, \ie, $\nlf_{i}=|\{\ilf\in [\nlf]|\mathcal{N}(y_i,\sy_j)\neq\emptyset\}|$.
% where $(\ndd_i)_1$ is carnality of $\{j:\lfoutcat_j \text{ have elements connected to }\y\}$.

Therefore, we have
\begin{equation*}
    A_1\le \sum_{i=1}^k\nlf_{i}\mathbb{E}_{\lfoutcat^{*}\sim \pi_{\Param^{*}}}  \Var_{(\y, \ylatentcat, \lfoutcat) \sim \pi_{\Param}} \left(\mathbbm{1}_{\y=y_i}\mid \lfoutcat=\lfoutcat^{*}\right).
\end{equation*}

Similarly, for $A_2$, we have
\begin{equation*}
    A_2\le \sum_{i=1}^{\hk}m_{i}\mathbb{E}_{\lfoutcat^{*}\sim \pi_{\Param^{*}}}  \Var_{(\y, \ylatentcat, \lfoutcat) \sim \pi_{\Param}} \left(\ylatentcat^{i}\mid \lfoutcat=\lfoutcat^{*}\right),
\end{equation*}
where $m_{i}$ is the number of ILFs whose label space contains the label $\hy_i$.
% where $(\ndb_i)_1$ is carnality of $\{j:\lfoutcat_j \text{ have elements connected to }\ylatentcat^{i}\}$.
% where $n_{i}=|\{\ilf\in[\nlf]|\mathcal{N}(y_i,\sy_j)\neq\emptyset\}|$.

As for $A_3$, for fixed $\lfoutcat^{*}$ and any  $\hy_i, \hy_j\in\setseenundesired$, we further separate the proof into subcases by $\labelrelation_{\hy_i\hy_j}$ which is simplified as $\labelrelation$:

(1). $\labelrelation=\lro$. In this case,
\begin{align*}
    &\Var_{(\y, \ylatentcat, \lfoutcat) \sim \pi_{\Param}} \left(\dep^{\labelrelation}_{\hy_i, \hy_j} (\ylatentcat^{i}, \ylatentcat^{j})\mid \lfoutcat=\lfoutcat^{*}\right)
    \\
    =&\Var_{(\y, \ylatentcat, \lfoutcat) \sim \pi_{\Param}} \left( \mathbf{1}_{\ylatentcat^{i}= \ylatentcat^{j}}\mid \lfoutcat=\lfoutcat^{*}\right)
    \\
    =&\Var_{(\y, \ylatentcat, \lfoutcat) \sim \pi_{\Param}} \left( \ylatentcat^{i}  \ylatentcat^{j}\mid \lfoutcat=\lfoutcat^{*}\right)
    \\
    \overset{(*)}{\le}&\Var_{(\y, \ylatentcat, \lfoutcat) \sim \pi_{\Param}} \left( \ylatentcat^{i}\mid \lfoutcat=\lfoutcat^{*}\right)+\Var_{(\y, \ylatentcat, \lfoutcat) \sim \pi_{\Param}} \left( \ylatentcat^{j}\mid \lfoutcat=\lfoutcat^{*}\right),
\end{align*}
where Eq. $(*)$ is due to Lemma \ref{lem:var_product}.

(2). $\labelrelation=\lre$. Similarly,
\begin{align*}
    &\Var_{(\y, \ylatentcat, \lfoutcat) \sim \pi_{\Param}} \left(\dep^{\labelrelation}_{\hy_i, \hy_j} (\ylatentcat^{i}, \ylatentcat^{j})\mid \lfoutcat=\lfoutcat^{*}\right)
    \\
    =&\Var_{(\y, \ylatentcat, \lfoutcat) \sim \pi_{\Param}} \left( -\mathbf{1}_{\ylatentcat^{i}= \ylatentcat^{j}=1}\mid \lfoutcat=\lfoutcat^{*}\right)
    \\
     =&\Var_{(\y, \ylatentcat, \lfoutcat) \sim \pi_{\Param}} \left( \mathbf{1}_{\ylatentcat^{i}= \ylatentcat^{j}=1}\mid \lfoutcat=\lfoutcat^{*}\right)
    \\
    =&\Var_{(\y, \ylatentcat, \lfoutcat) \sim \pi_{\Param}} \left( \ylatentcat^{i}  \ylatentcat^{j}\mid \lfoutcat=\lfoutcat^{*}\right)
    \\
    \le&\Var_{(\y, \ylatentcat, \lfoutcat) \sim \pi_{\Param}} \left( \ylatentcat^{i}\mid \lfoutcat=\lfoutcat^{*}\right)+\Var_{(\y, \ylatentcat, \lfoutcat) \sim \pi_{\Param}} \left( \ylatentcat^{j}\mid \lfoutcat=\lfoutcat^{*}\right),
\end{align*}

(3). $\labelrelation=\lrsg$. In this case,
\begin{align*}
    &\Var_{(\y, \ylatentcat, \lfoutcat) \sim \pi_{\Param}} \left(\dep^{\labelrelation}_{\hy_i, \hy_j} (\ylatentcat^{i}, \ylatentcat^{j})\mid \lfoutcat=\lfoutcat^{*}\right)
    \\
    =&\Var_{(\y, \ylatentcat, \lfoutcat) \sim \pi_{\Param}} \left( -\mathbf{1}_{\ylatentcat^{i}=1, \ylatentcat^{j}=0}\mid \lfoutcat=\lfoutcat^{*}\right)
    \\
    =&\Var_{(\y, \ylatentcat, \lfoutcat) \sim \pi_{\Param}} \left( (1-\ylatentcat^{i}) \ylatentcat^{j}\mid \lfoutcat=\lfoutcat^{*}\right)
    \\
    \le&\Var_{(\y, \ylatentcat, \lfoutcat) \sim \pi_{\Param}} \left( 1-\ylatentcat^{i}\mid \lfoutcat=\lfoutcat^{*}\right)+\Var_{(\y, \ylatentcat, \lfoutcat) \sim \pi_{\Param}} \left( \ylatentcat^{j}\mid \lfoutcat=\lfoutcat^{*}\right)
    \\
    =&\Var_{(\y, \ylatentcat, \lfoutcat) \sim \pi_{\Param}} \left( \ylatentcat^{i}\mid \lfoutcat=\lfoutcat^{*}\right)+\Var_{(\y, \ylatentcat, \lfoutcat) \sim \pi_{\Param}} \left( \ylatentcat^{j}\mid \lfoutcat=\lfoutcat^{*}\right),
\end{align*}

(4). $\labelrelation=\lrsd$. Similar to (3)., 
\begin{align*}
    &\Var_{(\y, \ylatentcat, \lfoutcat) \sim \pi_{\Param}} \left(\dep^{\labelrelation}_{\hy_i, \hy_j} (\ylatentcat^{i}, \ylatentcat^{j})\mid \lfoutcat=\lfoutcat^{*}\right)
    \\
    =&\Var_{(\y, \ylatentcat, \lfoutcat) \sim \pi_{\Param}} \left( -\mathbf{1}_{\ylatentcat^{i}=0, \ylatentcat^{j}=1}\mid \lfoutcat=\lfoutcat^{*}\right)
    \\
    =&\Var_{(\y, \ylatentcat, \lfoutcat) \sim \pi_{\Param}} \left( (1-\ylatentcat^{j}) \ylatentcat^{i}\mid \lfoutcat=\lfoutcat^{*}\right)
    \\
    \le&\Var_{(\y, \ylatentcat, \lfoutcat) \sim \pi_{\Param}} \left( 1-\ylatentcat^{j} \mid \lfoutcat=\lfoutcat^{*}\right)+\Var_{(\y, \ylatentcat, \lfoutcat) \sim \pi_{\Param}} \left( \ylatentcat^{i}\mid \lfoutcat=\lfoutcat^{*}\right)
    \\
    =&\Var_{(\y, \ylatentcat, \lfoutcat) \sim \pi_{\Param}} \left( \ylatentcat^{i}\mid \lfoutcat=\lfoutcat^{*}\right)+\Var_{(\y, \ylatentcat, \lfoutcat) \sim \pi_{\Param}} \left( \ylatentcat^{j}\mid \lfoutcat=\lfoutcat^{*}\right),
\end{align*}

Combining (1), (2), (3), and (4), we have
\begin{equation*}
    A_3\le \sum_{i=1}^{\hk}(\nundesiredy-1)\mathbb{E}_{\lfoutcat^{*}\sim \pi_{\Param^{*}}}  \Var_{(\y, \ylatentcat, \lfoutcat) \sim \pi_{\Param}} \left(\ylatentcat^{i}\mid \lfoutcat=\lfoutcat^{*}\right),
\end{equation*}
% where $(\ndb_i)_2$ is carnality of $\{j:\ylatentcat[j] \text{ connects to }\ylatentcat^{i}\}$.

As for $A_4$, by similar discussion of $A_3$,
\begin{equation*}
    A_4\le \sum_{i=1}^{\hk}\Ny\mathbb{E}_{\lfoutcat^{*}\sim \pi_{\Param^{*}}}  \Var_{(\y, \ylatentcat, \lfoutcat) \sim \pi_{\Param}} \left(\ylatentcat^{i}\mid \lfoutcat=\lfoutcat^{*}\right)+\sum_{i=1}^{k}\nundesiredy\mathbb{E}_{\lfoutcat^{*}\sim \pi_{\Param^{*}}}  \Var_{(\y, \ylatentcat, \lfoutcat) \sim \pi_{\Param}} \left(\mathbbm{1}_{Y=y_i}\mid \lfoutcat=\lfoutcat^{*}\right).
\end{equation*}

Combining estimation of $A_1$, $A_2$, $A_3$, $A_4$, and by condition~(\ref{eq:cond4})
we have 
\begin{align*}
&\Vert A \Vert_s
\\
\le& A_1+A_2+A_3+A_4
\\
\le & \mathbb{E}_{\lfoutcat^{*}\sim \pi_{\Param^{*}}}\left[ \sum_{i=1}^{k}(\nlf_{i}+\nundesiredy)\Var_{\y, \lfoutcat}(\mathbbm{1}_{Y=y_i}|\lfoutcat=\lfoutcat^{*})+\sum_{i=1}^{\hk}(m_{i}+K-1)\Var_{\y, \lfoutcat}(\ylatentcat^{i}|\lfoutcat=\lfoutcat^{*})\right]
\\
\le &\mathbb{E}_{\lfoutcat^{*}\sim \pi_{\Param^{*}}}\left[ \sum_{i=1}^{k}(\nlf_{i}+\nundesiredy)\Var^2_{\y, \lfoutcat}(\y|\lfoutcat=\lfoutcat^{*})+\sum_{i=1}^{\hk}(m_{i}+K-1)\Var^2_{\y, \lfoutcat}(\ylatentcat^{i}|\lfoutcat=\lfoutcat^{*})\right]^{\frac{1}{2}} 
\\
\cdot &\left(\sum_{i=1}^{k}(\nlf_{i}+\nundesiredy)+\sum_{i=1}^{K}(m_{i}+K-1)\right)^{\frac{1}{2}}
\\
\le& \frac{c}{\sqrt{2M}}\cdot\sqrt{2M}\le c,
\end{align*}
which further leads to 
\begin{equation*}
     A \preceq c\mathbf{I}.
\end{equation*}

For the second term $B=\Cov_{(\y,\ylatentcat, \lfoutcat) \sim \pi_{\Param}}(\Dep(\y,\ylatentcat, \lfoutcat))$,
\begin{align*}
B &=\mathbb{E}_{(\y,\ylatentcat, \lfoutcat) \sim \pi_{\Param}}\left[(\Dep(\y,\ylatentcat, \lfoutcat)-\mathbb{E}_{(\y,\ylatentcat, \lfoutcat) \sim \pi_{\Param}}[\Dep(\y,\ylatentcat, \lfoutcat)])^{2}\right]
\\
&=\mathbb{E}_{\y,\ylatentcat,\lfoutcat \sim \pi_{\Param}}\left[\left(\Dep(\y,\ylatentcat, \lfoutcat)-\frac{\sum_{\y^{\prime},\ylatentcat',\lfoutcat^{\prime}} \Dep(\y',\ylatentcat', \lfoutcat') \exp \left(\Param^{T} \Dep(\y',\ylatentcat', \lfoutcat')\right)}{\sum_{\y^{\prime},\ylatentcat',\lfoutcat^{\prime}} \exp \left(\Param^{T} \Dep(\y',\ylatentcat', \lfoutcat')\right)}\right)^{2}\right]
\\
&=\mathbb{E}_{\y,\ylatentcat,\lfoutcat \sim \pi_{\Param}}\left[\left(\nabla_{\Param} \log \left(\exp \left(\Param^{T} \Dep(\y,\ylatentcat, \lfoutcat)\right)\right)-\nabla_{\Param} \log \left(\sum_{\y^{\prime},\ylatentcat',\lfoutcat^{\prime}} \exp \left(\Param^{T} \Dep(\y',\ylatentcat', \lfoutcat')\right)\right)\right)^{2}\right]
\\
&=\mathbb{E}_{(\y,\ylatentcat, \lfoutcat) \sim \pi_{\Param}}\left[\left(\nabla_{\Param} \log \pi_{\Param}(\y,\ylatentcat, \lfoutcat)\right)^{2}\right],
\end{align*}
where $\mathbb{E}_{(\y,\ylatentcat, \lfoutcat) \sim \pi_{\Param}}\left[\left(\nabla_{\Param} \log \pi_{\Param}(\y,\ylatentcat, \lfoutcat)\right)^{2}\right]$ is the Fisher Information of $\Param$. By the Cram{\'e}r-Rao bound and the condition (\ref{eq:cond3}),
\begin{equation*}
    \frac{I}{2 c |D|} \succeq \Cov(\hat{\Param}) \succeq \left(D\mathbb{E}_{(\y, \lfoutcat) \sim \pi_{\Param}}\left[\left(\nabla_{\Param} \log \pi_{\Param}(\y, \lfoutcat)\right)^{2}\right]\right)^{-1},
\end{equation*}
which further leads to
\begin{equation*}
    B=\mathbb{E}_{(\y,\ylatentcat, \lfoutcat) \sim \pi_{\Param}}\left[\left(\nabla_{\Param} \log \pi_{\Param}(\y,\ylatentcat, \lfoutcat)\right)^{2}\right] \succeq 2cI.
\end{equation*}

The proof is completed by putting estimation of terms $A$ and $B$ together.
\end{proof}

\lemmaExpectedLoss*

\begin{proof}

  We begin by rewriting objective of expected loss minimization problem using law of total expectation as follows:
    \begin{align*}
    \ell(\w)
    =&
    \Exv[(\x, \y) \sim \pi^*]{
      \Exvc[(\x, \y) \sim \pi^*]{
        \mathcal{H}(\y,\sigma(h(\x,\w)))
      }{
        \x 
      }
    }
    \\
    =& \Exv[(\x^{\prime}, \y^{\prime}) \sim \pi^*]{
      \Exvc[(\x, \y) \sim \pi^*]{
        \mathcal{H}(\y,\sigma(h(\x,\w)))
      }{
        \x = \x^{\prime}
      }
    }
    \\
    =& \Exv[(\x^{\prime}, \y^{\prime}) \sim \pi^*]{
      \Exvc[(\x, \y) \sim \pi^*]{
        \mathcal{H}(\y,\sigma(h(\x^\prime,\w)))
      }{
        \x = \x^{\prime}
      }
      }
  \end{align*}
  and by our conditional independence assumption (condition~(\ref{eq:cond2})), we have 
  \begin{equation*}
      \mathbb{P}(Y|X=X')= \mathbb{P}(Y|\lfoutcat(X)=\lfoutcat(X')),
  \end{equation*}
  which further leads to
 \begin{align*}
    \ell(\w)
    =&
    \Exv[(\x^{\prime}, \y^{\prime}) \sim \pi^*]{
      \Exvc[(\x, \y) \sim \pi^*]{
        \mathcal{H}(\y,\sigma(h(\x^{\prime},\w)))
      }{
        \lfoutcat(\x) = \lfoutcat(\x^{\prime})
      }
    }
    \\
    =& \Exv[(\x^{\prime}, \y^{\prime}) \sim \pi^*]{
      \Exvc[(\y, \lfoutcat) \sim \pi_{\Param^{*}}]{
        \mathcal{H}(\y,\sigma(h(\x^{\prime},\w)))
      }{
        \lfoutcat = \lfoutcat(\x^{\prime})
      }
    }
  \end{align*}

  On the other hand, if we are minimizing the model with learned parameter $\hat{\Param}$, we will be actually minimizing
  \[
    \ell_{\hat{\Param}}(\w)
    =
     \Exv[(\x^{\prime}, \y^{\prime}) \sim \pi^*]{
      \Exvc[(\y, \lfoutcat) \sim \pi_{\hat{\Param}}]{
        \mathcal{H}(\y,\sigma(h(\x^{\prime},\w)))
      }{
        \lfoutcat = \lfoutcat(\x^{\prime})
      }
    },
  \]
 where for any $X'$, $ \Exvc[(\y, \lfoutcat) \sim \pi_{\hat{\Param}}]{
        \mathcal{H}(\y,\sigma(h(\x^{\prime},\w)))
      }{
        \lfoutcat = \lfoutcat(\x^{\prime})
      }$ can be further calculated as
 \begin{align*}
     &\Exvc[(\y, \lfoutcat) \sim \pi_{\hat{\Param}}]{
        \mathcal{H}(\y,\sigma(h(\x^{\prime},\w)))
      }{
        \lfoutcat = \lfoutcat(\x^{\prime})
      }
      \\
      =&\sum_{l=1}^k \log\left(\sigma (h(\x^{\prime},\w))_l\right) \mathbb{P}_{(\y, \lfoutcat) \sim \pi_{\hat{\Param}}}(\y=y_l| \lfoutcat = \lfoutcat(\x^{\prime})).
 \end{align*}
 
 For simplification, we rewrite $\mathbb{P}_{(\y, \lfoutcat) \sim \pi_{\hat{\Param}}}(\y=y_l| \lfoutcat = \lfoutcat(\x^{\prime}))$ as follows with slight abuse of notations:
 
 \begin{align*}
    \mathbb{P}_{(\y, \lfoutcat) \sim \pi_{\hat{\Param}}}(\y=y_l| \lfoutcat = \lfoutcat(\x^{\prime}))=\mathbb{P}_{ \pi_{\hat{\Param}}}(y_l|\lfoutcat(\x^{\prime})),
 \end{align*}
 
 and similarly
  \begin{align*}
    \mathbb{E}_{(\x^{\prime}, \y^{\prime}) \sim \pi^*} = \mathbb{E}_{\pi^*},
 \end{align*}
Let 
     $l_{\x^{\prime}}\overset{\triangle}{=}\arg\min_l \log\left(\sigma (h(\x^{\prime},\w))_l\right)$.
 The difference between the loss functions will be
 \small
  \begin{align*}
      \left\vert \ell_{\hat{\Param}}(\w)-\ell(\w)\right\vert
    % =&
    %  \left\vert\mathbb{E}_{(\x^{\prime}, \y^{\prime}) \sim \pi^*} \left[\sum_{l=1}^k \log\left(\sigma (h(\x^{\prime},\w))_l\right) \mathbb{P}_{(\y, \lfoutcat) \sim \pi_{\Param^{*}}}(\y=y_l| \lfoutcat = \lfoutcat(\x))\right.\right.
    %  \\
    %  -&\left.\left.\sum_{l=1}^k \log\left(\sigma (h(\x^{\prime},\w))_l\right) \mathbb{P}_{(\y, \lfoutcat) \sim \pi_{\hat{\Param}}}(\y=y_l| \lfoutcat = \lfoutcat(\x))\right]\right\vert
    %  \\
      =&
     \left\vert\mathbb{E}_{ \pi^*} \left[\sum_{l=1}^k \log\left(\sigma (h(\x^{\prime},\w))_l\right) \left(\mathbb{P}_{ \pi_{\Param^{*}}}(y_l| \lfoutcat({\x^{\prime}}))
     - \mathbb{P}_{ \pi_{\hat{\Param}}}(y_l| \lfoutcat({\x^{\prime}}))\right)\right]\right\vert
     \\
      =&
       \left\vert\mathbb{E}_{ \pi^*} \left[ \log\left(\sigma (h(\x^{\prime},\w))_{l_{\x^{\prime}}}\right) \left(\mathbb{P}_{ \pi_{\Param^{*}}}(y_{l_{{\x^{\prime}}}}| \lfoutcat({\x^{\prime}}))-\mathbb{P}_{ \pi_{\hat{\Param}}}(y_{l_{{\x^{\prime}}}}| \lfoutcat({\x^{\prime}}))\right)\right]\right.
     \\
     +&\left.\mathbb{E}_{ \pi^*} \left[\sum_{l\ne l_{\x^{\prime}}} \log\left(\sigma (h(\x^{\prime},\w))_l\right) \left(\mathbb{P}_{ \pi_{\Param^{*}}}(y_l| \lfoutcat({\x^{\prime}}))- \mathbb{P}_{ \pi_{\hat{\Param}}}(y_l| \lfoutcat({\x^{\prime}}))\right)\right]\right\vert.
     \end{align*}
\normalsize
     Furthermore,
     \begin{align}
     \nonumber
     &\left\vert\mathbb{E}_{ \pi^*} \left[ \log\left(\sigma (h(\x^{\prime},\w))_{l_{\x^{\prime}}}\right) \left(\mathbb{P}_{ \pi_{\Param^{*}}}(y_{l_{{\x^{\prime}}}}| \lfoutcat({\x^{\prime}}))-\mathbb{P}_{ \pi_{\hat{\Param}}}(y_{l_{{\x^{\prime}}}}| \lfoutcat({\x^{\prime}}))\right)\right]\right.
     \\
     \nonumber
     +&\left.\mathbb{E}_{ \pi^*} \left[\sum_{l\ne l_{\x^{\prime}}} \log\left(\sigma (h(\x^{\prime},\w))_l\right) \left(\mathbb{P}_{ \pi_{\Param^{*}}}(y_{l}| \lfoutcat({\x^{\prime}}))-\mathbb{P}_{ \pi_{\hat{\Param}}}(y_{l}| \lfoutcat({\x^{\prime}}))\right)\right]\right\vert
     \\
     \nonumber
      =&\left\vert \mathbb{E}_{ \pi^*} \left[ \log\left(\sigma (h(\x^{\prime},\w))_{l_{\x^{\prime}}}\right) \left(-\sum_{l\ne l_{\x^{\prime}}}\mathbb{P}_{ \pi_{\Param^{*}}}(y_l| \lfoutcat({\x^{\prime}}))+\sum_{j\ne l_{\x^{\prime}}}\mathbb{P}_{ \pi_{\hat{\Param}}}(y_l| \lfoutcat({\x^{\prime}}))\right)\right]\right.
     \\
     \nonumber
     +&\left.\mathbb{E}_{ \pi^*} \left[\sum_{l\ne l_{\x^{\prime}}} \log\left(\sigma (h(\x^{\prime},\w))_l\right) \left(\mathbb{P}_{ \pi_{\Param^{*}}}(y_{l}| \lfoutcat({\x^{\prime}}))\right.
     -\left. \mathbb{P}_{ \pi_{\hat{\Param}}}(y_{l}| \lfoutcat({\x^{\prime}}))\right)\right]\right\vert
           \\
           \nonumber
      =&\left\vert\mathbb{E}_{ \pi^*} \left[\sum_{l\ne l_{\x^{\prime}}} \left(\log\left(\sigma (h(\x^{\prime},\w))_l\right)-\log\left(\sigma (h(\x^{\prime},\w))_{l_{\x^{\prime}}}\right)\right) \left(\mathbb{P}_{ \pi_{\Param^{*}}}(y_{l}| \lfoutcat({\x^{\prime}}))-\mathbb{P}_{ \pi_{\hat{\Param}}}(y_{l}| \lfoutcat({\x^{\prime}}))\right)\right]\right\vert
     \\
     \label{eq:direct_bound_l_w}
     =&\left\vert\mathbb{E}_{ \pi^*} \left[\sum_{l\ne l_{\x^{\prime}}} \left( h(\x^{\prime},\w)_l- h(\x^{\prime},\w)_{l_{\x^{\prime}}}\right) \left(\mathbb{P}_{ \pi_{\Param^{*}}}(y_{l}| \lfoutcat({\x^{\prime}})).-\mathbb{P}_{ \pi_{\hat{\Param}}}(y_{l}| \lfoutcat({\x^{\prime}}))\right)\right]\right\vert.
  \end{align}
  
   Let 
 \[\bar{h}(l_1, l_2) = h(\x^{\prime},\w)_{l_1}- h(\x^{\prime},\w)_{l_2}.\]
 By Eq. (\ref{eq:cond5}), we have for any $l\in[k]$, 
   \[0\le\bar{h}(l, l_{\x^{\prime}}) \le 2H.\]
  For any fixed $\x'$, define $g_{\x'} (\Theta)$ as follows:
  \begin{equation}
  \label{eq:mean_value_g}
      g_{\x'} (\Theta)=\sum_{l\ne l_{\x^{\prime}}} \bar{h}(l, l_{\x^{\prime}}) \mathbb{P}_{ \pi_{\Param}}(y_{l}| \lfoutcat({\x^{\prime}})),
  \end{equation}
  based on which we have 
  \begin{equation*}
       \left\vert \ell_{\hat{\Param}}(\w)-\ell(\w)\right\vert\le \left\vert\mathbb{E}_{\pi^*}  \left(g_{\x'} (\hat{\Theta})-g_{\x'} (\Theta^{*})\right)\right\vert 
  \end{equation*}

By First Mean Value Theorem, 
  \begin{equation*}
      g_{\x'} (\hat{\Theta})-g_{\x} (\Theta^{*})= \langle \nabla g_{\x'} (\xi), \hat{\Theta}-\Theta^{*}\rangle \le \Vert\hat{\Theta}-\Theta^{*}\Vert \Vert \nabla g_{\x'} (\xi)\Vert.
  \end{equation*}
  
  We then bound  $\nabla g_{\x} (\xi)$ element-wisely:

 (1). For any  $i\in [k], \ilf\in[\nlf], \hy^j_l\in\mathcal{N}(y_i,\sy_j)$, if $i = l_{\x^{\prime}}$, $\lfoutcat^{j}(X')=\hy^{j}_{l}$,
 \begin{align*}
     \left\vert \frac{\partial g_{\x'} (\xi)}{\partial \param^\acc_{y_{i}, \hy^{j}_{l}, \ilf}}\right\vert=&\left\vert\sum_{l\ne l_{\x^{\prime}}} \bar{h}(l, l_{\x^{\prime}}) \frac{\partial  \mathbb{P}_{ \pi_{\xi}}(y_{l}| \lfoutcat({\x^{\prime}}))}{\partial \param^\acc_{y_{i}, \hy^{j}_{l}, \ilf}}\right\vert
     \\
     =&\left\vert-\sum_{l\ne l_{\x^{\prime}}} \bar{h}(l, l_{\x^{\prime}}) \mathbb{P}_{ \pi_{\xi}}(y_{i}| \lfoutcat({\x^{\prime}}))\mathbb{P}_{ \pi_{\xi}}(y_{l}| \lfoutcat({\x^{\prime}}))\right\vert
     \\
     =&\sum_{l\ne l_{\x^{\prime}}} \bar{h}(l, l_{\x^{\prime}}) \mathbb{P}_{ \pi_{\xi}}(y_{i}| \lfoutcat({\x^{\prime}}))\mathbb{P}_{ \pi_{\xi}}(y_{l}| \lfoutcat({\x^{\prime}}))
     \\
     \le& 2H\mathbb{P}_{ \pi_{\xi}}(y_{i}| \lfoutcat({\x^{\prime}}))(1-\mathbb{P}_{ \pi_{\xi}}(y_{i}| \lfoutcat({\x^{\prime}})))
     \\
     =&2H \Var\left[\mathbbm{1}_{Y=y_i}| \lfoutcat({\x^{\prime}})\right].
 \end{align*}
 If $i\ne l_{\x^{\prime}}$, $\lfoutcat^{j}(X')=\hy^{j}_{l}$,
  \begin{align*}
     \left\vert \frac{\partial g_{\x'} (\xi)}{\partial \param^\acc_{y_{i}, \hy^{j}_{l}, \ilf}}\right\vert=&\left\vert\sum_{l\ne l_{\x^{\prime}}} \bar{h}(l, l_{\x^{\prime}}) \frac{\partial  \mathbb{P}_{ \pi_{\xi}}(y_{l}| \lfoutcat({\x^{\prime}}))}{\partial \param^\acc_{y_{i}, \hy^{j}_{l}, \ilf}}\right\vert
     \\
     =&\left\vert-\sum_{l\notin\{i,l_{\x^{\prime}}\} } \bar{h}(l, l_{\x^{\prime}}) \mathbb{P}_{ \pi_{\xi}}(y_{i}| \lfoutcat({\x^{\prime}}))\mathbb{P}_{ \pi_{\xi}}(y_{l}| \lfoutcat({\x^{\prime}}))\right.
     \\
     +&\left. \bar{h}(i, l_{\x^{\prime}}) \left(\mathbb{P}_{ \pi_{\xi}}(y_{i}| \lfoutcat({\x^{\prime}}))-\mathbb{P}_{ \pi_{\xi}}(y_{i}| \lfoutcat({\x^{\prime}}))\mathbb{P}_{ \pi_{\xi}}(y_{i}| \lfoutcat({\x^{\prime}})) \right)\right\vert
     \\
     \le&\max\left\{\sum_{l\notin\{i,l_{\x^{\prime}}\} } \bar{h}(l, l_{\x^{\prime}}) \mathbb{P}_{ \pi_{\xi}}(y_{i}| \lfoutcat({\x^{\prime}}))\mathbb{P}_{ \pi_{\xi}}(y_{l}| \lfoutcat({\x^{\prime}})),\right. 
     \\
     \quad &\left.\bar{h}(i, l_{\x^{\prime}}) \left(\mathbb{P}_{ \pi_{\xi}}(y_{i}| \lfoutcat({\x^{\prime}}))-\mathbb{P}_{ \pi_{\xi}}(y_{i}| \lfoutcat({\x^{\prime}}))\mathbb{P}_{ \pi_{\xi}}(y_{i}| \lfoutcat({\x^{\prime}})) \right)\right \}
     \\
     \le& 2H \mathbb{P}_{ \pi_{\xi}}(y_{i}| \lfoutcat({\x^{\prime}}))(1-\mathbb{P}_{ \pi_{\xi}}(y_{i}| \lfoutcat({\x^{\prime}})))
     \\
     =&2H \Var\left[\mathbbm{1}_{Y=y_i}| \lfoutcat({\x^{\prime}})\right].
 \end{align*}
 
 If $\lfoutcat^{j}(X')\ne\hy^{j}_{l}$,
 \begin{align*}
     \left\vert \frac{\partial g_{\x'} (\xi)}{\partial \param^\acc_{\Ylabel_i, \hy^{j}_{l}, \ilf}}\right\vert
     =&\left\vert\sum_{l\ne l_{\x^{\prime}}} \bar{h}(l, l_{\x^{\prime}}) \frac{\partial  \mathbb{P}_{ \pi_{\xi}}(y_{l}| \lfoutcat({\x^{\prime}}))}{\partial \param^\acc_{\Ylabel_i, \hy^{j}_{l}, \ilf}}\right\vert
    = 0.
 \end{align*}
 
 (2). For $\ilf\in[\nlf], \hy_{r}\in\sy_j$, if $ \lfoutcat^{j}(X')=\hy_{r}$,
  \begin{align*}
     \left\vert \frac{\partial g_{\x'} (\xi)}{\partial \param^\acc_{\hy_{r}, \ilf}}\right\vert=&\left\vert\sum_{l\ne l_{\x^{\prime}}} \bar{h}(l, l_{\x^{\prime}}) \frac{\partial  \mathbb{P}_{ \pi_{\xi}}(y_{l}| \lfoutcat({\x^{\prime}}))}{\partial \param^\acc_{\hy_{r}, \ilf}}\right\vert
     \\
     =&\left\vert\sum_{l\ne l_{\x^{\prime}}} \bar{h}(l, l_{\x^{\prime}})\left( \mathbb{P}_{ \pi_{\xi}}(\y=y_{l},\ylatentcat^{r}=1| \lfoutcat({\x^{\prime}}))-\mathbb{P}_{ \pi_{\xi}}(\y=y_{l}| \lfoutcat({\x^{\prime}}))\mathbb{P}_{ \pi_{\xi}}(\ylatentcat^{r}=1| \lfoutcat({\x^{\prime}}))\right)\right\vert.
     \end{align*}
Let 
\begin{align*}
    f_{1}(l) &= \mathbb{P}_{ \pi_{\xi}}(\y=y_{l},\ylatentcat^{r}=1| \lfoutcat({\x^{\prime}})) \\
    f_{2}(l)  &= \mathbb{P}_{ \pi_{\xi}}(\y=y_{l}| \lfoutcat({\x^{\prime}}))\mathbb{P}_{ \pi_{\xi}}(\ylatentcat^{r}=1| \lfoutcat({\x^{\prime}})),
\end{align*}
and 
\begin{align*}
    \mathcal{B}^1&=\{l:f_1(l)\ge f_2(l), l\ne l_{\x^{\prime}} \}, \\
    \mathcal{B}^2&=\{l:f_1(l)< f_2(l), l\ne l_{\x^{\prime}} \}.
\end{align*}
% \[\mathcal{B}^1=\{l:f_1(l)\ge f_2(l), l\ne l_{\x^{\prime}} \},\] 
% and 
% \[\mathcal{B}^2=\{l:f_1(l)< f_2(l), l\ne l_{\x^{\prime}} \}.\] 
Therefore, 
 \begin{align*}
     &\left\vert\sum_{l\ne l_{\x^{\prime}}} \bar{h}(l, l_{\x^{\prime}})\left( \mathbb{P}_{ \pi_{\xi}}(\y=y_{l},\ylatentcat^{r}=1| \lfoutcat({\x^{\prime}}))-\mathbb{P}_{ \pi_{\xi}}(\y=y_{l}| \lfoutcat({\x^{\prime}}))\mathbb{P}_{ \pi_{\xi}}(\ylatentcat^{r}=1| \lfoutcat({\x^{\prime}}))\right)\right\vert
     \\
     =&\left\vert\sum_{l\ne l_{\x^{\prime}}} \bar{h}(l, l_{\x^{\prime}})\left(f_1(l)-f_2(l)\right)\right\vert
     \\
     = &\left\vert\sum_{l\in \mathcal{B}^1} \bar{h}(l, l_{\x^{\prime}})\left(f_1(l)-f_2(l)\right)\right.
     +\left.\sum_{l\in \mathcal{B}^2} \bar{h}(l, l_{\x^{\prime}}) \left(f_1(l)-f_2(l)\right)\right\vert
      \\
      \le & \max_{t=1,2}\left\vert\sum_{l\in \mathcal{B}^\mathsf{T}} \bar{h}(l, l_{\x^{\prime}})\left(f_1(l)-f_2(l)\right)\right\vert
     \\
     =&\max\left\{\sum_{l\in \mathcal{B}^1} \bar{h}(l, l_{\x^{\prime}})\left(f_1(l)-f_2(l)\right),\sum_{l\in \mathcal{B}^2} \bar{h}(l, l_{\x^{\prime}})\left(f_2(l)-f_1(l)\right)\right\}.
 \end{align*}
 On the other hand, 

 \begin{align*}
    &\sum_{l\in \mathcal{B}^1} \bar{h}(l, l_{\x^{\prime}})\left(f_1(l)-f_2(l)\right)
     \\
     =&\sum_{l\in \mathcal{B}^1} \bar{h}(l, l_{\x^{\prime}})\left( \mathbb{P}_{ \pi_{\xi}}(\y=y_{l},\ylatentcat^{r}=1| \lfoutcat({\x^{\prime}}))\right.-\left.\mathbb{P}_{ \pi_{\xi}}(\y=y_{l}| \lfoutcat({\x^{\prime}}))\mathbb{P}_{ \pi_{\xi}}(\ylatentcat^{r}=1| \lfoutcat({\x^{\prime}}))\right)
     \\
     \le&2H\sum_{l\in \mathcal{B}^1}\left( \mathbb{P}_{ \pi_{\xi}}(\y=y_{l},\ylatentcat^{r}=1| \lfoutcat({\x^{\prime}}))\right.-\left.\mathbb{P}_{ \pi_{\xi}}(\y=y_{l}| \lfoutcat({\x^{\prime}}))\mathbb{P}_{ \pi_{\xi}}(\ylatentcat^{r}=1| \lfoutcat({\x^{\prime}}))\right)
     \\
     =&2H\left( \mathbb{P}_{ \pi_{\xi}}(\y=y_{l},\exists l\in\mathcal{B}^1,\ylatentcat^{r}=1| \lfoutcat({\x^{\prime}}))\right.-\left.\mathbb{P}_{ \pi_{\xi}}(\y=y_{l},\exists l\in\mathcal{B}^1| \lfoutcat({\x^{\prime}}))\mathbb{P}_{ \pi_{\xi}}(\ylatentcat^{r}=1| \lfoutcat({\x^{\prime}}))\right)
     \\
     =&2H\left( \mathbb{P}_{ \pi_{\xi}}(\y=y_{l},\exists l\in\mathcal{B}^1,\ylatentcat^{r}=1| \lfoutcat({\x^{\prime}}))\mathbb{P}_{ \pi_{\xi}}(\ylatentcat^{r}=0| \lfoutcat({\x^{\prime}}))\right.
     \\
     &-\left.\mathbb{P}_{ \pi_{\xi}}(\y=y_{l},\exists l\in\mathcal{B}^1,\ylatentcat^{r}=0| \lfoutcat({\x^{\prime}}))\mathbb{P}_{ \pi_{\xi}}(\ylatentcat^{r}=1| \lfoutcat({\x^{\prime}}))\right)
     \\
     \le&2H\left( \mathbb{P}_{ \pi_{\xi}}(\ylatentcat^{r}=1| \lfoutcat({\x^{\prime}}))\mathbb{P}_{ \pi_{\xi}}(\ylatentcat^{r}=0| \lfoutcat({\x^{\prime}}))\right)
     \\
     =&2H\Var\left[\ylatentcat^{r}|\lfoutcat({\x^{\prime}})\right].
 \end{align*}

 Similarly, we have 
 \small
 \begin{align*}
    &\sum_{l\in \mathcal{B}^1} \bar{h}(l, l_{\x^{\prime}})\left(f_2(l)-f_1(l)\right)-\sum_{l\in \mathcal{B}^2} \bar{h}(l, l_{\x^{\prime}})\left( \mathbb{P}_{ \pi_{\xi}}(\y=y_{l},\ylatentcat^{r}=1| \lfoutcat({\x^{\prime}}))\right.+\left.\mathbb{P}_{ \pi_{\xi}}(\y=y_{l}| \lfoutcat({\x^{\prime}}))\mathbb{P}_{ \pi_{\xi}}(\ylatentcat^{r}=1| \lfoutcat({\x^{\prime}}))\right)
     \\
     \le&2H\Var\left[\ylatentcat^{r}|\lfoutcat({\x^{\prime}})\right].
 \end{align*}
 \normalsize
 
 Conclusively, we have 
 \begin{align*}
     \left\vert \frac{\partial g_{\x'} (\xi)}{\partial \param^\acc_{\hy_{r}, \ilf}}\right\vert\le 2H\Var\left[\ylatentcat^{r}|\lfoutcat({\x^{\prime}})\right].
 \end{align*}
 
 If $ \lfoutcat^{j}=\hy_{r}$, similar to (1),  we have 
 \begin{align*}
     \left\vert \frac{\partial g_{\x'} (\xi)}{\partial \param^\acc_{\hy_{r}, \ilf}}\right\vert=0.
 \end{align*}
 
 (3). For any $\hy_{i}, \hy_{j} \in \setseenundesired$, by the definition of $\dep^{\labelrelation}_{\hy_{i}, \hy_{j}}$, there exists $(a,b)\in\{0,1\}^2$, such that $\dep^{\labelrelation}_{\hy_{i}, \hy_{j}} (a,b)\ne 0$. Similar to $(2)$, let 
\begin{align*}
    f_3(l)&=\mathbb{P}_{ \pi_{\xi}}(\y=y_{l},\ylatentcat^{i}=a,\ylatentcat^{j}=b| \lfoutcat({\x^{\prime}}))\\
    f_4(l)&=\mathbb{P}_{ \pi_{\xi}}(\y=y_{l}| \lfoutcat({\x^{\prime}}))\mathbb{P}_{ \pi_{\xi}}(\ylatentcat^{i}=a, \ylatentcat^{j}=b| \lfoutcat({\x^{\prime}}))
\end{align*}
and
\begin{align*}
    \mathcal{B}^3&=\{l:f_3(l)\ge f_4(l),  l\ne l_{\x^{\prime}} \},\\
    \mathcal{B}^4&=\{l:f_3(l)< f_4(l),  l\ne l_{\x^{\prime}} \}
\end{align*}
%  \tiny
%  \[\mathcal{B}^3=\{l:\mathbb{P}_{ \pi_{\xi}}(\y=y_{d},\ylatentcat^{a}=\gamma_a,\ylatentcat^{b}=\gamma_b| \lfoutcat({\x^{\prime}}))\ge \mathbb{P}_{ \pi_{\xi}}(\y=y_{l}| \lfoutcat({\x^{\prime}}))\mathbb{P}_{ \pi_{\xi}}(\ylatentcat^{a}=\gamma_a, \ylatentcat^{b}=\gamma_b| \lfoutcat({\x^{\prime}})), l\ne l_{\x^{\prime}} \},\] 
%  \normalsize
%  and 
%   \tiny
%  \[\mathcal{B}^4=\{l:\mathbb{P}_{ \pi_{\xi}}(\y=y_{d},\ylatentcat^{a}=\gamma_a,\ylatentcat^{b}=\gamma_b| \lfoutcat({\x^{\prime}}))< \mathbb{P}_{ \pi_{\xi}}(\y=y_{l}| \lfoutcat({\x^{\prime}}))\mathbb{P}_{ \pi_{\xi}}(\ylatentcat^{a}=\gamma_a, \ylatentcat^{b}=\gamma_b| \lfoutcat({\x^{\prime}})), l\ne l_{\x^{\prime}} \},\]
%   \normalsize
 we have
 \begin{align*}
   \left\vert \frac{\partial g_{\x'}(\xi)}{\partial \param^{\labelrelation}_{\hy_{i}, \hy_{j}}}\right\vert=&\max\left\{\sum_{l\in \mathcal{B}^3} \bar{h}(l, l_{\x^{\prime}})\left(f_3(l)-f_4(l)\right),\sum_{l\in \mathcal{B}^4} \bar{h}(l, l_{\x^{\prime}})\left(f_4(l)-f_3(l)\right)\right\}
     \\
     \le& 2H  \Var\left[\dep^{\labelrelation}_{\hy_{i}, \hy_{j}} (\ylatentcat^{i}, \ylatentcat^{j})|\lfoutcat({\x^{\prime}})\right]
     \\
     \overset{(*)}{\le}& 2H \left(\Var \left[\ylatentcat^{i}|\lfoutcat({\x^{\prime}})\right]+ \Var \left[\ylatentcat^{j}|\lfoutcat({\x^{\prime}})\right]\right),
 \end{align*}
 where inequality $(*)$ comes from Lemma \ref{lem:var_product}.
 
 (4). For any $\Ylabel_{i}\in\sy, \hy_r\in \setseenundesired$, by the definition of  $\dep^{\labelrelation}_{\Ylabel_{i}, \hy_r}$, there exists $a\in\{0,1\}$, $y_j\in \sy$, s.t., $\dep^{\labelrelation}_{\Ylabel_{i}, \hy_r}(y_j,a)\ne 0$. We further divide the proof into two cases: $\dep^{\labelrelation}_{\Ylabel_{i}, \hy_r}(y_i,a)= 0$, and $\dep^{\labelrelation}_{\Ylabel_{i}, \hy_r}(y_i,a)\ne 0$.
 
 (4a). If $\dep^{\labelrelation}_{\Ylabel_{i}, \hy_r}(y_i,a)= 0$, we have $\labelrelation_{y_i \hat{y}_r}=\lrsg$ and consequently $a=1$. Similar to (1-3)., we have
 \begin{align*}
     \left\vert \frac{\partial g_{\x'} (\xi)}{\partial \param^{\labelrelation}_{\Ylabel_{i}, \hy_r}}\right\vert=& \left\vert\sum_{l\ne l_{\x^{\prime}}} \bar{h}(l, l_{\x^{\prime}}) \frac{\partial  \mathbb{P}_{ \pi_{\xi}}(y_{l}| \lfoutcat({\x^{\prime}}))}{\partial \param^{\labelrelation}_{\Ylabel_{i}, \hy_r}}\right\vert
     \overset{(\bullet)}{=} \left\vert\sum_{l=1}^k \bar{h}(l, l_{\x^{\prime}}) \frac{\partial  \mathbb{P}_{ \pi_{\xi}}(y_{l}| \lfoutcat({\x^{\prime}}))}{\partial \param^{\labelrelation}_{\Ylabel_{i}, \hy_r}}\right\vert
     \\
     =&\left\vert\sum_{l\ne i} \bar{h}(l, l_{\x^{\prime}}) \left(  \mathbb{P}_{ \pi_{\xi}}(\y=y_{l},\ylatentcat^{r}=1| \lfoutcat({\x^{\prime}}))-\mathbb{P}_{ \pi_{\xi}}(\y=y_{l}| \lfoutcat({\x^{\prime}}))\mathbb{P}_{ \pi_{\xi}}(\y\ne y_{i},\ylatentcat^{r}=1| \lfoutcat({\x^{\prime}}))\right)\right.
     \\
     -&\left.\bar{h}(i, l_{\x^{\prime}})\mathbb{P}_{ \pi_{\xi}}(\y=y_{i}| \lfoutcat({\x^{\prime}}))\mathbb{P}_{ \pi_{\xi}}(\y\ne y_{i},\ylatentcat^{r}=1| \lfoutcat({\x^{\prime}}))\right\vert,
 \end{align*}
 where Eq. $(\bullet)$ is due to 
 Let 
 \begin{align*}
    f_5(l)&=\mathbb{P}_{ \pi_{\xi}}(\y=y_{l},\ylatentcat^{r}=1| \lfoutcat({\x^{\prime}}))\\
    f_6(l)&=\mathbb{P}_{ \pi_{\xi}}(\y=y_{l}| \lfoutcat({\x^{\prime}}))\mathbb{P}_{ \pi_{\xi}}(\y\ne y_{i},\ylatentcat^{r}=1| \lfoutcat({\x^{\prime}}))
\end{align*}
and
\begin{align*}
    \mathcal{B}^5&=\{l:f_5(l)\ge f_6(l), \quad l\ne i \},\\
    \mathcal{B}^6&=\{l:f_5(l)< f_6(l), \quad l\ne i \}
\end{align*}
 
%  $\mathcal{B}^5=\{l:\mathbb{P}_{ \pi_{\xi}}(\y=y_{l},\ylatentcat^{l}=1| \lfoutcat({\x^{\prime}}))
%      \ge \mathbb{P}_{ \pi_{\xi}}(\y=y_{l}| \lfoutcat({\x^{\prime}}))\mathbb{P}_{ \pi_{\xi}}(\y\ne y_{i},\ylatentcat^{l}=1| \lfoutcat({\x^{\prime}})),l\ne i\}$, and $\mathcal{B}^6=\{l:\mathbb{P}_{ \pi_{\xi}}(\y=y_{l},\ylatentcat^{l}=1| \lfoutcat({\x^{\prime}}))
%      < \mathbb{P}_{ \pi_{\xi}}(\y=y_{l}| \lfoutcat({\x^{\prime}}))\mathbb{P}_{ \pi_{\xi}}(\y\ne y_{i},\ylatentcat^{l}=1| \lfoutcat({\x^{\prime}})),l\ne i\}$.
     
     Then we have 
      \begin{align*}
     \left\vert \frac{\partial g_{\x'} (\xi)}{\partial \param^{\labelrelation}_{\Ylabel_i, \hy_r}}\right\vert
     \le \max\left\{\sum_{l\in \mathcal{B}^5} \bar{h}(l, l_{\x^{\prime}}) \left( f_5(l)-f_6(l)\right), \sum_{l\in \mathcal{B}^6} \bar{h}(l, l_{\x^{\prime}}) \left( f_6(l)-f_5(l)\right) +\bar{h}(i, l_{\x^{\prime}})f_6(i)\right\}.
    \end{align*}
    
    On one hand, 
    \begin{align*}
    &\sum_{l\in \mathcal{B}^5} \bar{h}(l, l_{\x^{\prime}}) \left( f_5(l)-f_6(l)\right)
    \\
    = &\sum_{l\in \mathcal{B}^5} \bar{h}(l, l_{\x^{\prime}}) \left(  \mathbb{P}_{ \pi_{\xi}}(\y=y_{l},\ylatentcat^{r}=1| \lfoutcat({\x^{\prime}}))\right.-\left.\mathbb{P}_{ \pi_{\xi}}(\y=y_{l}| \lfoutcat({\x^{\prime}}))\mathbb{P}_{ \pi_{\xi}}(\y\ne y_{i},\ylatentcat^{r}=1| \lfoutcat({\x^{\prime}}))\right)
     \\
     \le&\sum_{l\in \mathcal{B}^5} 2H \left(  \mathbb{P}_{ \pi_{\xi}}(\y=y_{l},\ylatentcat^{r}=1| \lfoutcat({\x^{\prime}}))\right.-\left.\mathbb{P}_{ \pi_{\xi}}(\y=y_{l}| \lfoutcat({\x^{\prime}}))\mathbb{P}_{ \pi_{\xi}}(\y\ne y_{i},\ylatentcat^{r}=1| \lfoutcat({\x^{\prime}}))\right)
     \\
     =&2H \left(  \mathbb{P}_{ \pi_{\xi}}(\y=y_{l},\exists l\in \mathcal{B}^5,\ylatentcat^{r}=1| \lfoutcat({\x^{\prime}}))\right.-\left.\mathbb{P}_{ \pi_{\xi}}(\y=y_{l},\exists l\in \mathcal{B}^5| \lfoutcat({\x^{\prime}}))\mathbb{P}_{ \pi_{\xi}}(\y\ne y_{i},\ylatentcat^{r}=1| \lfoutcat({\x^{\prime}}))\right)
     \\
     =&2H \left(  \mathbb{P}_{ \pi_{\xi}}(\y=y_{l},\exists l\in \mathcal{B}^5,\ylatentcat^{r}=1| \lfoutcat({\x^{\prime}}))(1-\mathbb{P}_{ \pi_{\xi}}(\y\ne y_{i},\ylatentcat^{r}=1| \lfoutcat({\x^{\prime}})))\right.
     \\
     &-\left.\mathbb{P}_{ \pi_{\xi}}(\y=y_{l},\exists l\in \mathcal{B}^5,\ylatentcat^{r}=0| \lfoutcat({\x^{\prime}}))\mathbb{P}_{ \pi_{\xi}}(\y\ne y_{i},\ylatentcat^{r}=1| \lfoutcat({\x^{\prime}}))\right)
     \\
     \le& 2H \mathbb{P}_{ \pi_{\xi}}(\y\ne y_{i},\ylatentcat^{r}=1| \lfoutcat({\x^{\prime}}))(1-\mathbb{P}_{ \pi_{\xi}}(\y\ne y_{i},\ylatentcat^{r}=1| \lfoutcat({\x^{\prime}})))
     \\
     =&2H\Var_{ \pi_{\xi}} \left[\dep^{\labelrelation}_{\Ylabel_i, \hy_l} (\y, \ylatentcat^{r})| \lfoutcat({\x^{\prime}})\right]
     \\
     \le & 2H\Var_{ \pi_{\xi}} \left[\mathbbm{1}_{Y=y_i}| \lfoutcat({\x^{\prime}})\right]+2H\Var_{ \pi_{\xi}} \left[\ylatentcat^{r}| \lfoutcat({\x^{\prime}})\right].
    \end{align*}
    
    On the other hand, 
    \begin{align*}
        & \sum_{l\in \mathcal{B}^6} \bar{h}(l, l_{\x^{\prime}}) \left( f_6(l)-f_5(l)\right) +\bar{h}(i, l_{\x^{\prime}})f_6(i)
         \\
         =&- \sum_{l\in \mathcal{B}^6} \bar{h}(l, l_{\x^{\prime}}) \left(  \mathbb{P}_{ \pi_{\xi}}(\y=y_{l},\ylatentcat^{r}=1| \lfoutcat({\x^{\prime}}))\right.+\left.\mathbb{P}_{ \pi_{\xi}}(\y=y_{l}| \lfoutcat({\x^{\prime}}))\mathbb{P}_{ \pi_{\xi}}(\y\ne y_{i},\ylatentcat^{r}=1| \lfoutcat({\x^{\prime}}))\right)
     \\
     &+ \bar{h}(i, l_{\x^{\prime}})\mathbb{P}_{ \pi_{\xi}}(\y=y_{i}| \lfoutcat({\x^{\prime}}))\mathbb{P}_{ \pi_{\xi}}(\y\ne y_{i},\ylatentcat^{r}=1| \lfoutcat({\x^{\prime}}))
     \\
     \le &2H \left(  -\mathbb{P}_{ \pi_{\xi}}(\y=y_{l},\exists l\in \mathcal{B}^6,\ylatentcat^{r}=1| \lfoutcat({\x^{\prime}}))(1-\mathbb{P}_{ \pi_{\xi}}(\y\ne y_{i},\ylatentcat^{r}=1| \lfoutcat({\x^{\prime}})))\right.
     \\
     &-\mathbb{P}_{ \pi_{\xi}}(\y=y_{l},\exists l\in \mathcal{B}^6,\ylatentcat^{r}=0| \lfoutcat({\x^{\prime}}))\mathbb{P}_{ \pi_{\xi}}(\y\ne y_{i},\ylatentcat^{r}=1| \lfoutcat({\x^{\prime}}))
     \\
     &+\left.\mathbb{P}_{ \pi_{\xi}}(\y=y_{i}| \lfoutcat({\x^{\prime}}))\mathbb{P}_{ \pi_{\xi}}(\y\ne y_{i},\ylatentcat^{r}=1| \lfoutcat({\x^{\prime}})) \right)
     \\
     \le &2H \left( \mathbb{P}_{ \pi_{\xi}}(\y=y_{l},\exists l\in \mathcal{B}^6,\ylatentcat^{r}=0| \lfoutcat({\x^{\prime}}))\mathbb{P}_{ \pi_{\xi}}(\y\ne y_{i},\ylatentcat^{r}=1| \lfoutcat({\x^{\prime}}))\right.
     \\
     &+\left.\mathbb{P}_{ \pi_{\xi}}(\y=y_{i}| \lfoutcat({\x^{\prime}}))\mathbb{P}_{ \pi_{\xi}}(\y\ne y_{i},\ylatentcat^{r}=1| \lfoutcat({\x^{\prime}})) \right)
     \\
     \le &2H \left( \mathbb{P}_{ \pi_{\xi}}(\ylatentcat^{r}=0| \lfoutcat({\x^{\prime}}))\mathbb{P}_{ \pi_{\xi}}(\ylatentcat^{r}=1| \lfoutcat({\x^{\prime}}))\right.+\left.\mathbb{P}_{ \pi_{\xi}}(\y=y_{i}| \lfoutcat({\x^{\prime}}))\mathbb{P}_{ \pi_{\xi}}(\y\ne y_{i},\ylatentcat^{r}=1| \lfoutcat({\x^{\prime}})) \right)
     \\
      \le & 2H\Var_{ \pi_{\xi}} \left[\mathbbm{1}_{Y=y_i}| \lfoutcat({\x^{\prime}})\right]+2H\Var_{ \pi_{\xi}} \left[\ylatentcat^{r}| \lfoutcat({\x^{\prime}})\right].
    \end{align*}

    Therefore, in this case, we have
    \begin{equation*}
        \left\vert \frac{\partial g_{\x'} (\xi)}{\partial \param^{\labelrelation}_{\Ylabel_i, \hy_r}}\right\vert\le 2H\Var_{ \pi_{\xi}} \left[\mathbbm{1}_{Y=y_i}| \lfoutcat({\x^{\prime}})\right]+2H\Var_{ \pi_{\xi}} \left[\ylatentcat^{r}| \lfoutcat({\x^{\prime}})\right].
    \end{equation*}
    
    (4b). If $\dep^{\labelrelation}_{\Ylabel_{i}, \hy_r}(y_i,a)\ne 0$, similar to $(4a).$, we have 
    \begin{align*}
     \left\vert \frac{\partial g_{\x'} (\xi)}{\partial \param^{\labelrelation}_{\Ylabel_i, \hy_{r}}}\right\vert
     =&\left\vert-\sum_{l\ne i} \bar{h}(l, l_{\x^{\prime}}) \mathbb{P}_{ \pi_{\xi}}(\y=y_{l}| \lfoutcat({\x^{\prime}}))\mathbb{P}_{ \pi_{\xi}}(\y=y_{i},\ylatentcat^{r}=1| \lfoutcat({\x^{\prime}}))\right.
     \\
     &+\left.\bar{h}(i, l_{\x^{\prime}})\left(\mathbb{P}_{ \pi_{\xi}}(\y=y_{i},\ylatentcat^{l}=1| \lfoutcat({\x^{\prime}}))-\mathbb{P}_{ \pi_{\xi}}(\y=y_{i}| \lfoutcat({\x^{\prime}}))\mathbb{P}_{ \pi_{\xi}}(\y=y_{i},\ylatentcat^{r}=1| \lfoutcat({\x^{\prime}}))\right)\right\vert.
 \end{align*}
 Since 
 \begin{align*}
      &\bar{h}(i, l_{\x^{\prime}})\left(\mathbb{P}_{ \pi_{\xi}}(\y=y_{i},\ylatentcat^{r}=1| \lfoutcat({\x^{\prime}}))-\mathbb{P}_{ \pi_{\xi}}(\y=y_{i}| \lfoutcat({\x^{\prime}}))\mathbb{P}_{ \pi_{\xi}}(\y=y_{i},\ylatentcat^{r}=1| \lfoutcat({\x^{\prime}}))\right)
     \\
     =&\bar{h}(i, l_{\x^{\prime}})\mathbb{P}_{ \pi_{\xi}}(\y=y_{i},\ylatentcat^{r}=1| \lfoutcat({\x^{\prime}}))\mathbb{P}_{ \pi_{\xi}}(\y\ne y_{i}| \lfoutcat({\x^{\prime}}))
     \\
     \ge&0,
 \end{align*}
 we have
 \begin{align*}
     \left\vert \frac{\partial g_{\x'} (\xi)}{\partial \param^{\labelrelation}_{\Ylabel_{i}, \hy_{l}}}\right\vert
     \le &\max\left\{\bar{h}(i, l_{\x^{\prime}})\mathbb{P}_{ \pi_{\xi}}(\y=y_{i},\ylatentcat^{r}=1| \lfoutcat({\x^{\prime}}))\mathbb{P}_{ \pi_{\xi}}(\y\ne y_{i}| \lfoutcat({\x^{\prime}}))\right.,
     \\
     \quad & \left.\sum_{l\ne i} \bar{h}(l, l_{\x^{\prime}}) \mathbb{P}_{ \pi_{\xi}}(\y=y_{l}| \lfoutcat({\x^{\prime}}))\mathbb{P}_{ \pi_{\xi}}(\y=y_{i},\ylatentcat^{r}=1| \lfoutcat({\x^{\prime}})) \right\}
     \\
     \le & 2H \Var\left[\mathbbm{1}_{Y=y_i}| \lfoutcat({\x^{\prime}})\right].
 \end{align*}
 
 Combining (4a). and (4b)., we have that 
 \begin{align*}
      \left\vert \frac{\partial g_{\x'} (\xi)}{\partial \param^{\labelrelation}_{\Ylabel_{i}, \hy_{l}}}\right\vert\le 2H\left(\Var_{ \pi_{\xi}} \left[\mathbbm{1}_{Y=y_i}| \lfoutcat({\x^{\prime}})\right]+\Var_{ \pi_{\xi}} \left[\ylatentcat^{l}| \lfoutcat({\x^{\prime}})\right]\right).
 \end{align*}

 Combining (1-4)., we then have
 \begin{align}
 \nonumber
     &\Vert\nabla g_{\x'}(\xi)\Vert^2
     \\
     \le& 4H^2 \sum_{i=1}^k\sum_{j=1}^n\left(\left\vert\mathcal{N}(y_{i},\sy_{j})\right\vert-1\right)\Var_{ \pi_{\xi}} \left[\mathbbm{1}_{Y=y_i}| \lfoutcat({\x^{\prime}})\right]^2
     \\
     \nonumber
    & +4H^2  \sum_{j\in[n],\hy_r \in \sy_j}\Var_{ \pi_{\xi}} \left[\ylatentcat^{r}| \lfoutcat({\x^{\prime}})\right]^2 
    \\
    \nonumber
    & +4H^2 \sum_{i,j \in [\hk]}\left(\Var_{ \pi_{\xi}} \left[\ylatentcat^{i}| \lfoutcat({\x^{\prime}})\right]+\Var_{ \pi_{\xi}} \left[\ylatentcat^{j}| \lfoutcat({\x^{\prime}})\right]\right)^2
    \\
    \nonumber
    & + 4H^2\sum_{i\in[k], j\in [\hk]}\left(\Var_{ \pi_{\xi}} \left[\mathbbm{1}_{Y=y_i}| \lfoutcat({\x^{\prime}})\right]+\Var_{ \pi_{\xi}} \left[\ylatentcat^{l}| \lfoutcat({\x^{\prime}})\right]\right)^2
    \\
    \label{eq: bound_gradient}
    \le&  8H^2\left( \sum_{i=1}^{\Ny}(\nlf_{i}+\nundesiredy)\Var_{ \pi_{\xi}}(\mathbbm{1}_{Y=y_i}|\lfoutcat=\lfoutcat^{*})^2+\sum_{i=1}^{\hk}(m_{i}+ K-1)\Var_{ \pi_{\xi}}(\ylatentcat^{i}|\lfoutcat=\lfoutcat^{*})^2\right)
    .
 \end{align}
  Therefore, by Eqs. (\ref{eq:direct_bound_l_w}), (\ref{eq:mean_value_g}), and (\ref{eq: bound_gradient}), and Assumption Eq. (\ref{eq:cond4}), we have
 \begin{align*}
      \vert \ell(\w)
      -
      \ell_{\hat{\Param}}(\w) \vert
        =& \left\vert\mathbb{E}_{ \pi^*} \left[\sum_{l\ne l_{\x^{\prime}}} \bar{h}(l, l_{\x^{\prime}}) \left(\mathbb{P}_{ \pi_{\Param^{*}}}(\y=y_{l}| \lfoutcat({\x^{\prime}}))- \mathbb{P}_{ \pi_{\hat{\Param}}}(\y=y_{l}| \lfoutcat({\x^{\prime}}))\right)\right]\right\vert
   \\
   =&\left\vert\mathbb{E}_{ \pi^*}\left[g_{\x'} (\Param^{*})-g_{\x'} (\hat{\Param})\right]\right\vert
     \\
     \le &\left\vert\mathbb{E}_{ \pi^*}\Vert \nabla g_{\x'} (\xi) \Vert\Vert \Param^{*}-\hat{\Param}\Vert \right\vert
     \\
     \le&\frac{2cH}{\sqrt{M}} \Vert \Param^{*}-\hat{\Param}\Vert.
 \end{align*}
  
  Now, we apply the assumption that we are able to solve the empirical problem,
  producing an estimate $\hat{\w}$ that satisfies
  \[
    \Exv{\ell_{\hat{\Param}}(\hat{\w}) - \ell_{\hat{\Param}}(\w^*_{\Param})}
    \le
    \chi,
  \]
  where $\w^*_{\hat{\Param}}$ is the true solution to
  \[
    \w^*_{\hat{\Param}}
    =
    \arg \min_{\w}
    \ell_{\Param}(\w).
  \]
  Therefore,
  \begin{align*}
    \Exv{
     \ell(\hat{\w})
      -
      \ell(\w^{*})
    }
    &=
    \Exv{
      \ell_{\hat{\Param}}(\hat {\w})
      -
      \ell_{\hat{\Param}}(\w^*_{\hat{\Param}})
      +
      \ell_{\hat{\Param}}(\w^*_{\hat{\Param}})
      -
      \ell_{\hat{\Param}}(\hat{\w})
      +
      \ell(\hat{\w})
      -
      \ell(\w^*)
    }
    \\ &\overset{(*)}{\le}
    \chi
    +
    \Exv{
      \ell_{\hat{\Param}}(\w^*_{\hat{\Param}})
      -
      \ell_{\hat{\Param}}(\hat{\w})
      +
      \ell(\hat{\w})
      -
      \ell(\w^*)
    }
    \\ &\le
   \chi 
    +4c H \frac{1}{\sqrt{M}}\mathbb{E}\Vert \hat{\Param}-\Param^{*}\Vert
    +
    \Exv{
      \ell_{\hat{\Param}}(\w^*_{\hat{\Param}})
      -
      \ell_{\hat{\Param}}(\hat{\w})
      +
      \ell_{\hat{\Param}}(\hat{\w})
      -
      \ell_{\hat{\Param}}(\w^*)
    }
    \\ &\le
    \chi
    +4c H \frac{1}{\sqrt{M}}\mathbb{E}\Vert \hat{\Param}-\Param^{*}\Vert,
  \end{align*}
  where Eq. $(*)$ comes from condition~(\ref{eq:cond6}).
  
  With Eqs. (\ref{eq:cond3}) and (\ref{eq:cond4}), we have Eq. (\ref{eq: estimation_gene_theta}) by Lemma  \ref{thm:strong_concavity}, i.e.,
  \begin{equation*}
     \left(\mathbb{E}\Vert \hat{\Param}-\Param^{*}\Vert\right)^2\le  \mathbb{E}\Vert \hat{\Param}-\Param^{*}\Vert^2 \le \varepsilon^2M.
  \end{equation*}
  
  We can now bound this using the result of Lemma \ref{lemmaExpectedLoss},
  which results in
  \begin{align*}
    \Exv{
      \ell(\hat{\w})
      -
      \ell(\w^{*})
    }
    &\le
    \chi
    +
    4cH\epsilon.
  \end{align*}
The proof is completed.

\end{proof}

\section{Examples and Illustrations}
\label{sec:illustration}

% \subsection{Label Relations and Set Relations}
% Fig~\ref{fig:set2labelgraph} shows the mappings between label relations and set relations.
% We can see that the relationship of any ordered pair of sets must fall into one of the four types of relations, \ie, exclusive, overlap, subsuming or subsumed, which confirms our choices of label relations.

%   \begin{figure}[H]
% \begin{center}
% \centering
% \includegraphics[width=.8\columnwidth]{figs/set2lg.pdf}
% \caption{The mapping between label relations and set relations.}
% \label{fig:set2labelgraph}
% \end{center}
% \end{figure}

\subsection{Label Graph and Label Hierarchy}
\label{app:dag_to_labelgraph}
Fig~\ref{fig:hier2labelgraph} shows the mapping between a label hierarchy and the corresponding label graph. Indeed, given the order of labels, any label structure represented as a (directed acyclic graph) DAG can be converted to exact one consistent label graph based on the four types of label relations.

\begin{figure}[H]
\begin{center}
\centering
% \centerline{\includegraphics[width=.9\columnwidth]{figs/set2lg.png}}
\includegraphics[width=.7\columnwidth]{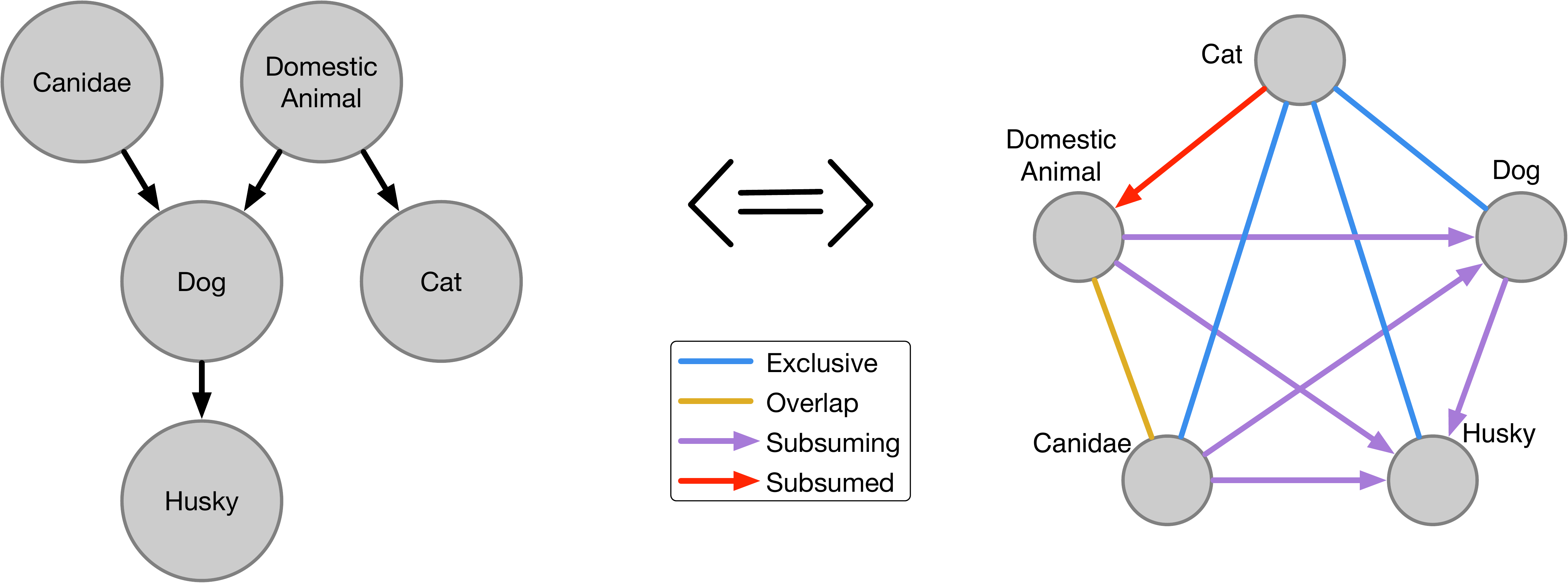}
\caption{The illustration of mapping between a DAG of labels and a label graph.}
\label{fig:hier2labelgraph}
\end{center}
\end{figure}

\subsection{An Example of Inconsistent Label Graph}

Fig.~\ref{fig:inconsistent} shows an example of an inconsistent label graph. We can see that the label graph is unrealistic and ambiguous because \mquote{Husky} subsumes \mquote{Canidae}, but (1) \mquote{Canidae} subsumes \mquote{Dog} and (2) \mquote{Dog} subsuems \mquote{Husky} combined imply that \mquote{Husky} should be subsumed by \mquote{Canidae}. Also, from the example, we can see that label graph induced from cyclic label hierarchy must be inconsistent.
\begin{figure}[H]
\begin{center}
\centering
% \centerline{\includegraphics[width=.9\columnwidth]{figs/set2lg.png}}
\includegraphics[width=.55\columnwidth]{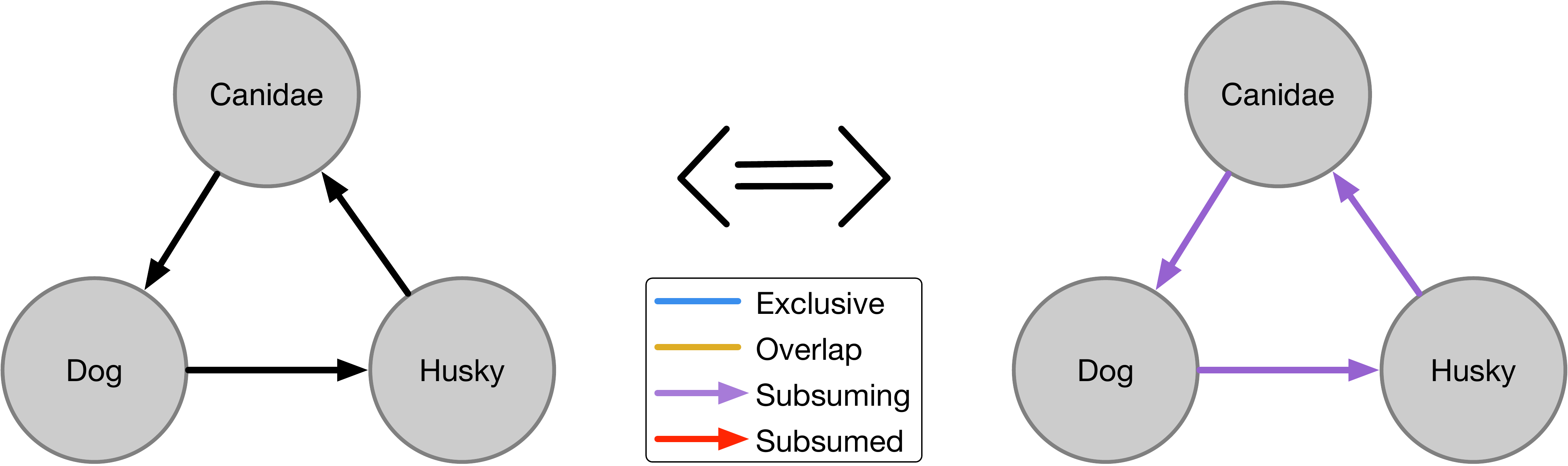}
\caption{The illustration of inconsistent label graph.}
\label{fig:inconsistent}
\end{center}
\end{figure}

\subsection{Enumeration of Inconsistent Triangle Label Graph}
\label{appendix:constraints}

For a triangle label graph $\labelgraph$, we list all inconsistent label relation structures. The consistency of larger label graph with more labels can be verified by checking the consistency of every triangle inside. One example proof of \{Exclusive, Overlap, Subsuming\} can be found in Lemma~\ref{le:constraints}.

\begin{table}[H]
	\centering
	\caption{Enumeration of Inconsistent Label Relation Triplets.}
	\scalebox{1.0}{
        \begin{tabular}{c|c|c}
        		\toprule
             	\multicolumn{3}{c}{\textbf{label relation Triplets}}\\
		\midrule
		$\labelrelation_{ab}$ & $\labelrelation_{bc}$ & $\labelrelation_{ac}$ \\
		\midrule
	    Overlap & Subsumed & Subsuming \\
	    Overlap & Subsumed & Exclusive \\
	    Overlap & Subsuming & Subsumed \\
	    Overlap & Exclusive & Subsumed \\
	    
	    Exclusive & Subsumed & Subsuming \\
	    Exclusive & Overlap & Subsuming \\
	    Exclusive & Subsuming & Subsuming \\
	    Exclusive & Subsuming & Subsumed \\
	    Exclusive & Subsuming & Overlap \\

	    Subsuming & Exclusive & Subsumed \\
	    Subsuming & Subsumed & Exclusive \\
	    Subsuming & Overlap & Subsumed \\
	    Subsuming & Overlap & Exclusive \\
	    Subsuming & Subsuming & Exclusive \\
	    Subsuming & Subsuming & Subsumed \\
	    Subsuming & Subsuming & Overlap \\

	    Subsumed & Overlap & Subsuming \\
	    Subsumed & Subsumed & Exclusive \\
	    Subsumed & Subsumed & Subsuming \\
	    Subsumed & Subsumed & Overlap \\
	    Subsumed & Exclusive & Subsuming \\
	    Subsumed & Exclusive & Subsumed \\
	    Subsumed & Exclusive & Overlap \\
		\midrule
		\bottomrule
         \end{tabular}
    }
    \label{tab:constraints}
\end{table} 

% \subsection{An Example of WIS}

% As in Fig~\ref{fig:example}, 

% \begin{figure}[H]
% \begin{center}
% \centerline{\includegraphics[width=0.9\columnwidth]{figs/example.pdf}}
% \caption{Illustration of the Weak Indirect Supervision (WIS).}
% \label{fig:example}
% \end{center}
% \end{figure}

\subsection{An Example of Indistinguishable Label Graph}
Fig.~\ref{fig:dis} shows an example label graph with indistinguishable label relation structure. 
Again, red labels represent desired unseen labels, while gray labels are undesired and seen.
We can see that unseen label \mquote{Husky} and \mquote{Bulldog} have indistinguishable label relation structures because for all seen labels, their label relations are equal.
For example, seen label \mquote{Dog} subsumes both \mquote{Husky} and \mquote{Bulldog}.
In contrast, for \mquote{Husky} and \mquote{Bengal Cat}, seen label \mquote{Cat} subsumes the latter but exclusive to the former, which indicates that \mquote{Husky} and \mquote{Bengal Cat} have distinguishable label relation structure.
Note that \mquote{Bengal Cat} and \mquote{Persian Cat} also have indistinguishable label relation structure, but the former is unseen desired label while the latter is seen and can be predicted by some ILF(s).
We are only interested in the distinguishablity of a pair of unseen labels.

In practice, users could "break the symmetry" by adding new ILFs with new labels.
For example, if we add an ILF that could predict \mquote{Arctic Animals}, then the new seen label \mquote{Arctic Animals} will be added into label graph as shown in Fig.~\ref{fig:dis_w_arctic_animals}. 
We know that \mquote{Arctic Animals} subsumes \mquote{Husky} but not \mquote{Bulldog}, so we break the indistinguishable label relation structure of \mquote{Husky} and \mquote{Bulldog} successfully.

\begin{figure}[H]
\begin{center}
\centering
\includegraphics[width=.7\columnwidth]{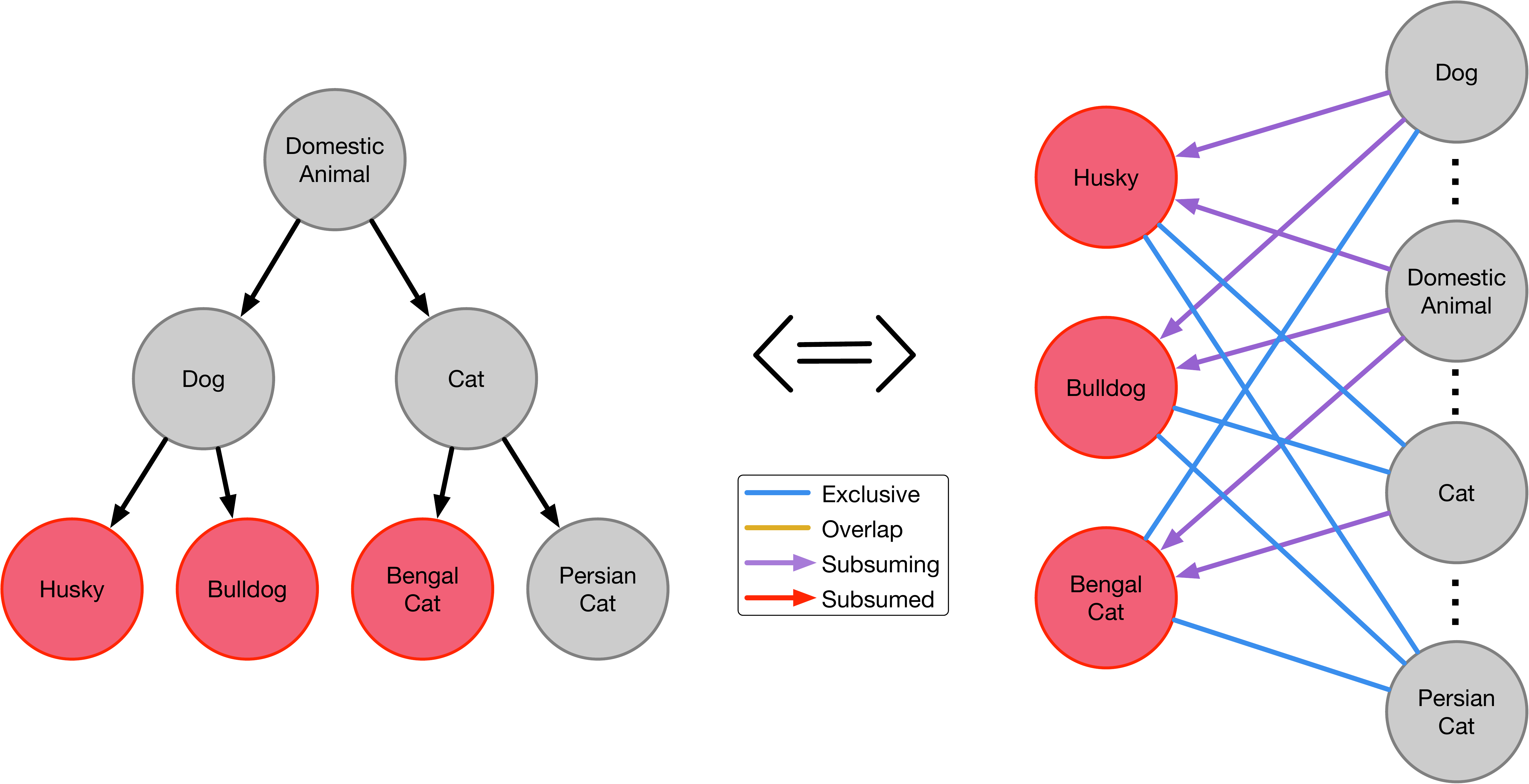}
\caption{An example of an indistinguishable label relation structure (\mquote{Husky} and \mquote{Bulldog}).}
\label{fig:dis}
\end{center}
\end{figure}

\begin{figure}[H]
\begin{center}
\centering
\includegraphics[width=.7\columnwidth]{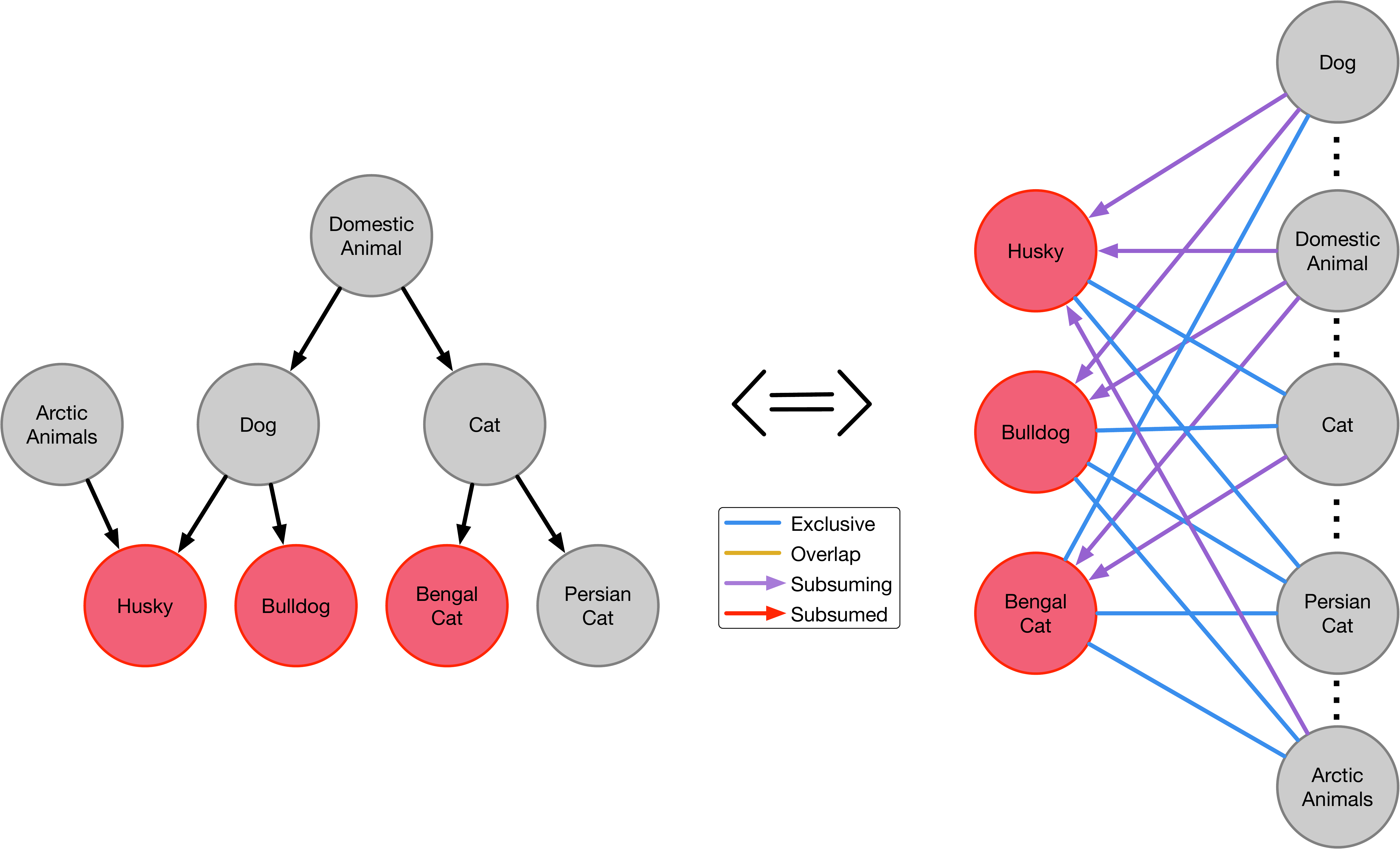}
\caption{An example of fixing an indistinguishable label relation structure (\mquote{Husky} and \mquote{Bulldog}) by adding a new label (\mquote{Arctic Animals}).}
\label{fig:dis_w_arctic_animals}
\end{center}
\end{figure}

\section{Experimental Details}
  \label{appendix:exp_details}
  
\subsection{Dataset}
\label{sec:real_data}

\noindent \textbf{Large scale Text Classification Dataset\footnote{\url{http://lshtc.iit.demokritos.gr/}}:}
LSHTC-3 \citep{LSHTC}, a large scale hierarchical text classification dataset, which consists of 456,886 documents and 36,504 categories organized in a label hierarchy.
We filter out the documents with multiple labels, and preserve categories with more than 500 documents.
We use a pre-trained sentence transformer \citep{reimers-2019-sentence-bert} to obtain document embeddings for classification. 
We follow \citet{zhang2021wrench} to generate 5 keyword-based labeling functions for each seen label as ILFs.

\noindent \textbf{Large scale Image Classification Dataset\footnote{\url{http://image-net.org/challenges/LSVRC/2012/index\#data}}:}
ILSVRC2012 \citep{LSVRC12}, a large scale image classification dataset, which consists of 1.2M training images from 1000 object classes based on ImageNet. 
Following \citet{deng2014large} we use WordNet as the label hierarchy, and because all the images are assigned to leave labels in WordNet, for each non-leave label, we aggregate images belonging to its descendants as its data points \citep{deng2014large}.
For weak supervision sources creation, we follow \citet{pmlr-v130-mazzetto21a, pmlr-v139-mazzetto21a} to train 10 image classifiers as ILFs.
We randomly sampling 2 or 3 exclusive seen labels from the label graph as well as 500 images for each label to train a ResNet-32 classifier.

\subsection{Description of Applying DAP}
\label{sec:zsl-attr}

To apply DAP, we use both label relations and ILFs to construct attributes for both unseen classes and unlabeled data points.
Then, we train the attribute classifiers, which in turn are used to predict unseen labels on the test set as in \citet{lampert2013attribute}.
To construct attributes for unseen labels and data points, we leverage the outputs of ILFs and label relations.

First, based on the label relations and basic logistic rules, we enumerate all the possible assignments of seen labels given a data point.
For example, if label $A$ is subsumed by label $B$, then for a data point, when it belongs to label $A$, it must also belong to $B$;
And if label $A$ and $B$ are exclusive, then one data cannot belong to both at the same time.
Let $s\in S$ denote one possible label assignment and $S$ is the set of all possible $s$.
Then we define the attribute as a vector of $|S|$ dimension where each dimension corresponds to one $s$.

Second, we define the attribute of unseen labels. For an unseen label $A$ and a label assignment $s$, if $A$ is not exclusive to any label in $s$ then we set the corresponding attribute $a_{s}=1$ for label $A$, other wise 0.
The intuition is that, if $A$ is not exclusive to labels in $s$, it's likely that when a data belongs to assignment $s$, it also belongs to label $A$.
For each data point, we use the labels assigned by ILFs to build their attributes. 
If a data belongs to assignment $s$ then its corresponding attribute $a_s=1$, otherwise 0.

Then, we can train attribute classifier $p(a|x)$ for each attribute based on data point attributes.
During inference, we use unseen label attribute as well as attribute classifier as in \citet{lampert2013attribute}:

\begin{equation}
    f(x) = \argmax_{c} \prod_{m=1}^{|S|} \frac{p(a^{c}_m|x)}{p(a_{m}^c|x)}
\end{equation}

\subsection{Hyper-parameters}
\label{sec:details}
For the training of PGMs, we set the learning rate to be $\frac{1}{n}$ where $n$ is the number of training data.
For training logistic regression model, we use the default parameters in scikit-learn library.
For training ResNet model, we set batch size as 256 and use Adam optimizer with learning rate being 1e-3 and weight decay being 5e-5.

\subsection{Hardware and Implementation Details}
\label{sec:implementation}
All experiments ran
on a machine with an Intel(R) Xeon(R) CPU E5-2678 v3 with a 512G memory and a GeForce GTX 1080Ti-11GB GPU.

All the code was implemented in Python.
We use the standard implementation of the logistic regression model from Python scikit-learn library\footnote{\url{https://scikit-learn.org/stable/modules/generated/sklearn.linear_model.LogisticRegression.html}} and the ResNet model from torchvision library\footnote{\url{https://pytorch.org/docs/stable/torchvision/models.html}}.

Our code will be released upon the acceptance.

\subsection{Dataset Details of Real-world Applications}
\label{app:real-app}

We list the tags we used in the real-world application (Sec.~\ref{sec:product}) and examples of label relations we query from the existing product category taxonomy.

\begin{table}[t]
	\centering
	\caption{The tags and examples of label relations of \mquote{Car Accessories} category.}
	\scalebox{0.8}{
        \begin{tabular}{cccc}
        		\toprule
\textbf{new unseen tags:} & \multicolumn{3}{c}{\mquote{Performance Modifying Parts}, \mquote{Vehicle Tires \& Tire Parts}, \mquote{Car Engines \& Engine Parts}}\\
\midrule
\multirow{2}{*}{\textbf{existing tags:}} & \multicolumn{3}{c}{\mquote{Car Modification Parts}, \mquote{Car Parts \& Accessories}} \\
& \multicolumn{3}{c}{\mquote{Car \& Truck Tires}, \mquote{Replacement Car Parts}, \mquote{Car \& Truck Wheels}} \\
		\midrule
		
\multirow{2}{*}{\textbf{label relation examples:}} & \multicolumn{3}{c}{\mquote{Replacement Car Parts} \textbf{subsumes} \mquote{Car Engines \& Engine Parts}} \\ 
 & \multicolumn{3}{c}{ \mquote{Car \& Truck Tires} \textbf{is subsumed by} \mquote{Vehicle Tires \& Tire Parts}}\\
		\bottomrule
         \end{tabular}
    }
    \label{tab:example-car}
\end{table} 

\begin{table}[t]
	\centering
	\caption{The tags and examples of label relations of \mquote{Furniture Accessories} category.}
	\scalebox{0.8}{
        \begin{tabular}{cccc}
        		\toprule
\textbf{new unseen tags:} & \multicolumn{3}{c}{\mquote{Clothing \& Shoe Storage}, \mquote{Living Room Furniture}, \mquote{Beds \& Headboards}}\\
\midrule
\multirow{2}{*}{\textbf{existing tags:}} & \multicolumn{3}{c}{\mquote{Coffee Tables \& End Tables}, \mquote{Entertainment \& Media Centers}} \\
& \multicolumn{3}{c}{\mquote{Bedroom Furniture}, \mquote{Sofas \& Chairs}, \mquote{Mattresses}} \\
		\midrule
		
\multirow{2}{*}{\textbf{label relation examples:}} & \multicolumn{3}{c}{\mquote{Bedroom Furniture} \textbf{subsumes} \mquote{Beds \& Headboards}} \\ 
 & \multicolumn{3}{c}{ \mquote{Sofas \& Chairs} \textbf{is subsumed by} \mquote{Living Room Furniture}}\\
		\bottomrule
         \end{tabular}
    }
    \label{tab:example-furniture}
\end{table} 

\section{Additional Experiments}
\label{app:add_exp}

\subsection{Performance Drop When the Distinguishable Condition is Violated}
\label{app:violated}

To validate the effectiveness of the distinguishable condition, we drive another 100 WIS tasks from LSHTC-3 dataset where each task has at least one pair of unseen labels sharing exactly the same label relation structure.
In Table~\ref{tab:drop}, we report the performance drop on the averaged evaluation results over the 100 WIS tasks with comparison to the numbers in Table~\ref{tab:overall}.
Although the two sets of WIS tasks are different and therefore are not individually comparable, the averaged performance drop does indicates that the violation of the distinguishable condition results in undesirable synthesized training labels, which implicitly demonstrates the effectiveness of the distinguishable condition.

\begin{table}[h]
	\centering
	\caption{Performance drop on averaged evaluation results over 100 WIS tasks derived from LSHTC-3 when the distinguishable condition is violated.}
	\scalebox{0.85}{
        \begin{tabular}{cccc}
        		\toprule
             \multicolumn{2}{c}{\textbf{Method}}	& \textbf{Accuracy} & \textbf{F1-score} \\
		\midrule

		\multirow{4}{*}{Label Model} & LR-MV       & -11.49 & -13.83 \\
	&	W-LR-MV     & -11.51 & -13.47 \\
		
	&	\typeone    & -9.28 & -8.63 \\
		\cmidrule{2-4}
	&	\combined & -9.66 & -9.63 \\\midrule
	
		\multirow{4}{*}{End Model} & LR-MV       & -16.14 & -17.08 \\
	&	W-LR-MV     & -15.27 & -15.97  \\
		
	&	\typeone    & -13.13 & -13.78 \\
		\cmidrule{2-4}
	&	\combined & -13.39 & -14.09\\
		
		\bottomrule
         \end{tabular}
    }
    \label{tab:drop}
\end{table}

\end{appendix}
\end{document}